\newtheorem{theorem}{Theorem}
\newtheorem{lemma}{Lemma}
\newtheorem{corollary}{Corollary}
\theoremstyle{definition}
\newtheorem{definition}{Definition}
\theoremstyle{remark}
\newtheorem{remark}[theorem]{Remark}
\newcommand{\wt}{\widetilde}
\newcommand{\x}{\mathbf{x}}
\newcommand{\z}{\mathbf{z}}
\newcommand{\calX}{\mathcal{X}}
\newcommand{\calI}{\mathcal{I}}
\newcommand{\calS}{\mathcal{S}}
\renewcommand{\b}{\mathsf{b}}
\newcommand{\kth}{k^{\mathrm{th}}}
\newcommand{\astu}{\mathsf{ast}}
\newcommand{\emprob}{\mathsf{ER}}
\newcommand{\defenderscore}{{defscore}\xspace}
\newcommand{\ropt}{f_{\mathsf{ropt}}}
\DeclareMathOperator*{\argmax}{argmax}
\DeclareMathOperator*{\argmin}{argmin}
\newcommand{\calT}{\mathcal{T}}
\renewcommand{\b}{\mathbf{b}}
\newcommand{\R}{\mathbb{R}}
\newcommand{\labels}{[C]}
\newcommand{\poly}{\mathrm{poly}}
\newcommand{\spruned}{\calS^{\mathsf{AP}}}
\newcommand{\bfu}{\mathbf{u}}
\begin{document}

\runningtitle{Robustness for Non-Parametric Classification: A Generic Attack and Defense}

\twocolumn[

\aistatstitle{Robustness for Non-Parametric Classification: \\ A Generic Attack and Defense}

\aistatsauthor{Yao-Yuan Yang* \And Cyrus Rashtchian* \And Yizhen Wang \And Kamalika Chaudhuri}

\aistatsaddress{
  \{yay005, crashtchian, yiw248, kamalika\}@eng.ucsd.edu \\
  University of California, San Diego, Computer Science \& Engineering } ]

\begin{abstract}
Adversarially robust machine learning has received much recent attention.
However, prior attacks and defenses for non-parametric classifiers have been developed in an ad-hoc or classifier-specific basis. 
In this work, we take a holistic look at adversarial examples for non-parametric classifiers, including nearest neighbors, decision trees, and random forests.
We provide a general defense method, adversarial pruning, that works by preprocessing the dataset to become well-separated. To test our defense, we provide a novel attack that applies to a wide  range of non-parametric classifiers. 
Theoretically, we derive an optimally robust classifier, which is analogous to the Bayes Optimal. We show that adversarial pruning can be viewed as a finite sample approximation to this optimal classifier. We empirically show that our defense and attack are either better than or competitive with prior work on non-parametric classifiers.
Overall, our results provide a strong and broadly-applicable baseline for future work on robust non-parametrics\footnote{Code available at \url{https://github.com/yangarbiter/adversarial-nonparametrics/}}.
\end{abstract}

\section{Introduction}
    
    State-of-the-art classifiers have been shown to suffer from substantial drops in accuracy when faced with adversarially modified inputs even if the modifications are imperceptibly slight. Due to the security concerns that this raises, a body of recent research has investigated the construction and prevention of adversarial examples -- small perturbations of valid inputs that cause misclassification~\citep{Carlini:EECS-2018-118, szegedy2013intriguing}. Most previous work has looked at parametric methods, i.e., neural networks and linear classifiers~\citep{biggio2013evasion,MeekLowd2005,madry2017, papernot2015limitations}, and there is a mature understanding of what properties can be exploited to design adversarial attacks and defenses for {any} parametric model. For example, parametric classifiers are based on continuous functions with gradients, which has been used to design gradient-based attacks~\citep{athalye2018obfuscated, carliniwagner}. Likewise, parametric models are mostly trained by minimizing a training loss, which has been exploited to build an effective and generic defense -- adversarial training, retraining after data augmentation with adversarial examples~\citep{carlini2019evaluating, madry2017, song2019improving}. 
    
    An alternative statistical paradigm is that of non-parametric methods, such as nearest neighbor, decision tree, and random forest classifiers, which typically apply to dense data in lower dimensional spaces. These are {{local predictors}},  whose output depends on labeled points close to an input. Surprisingly, these methods behave very differently from parametrics when it comes to adversarial examples. In many cases, they have no gradients, and adversarial examples for parametric models fail to transfer~\citep{papernot2016transferability}.  Generic defenses, such as adversarial training, appear to be ineffective as well~\citep{dubey2019defense, papernot2018deep, WJC18}. 
    
    While prior work has constructed attacks and defenses for some specific classifiers~\citep{chen2019robust,dubey2019defense,kantchelian,sitawarin2019robustness,WJC18}, there appear to be no generic approaches, and no generic principles that can be used to guide the design of attacks and defenses for variety of non-parametric methods.

\begin{figure*}[th]
	\centering
	\subfloat{
		\includegraphics[width=.31\textwidth]{./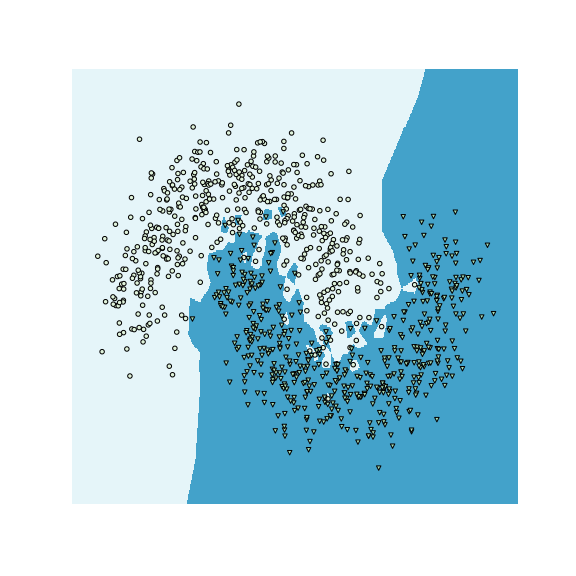}}
	\subfloat{
		\includegraphics[width=.31\textwidth]{./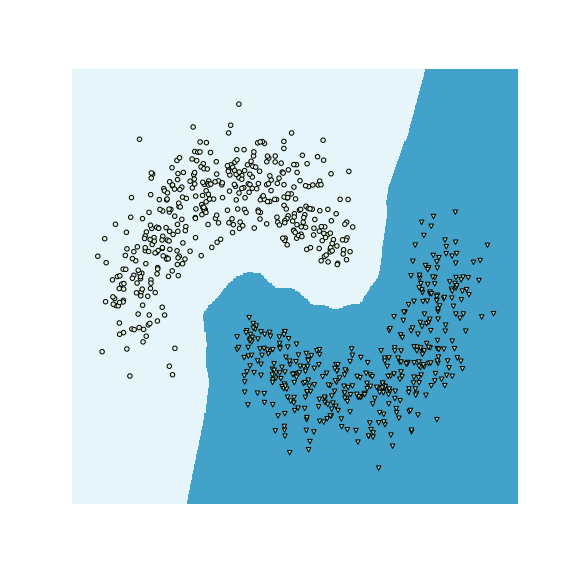}}
	\caption{Normal vs. Defended $1$-Nearest Neighbor.}
	\label{fig:ap-pics}
\end{figure*}

In this work, we identify two key general principles, and use them to design a generic defense and an attack that apply to a variety of non-parametric methods. 

To design defenses, we ask: when do non-parametric methods work well?
Figure~\ref{fig:ap-pics} depicts two variants of random forests.
In the left figure, we observe that datasets with nearby oppositely-labeled points may lead to classifiers with convoluted decision boundaries. In the right figure, we see that well-separated data lead to classification regions that are more robust to small perturbations. We will use this low-dimensional intuition as a starting point for generic defense methods.

Figure~\ref{fig:ap-pics} suggests that since these methods make local
predictions, they might work well when data from different classes are
well-separated in space. We clearly cannot hope for such separation in most real datasets. Therefore, we propose to
preprocess the training data by removing a subset so that different classes are
well-separated. To ensure classification accuracy, we propose
removing the minimal subset of points that ensure this property. We call our
method {\em{Adversarial Pruning}}, which can be used as a pre-processing step
before training any generic non-parametric classifier.

To evaluate our defense, we propose a new attack that
is based on our next key observation: many non-parametric methods divide the
instance space into convex polyhedra, and predict in a piecewise constant
manner in each. For example, for $1$-nearest neighbor, these polyhendra are the
Voronoi cells. This suggests the following attack: find the closest polyhedron
to an input where the classifier predicts a different label and output the
closest point in this region. We implement this strategy by solving a
collection of convex programs, and in cases where solution is computationally
expensive, we provide a heuristic method for finding an approximate solution.
We refer to these attacks as the exact and approximate {\em region-based
attack}.

We next provide some theoretical justification for our methods. For our defense,
we show that adversarial pruning can be interpreted as a finite-sample version of
a robust analogue to the Bayes Optimal classifier. We formally introduce this robust classifier, that we call the $r$-optimal classifier, and show that it  maximizes {\em{astuteness}}
(accuracy where it is robust with radius $r$). For our attack, we show that the exact region-based
attack is optimal, in the sense that it yields the closest adversarial
example to a test input. 

We empirically evaluate the adversarial pruning defense using the
region based attack and prior attacks. We provide a general and thorough evaluation, for $k$-nearest neighbors
($k$-NN), decision trees, and random forests.  We see that adversarial pruning
consistently improves robustness, outperforming adversarial training on several
datasets and is competitive with classifier-specific
defenses. For our attacks, we see that even without any classifier-specific
optimization, our new attacks either outperform or are
competitive with  prior attacks (in terms of perturbation amount).  This suggests that both the adversarial pruning
defense as well as the region based attack are good generic baselines for
evaluating the robustness of non-parametric methods. 
\section{Preliminaries}\label{sec:preliminaries}

     We begin with a brief introduction to non-parametric methods that are
local classifiers whose output depends on training data close to the test
instance. These methods are typically used with dense lower-dimensional data,
such as those in Figure~\ref{fig:ap-pics}. 
Examples are $k$-nearest neighbor ($k$-NN) and tree-based classifiers.  The
{\em $k$-NN classifier} outputs the plurality label among the $k$ training
examples closest to $\x$ in an $\ell_p$ metric.  A {\em tree ensemble} contains
$T$ decision trees whose leaves are labeled with vectors in $\R^{C}$. Each
input $\x$ determines $T$ root-to-leaf paths, corresponding to vectors
$\bfu^1,\ldots, \bfu^T$. The output is the largest coordinate in $\bfu^1 +
\cdots + \bfu^T$.  Random forests are a subclass of tree ensembles. 

In what follows,  $f: \R^d \to \labels$ denotes a  classifier with $C$
classes, where $[C] := \{1,2,\ldots, C\}$. The training data for $f$ is a
dataset $\calS = \{(\x^j, y^j)\}_{j=1}^n$ of $n$ labeled examples, with $\x^j
\in \R^d$ and $y^j \in [C]$. 
    
\smallskip\noindent{\textbf{Robustness.}}    We study robustness in an adversarial model. The adversary's goal is to modify a true input by a small amount and cause the classifier to output the wrong label. Two main threat models have been proposed. The {\em black-box} setting restricts the adversary to only querying a classifier $f$ on various inputs. In the {\em white-box} setting, the adversary has full access to $f$, including the model structure and parameters. 

Fix a classifier $f$ and a norm $\|\cdot \|$ on $\R^d$.  An {\em adversarial example} for $f$ at~$\x$ is any other input $\wt \x$ such that $f(\x) \neq f(\wt \x)$. An {\em optimal adversarial example} for $f$ at~$\x$ is an input $\wt \x$ that minimizes $\|\x - \wt \x\|$ subject to $f(\x) \neq f(\wt \x)$. In other words, an optimal adversarial example $\wt \x$ is a closest vector to $\x$ that receives a different label. In practice it is not always possible to find the optimal adversarial example, and hence the goal is to find $\wt \x$ that is as close  to $\x$ as possible. %
We also define the robustness radius, the minimum perturbation needed to change the classifier label. 
\begin{definition} Let $\calX \times [C]$ be a labeled space with norm $\|\cdot \|$. The {\em robustness radius} of $f$ at $\x \in \calX$ is
    	\[
    	\rho(f,\x) := \min_{\wt \x \in \calX}\{\|\x - \wt \x\| : f(\x) \neq f(\wt \x)\}
    	\]
\end{definition}

\section{Adversarial Pruning Defense}
\label{sec:ap}

When are non-parametric methods robust? Since these are local classifiers,
Figure~\ref{fig:ap-pics} suggests that they may be robust when training data from
different classes is well-separated, and may fail when they overlap.

The training data may not be separated, so we will preprocess the data. 
We remove a subset of the training set, so that the remaining data are well-separated. Then, we train a non-parametric classifier on the rest.
A remaining question is which subset of points to remove. For high
classification accuracy, we remove the minimum subset whose removal ensures
this property.

This process of removing examples from training set so that certain properties
hold is called {\em pruning}.
In this section, we first introduce the method used to prune the dataset.
In Section~\ref{sec:main-theory}, we justify our method by interpreting it in light of classical results in
statistical learning theory~\citep{CD14,cover1967nearest,lugosi94}.

Formally, given a robustness radius $r$ and training set $\calS$, we propose the following generic way to preprocess the training set and improve the robustness of classifiers:

{\bf Adversarial Pruning.} Given~$r$ and a set~$\calS$, compute a maximum subset $\spruned \subseteq \calS$ such that differently-labeled points have  distance at least $2r$. Then, train any nonparametric classifier on $\spruned$.

After computing $\spruned$ once for a dataset, then we may train any classifier on the pruned training set.
Our main hypothesis is that this will lead to more robust classifiers when using non-parametric methods.
We will demonstrate empirically that this works well, and we will argue that this defense method is a finite-sample approximation to the optimal robust classifier.

Observe that while adversarial pruning is similar to the defense in~\cite{WJC18},  they actually retain additional points with confident labels, which ensures that their method converges to being robust where the {\em Bayes Optimal} is robust. Their work builds on previous results of~\cite{gottlieb2014efficient} and~\cite{kontorovich1} that sharpen the risk analysis of 1-NN by using pruning. As we explain in Section~\ref{sec:main-theory}, our method instead can be interpreted as a finite sample version of a different and more appropriate limit. 

One drawback of this approach is that the metric must be fine-grained enough to distinguish between close and far pairs. For most datasets and norms (e.g, Euclidean distance) for which non-parametrics are used, this will be the case. However, for binary features and the $\ell_\infty$ distance, we have the problem that every pair of different points has distance exactly one, and therefore, the similarity structure is meaningless. To circumvent this, we preprocess the binary feature vectors using standard feature-extraction methods (e.g., PCA), and then operate on the resulting space.

\subsection{Computing the Robust Dataset}

We use known graph algorithms to efficiently compute $\spruned$. 
Each training example is a vertex in the graph. Edges connect pairs of differently-labeled examples
$\x$ and $\x'$ whenever $\|\x - \x'\| \leq 2r$. 
We remove as few examples as possible so that no more edges remain.
This is equivalent to computing the minimum vertex cover.
For binary labels, this graph is bipartite, and a minimum vertex cover can be derived from a maximum matching.
The fastest method to solve maximum
matching is the Hopcroft-Karp algorithm~\citep{hopcroft1973n}. For a graph with $n$ vertices and $m$ edges, it takes time $O(m \sqrt{n})$. Fortunately, in practice, the graph of close pairs is quite sparse (for small $r$ and high dimensional feature spaces, with relatively separated classes). For example, if $m = \widetilde{O}(n)$, then computing $\spruned$  takes time $\widetilde{O}(n^{3/2})$. For large datasets, we note that {\em linear time} approximation algorithms are known
\citep{duan2014linear}.

When there are more than two labels, that is $C \geq 3$, it is NP-Hard to compute the optimal pruned subset, but approximation algorithms are
known~\citep{gottlieb2014efficient, kontorovich1}. The greedy algorithm provably generates a 2-approximation. A suboptimal solution still ensures that different classes are
separated, and hence, the robustness of the classifier does not require
finding the optimal pruned dataset.

\section{Region-Based Attack}\label{sec:algorithm}

In this section, we develop a way to evaluate robustness of non-parametric methods.
For parametric algorithms, generic gradient-based attacks exist. %
Our goal is to develop an analogous general attack method, which works well for multiple non-parametrics. Moreover, we aim to develop a white-box attack that will serve as a better baseline than black-box attacks.

The main challenge of finding adversarial examples is that these classifiers
have complicated decision regions.
The central idea behind our attack is that for many classifiers, such as
$k$-NN or random forests, we can decompose the decision regions into convex
sets.

\begin{definition}
	An {\em $(s,m)$-decomposition} is a partition of $\R^d$ into convex polyhedra $P_1,\ldots, P_s$ such that each $P_i$ can be described by up to $m$ linear constraints, and $f$ is {\em $(s,m)$-decomposable} if there is an  $(s,m)$-decomposition such that $f$ is constant on $P_i$ for each $i \in [s]$.
\end{definition} 

Figure \ref{fig:rba-pics} demonstrates the decomposition for two examples.
Figure \ref{fig:rba-pics}(a) shows how $1$-NN is decomposed.
In particular, a Voronoi diagram for $n$ points is an $(n,
n-1)$-decomposition ($P_1, \ldots, P_n$ are Voronoi cells). If $k \geq 1$,
then a $k$-NN classifier is $\left(\binom{n}{k},k(n-k)\right)$-decomposable;
every $k$ points correspond to polyhedra defined by $k(n-k)$ hyperplanes
separating the $k$ points from the other $n-k$
points~\citep{aurenhammer1991voronoi}.

\begin{figure*}[t]
	\centering
	\subfloat[$1$-NN]{
		\includegraphics[width=.28\textwidth]{./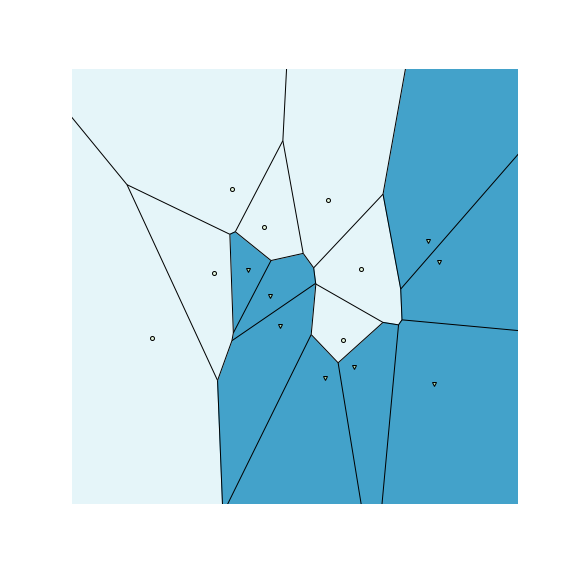}}
	\subfloat[Decision tree]{
		\includegraphics[width=.28\textwidth]{./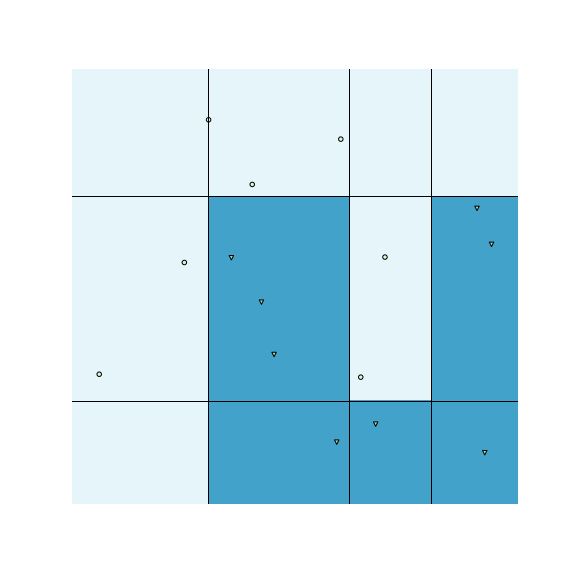}}
	\caption{
		$(s, m)$-decompositions of two non-parametrics.
	}
	\label{fig:rba-pics}
\end{figure*}

Tree-based classifiers also fit into our framework, and
Figure \ref{fig:rba-pics}(b) shows how a decision tree is decomposed.
Any decision tree of depth $D$ with $L$ leaves is $(L,D)$-decomposable; each root-to-leaf path
corresponds to a polyhedron $P_i$ defined by $D$ hyperplanes. Generally, if $f$ is an ensemble of $T$ trees, each with depth $D$ and $L$
leaves, then $f$ is $(L^T, DT)$-decomposable (proofs in Appendix~\ref{app:attack}). An
exponential dependence on $T$ is expected, since the
adversarial example problem for tree ensembles is NP-Hard~\citep{kantchelian}.

The existence of $(s,m)$-decompositions suggests the following attack.
Given a classifier $f$ and an input~$\x$, suppose we could find the closest
polyhedron $P_i$ in the decomposition where $f$ predicts a different label
than $f(\x)$.
Then, the closest point in $P_i$ would be the optimal adversarial example.
Our attack implements this strategy by searching over all polyhedra.

\subsection{Region-Based Attack} 

Let $f$ be an $(s,m)$-decomposable classifier with decomposition $P_1,\ldots, P_s$, where $f(\z) = y_i$ when $\z \in P_i$, for labels $y_i \in \labels$. 
To find an adversarial example for~$\x$, consider all polyhedra $P_i$ such that $f(\x) \neq y_i$. 
Then, output $\wt \x$ minimizing
\begin{equation}\label{eqn:opt-untargeted}
\underset{\ i : f(\x) \neq y_i \ }{\text{\ \ min\ \ }} 
\underset{\z \in P_i}{\text{\ min\ }} \ \ 
\|\x - \z\|.
\end{equation}

Each $P_i$ is described by $\leq m$ linear constraints, and the norm
objective is convex~\citep{boyd}.
Thus, we can solve each inner minimization problem in
(\ref{eqn:opt-untargeted}) separately by solving a convex program with $O(m)$
constraints.
This results in candidates $\z^i \in P_i$.
Taking the outer minimum over $i$ with $f(\x) \neq y_i$ leads to the optimal
adversarial example $\wt \x = \argmin_{\z^i} \ \|\x - \z^i\|$.

{\bf Efficiency.}
The running of the exact attack algorithm depends on two things: (i) the number of regions, which is based on the complexity of the classifier, and (ii) the number of constraints and dimensionality of the polyhedra. Due to advances in linear/quadratic program solvers, finding the adversarial example in a single region is quite efficient, i.e., the inner minimization problem in (\ref{eqn:opt-untargeted}) is easy. We find that the number of regions $s$ dominates the running time, i.e., the outer minimization problem in (\ref{eqn:opt-untargeted}) is hard. 
For $k$-NN, the number of convex polyhedra scales with $O(n^k)$. When $k=1$, this is efficiently solvable, because polyhedra have at most $n$ constraints, and the adversarial examples can be found quickly using a linear program for $\ell_\infty$ perturbations. Unfortunately, for $k>1$, this attack does not scale well, and we will develop an approximation algorithm for larger values of $k$.

For a single decision tree, again the exact attack is very efficient, depending only on the number of nodes in the tree. But for larger tree ensembles (e.g., large random forests), the optimal attack is very slow, as expected.

\subsection{Speeding Up the Search}
The exact attack is computationally intensive when $s$ is large; hence, finding optimal solutions is infeasible for random forests (with many trees) or $k$-NN (when $k$ is large). We next provide a computationally-efficient algorithm, which searches a constant number of regions.

The region-based attack for an $(s,m)$-decomposable $f$ requires solving up to $s$ convex programs, one for each polyhedron $P_i$ with a different label. If the number of polyhedra is large, then this may be computationally infeasible. Fortunately, (\ref{eqn:opt-untargeted}) has an obvious subdivision, based on the outer minimum over convex polyhedra. We use a relaxation that considers only a subset of polyhedra.  
We observe that each training point corresponds to a polyhedron---the one that $f$ uses to predict the label. 
When finding adversarial examples for $\x$, the natural choice is to utilize training data close to  $\x$.

{\bf Approximate Region-Based Attack.} Let $\calS$ be the training data. To find an adversarial example under $\ell_p$ for $\x$, we first compute the subset $\calS' \subseteq \calS$ of $s'$ points closest in $\ell_p$ distance to $\x$, while having different training labels than $f(\x)$. Next, we determine at most~$s'$ polyhedra $P_{i_1}, \ldots, P_{i_{s'}}$ containing points in $\calS'$ (as the polyhedra partition $\R^d$). We solve the inner optimization problem in (\ref{eqn:opt-untargeted}) for each $P_{i_j}$ to find candidates $\z^i$ for $i \in [s']$. Finally, we output $\wt \x = \argmin_{\z^i} \ \|\x - \z^i\|$, where the minimum is over these $s'$ candidates.  

As we only solve $s' \ll s$ convex programs,  the running time is greatly reduced compared to the optimal region-based attack. 
Empirically, this approximation finds adversarial examples with low perturbation. %

\section{Theoretical Justification}
\label{sec:main-theory}

We provide some theoretical results to support our methods. To understand the robustness of non-parametric methods, we first derive a theoretically optimal classifier that takes into account robustness as a core objective. Then, we show that adversarial pruning can be interpreted as a finite sample approximation to the optimally robust classifier. Finally, we analyze the exact and approximate region-based attacks.

\subsection{Adversarial Pruning vs. Optimal}

Under certain conditions, many non-parametric methods converge in the infinite sample limit to the {\em Bayes Optimal classifier}, the most accurate classifier for a data distribution. In this way, non-parametric classifiers may be viewed as {{finite-sample approximations}} to the Bayes Optimal. 
However, the Bayes Optimal may not be robust to adversarial examples. 

We next introduce a novel robust analogue to the Bayes Optimal. For a perturbation amount~$r$, we call it the {\em$r$-Optimal classifier}.
Surprisingly, to the best of our knowledge, such an analogue seems to be new in the context of adversarial examples.

Let $\mu$ denote a distribution on labeled examples $\calX \times [C]$ and fix a distance on $\calX$. 
What is the true objective of a robust classifier?  Prior work measures
astuteness under $\mu$, which is the probability that the classifier is both $r$-robust and accurate for a new sample $(\x, y)$~\citep{madry2017, WJC18}. 

\begin{definition}\label{def:astute}
	For distribution $\mu$ on $\calX \times [C]$, the {\em astuteness} of a classifier $f$ at radius $r$ is
	\[
	\astu_\mu(f,r) := \Pr_{(\x,y) \sim \mu(\x)}[\rho(f,\x) \geq r \mathrm{\ and\ } f(\x) = y].
	\]
\end{definition}

{\bf Robust Analogue to Bayes Optimal.}
We exhibit a classifier, the $r$-Optimal classifier, that achieves optimal astuteness. It is convenient to rewrite astuteness in terms of certain robust subsets of the input space. Then, we define the $r$-Optimal classifier using these subsets. Formally, for a classifier $f$ and label $j$, let 
$$S_j(f,r) := \{\x \in \calX \mid f(\x) = j \mbox{ and } \rho(f,\x) \geq r\}.$$

We now define the $r$-Optimal classifier and prove that it maximizes astuteness. This result hinges on the next lemma, which rewrites astuteness in a more convenient form. Let $\mu$ be a distribution on labeled examples $\calX \times [C]$. The following lemma expresses astuteness under $\mu$ using these subsets. 

\begin{lemma} \label{lem:astute-integral} 
$$\displaystyle
	\astu_\mu(f,r) =  \sum_{j=1}^{C} \int_{\x \in S_j(f,r)} p(y = j \mid  \x) d \mu(\x).
$$
\end{lemma}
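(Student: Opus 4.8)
The plan is to expand the astuteness probability by conditioning on the value of the classifier $f(\x)$, and then to match each resulting term to an integral over the corresponding robust region $S_j(f,r)$. This is essentially a definition-unpacking argument: astuteness is a joint probability over $(\x,y)$, and the sets $S_j(f,r)$ were defined precisely to capture the two classifier-side conditions appearing in that event.

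First I would rewrite the defining event. By definition, $\astu_\mu(f,r) = \Pr_{(\x,y)\sim\mu}[\rho(f,\x)\geq r \text{ and } f(\x)=y]$. Since $f(\x) \in \labels$ takes exactly one value at each $\x$, the events $\{f(\x)=1\}, \ldots, \{f(\x)=C\}$ form a partition of the sample space, so I would split the astuteness into a sum over $j \in \labels$:
\begin{equation*}
\astu_\mu(f,r) = \sum_{j=1}^{C} \Pr_{(\x,y)\sim\mu}\bigl[\rho(f,\x)\geq r,\ f(\x)=j,\ y=j\bigr],
\end{equation*}
where the accuracy condition $f(\x)=y$ becomes $y=j$ once we restrict to the event $f(\x)=j$.

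Next I would identify the conjunction $\{\rho(f,\x)\geq r \text{ and } f(\x)=j\}$ with membership in the robust region, i.e. $\x \in S_j(f,r)$, which holds exactly by the definition $S_j(f,r) = \{\x : f(\x)=j,\ \rho(f,\x)\geq r\}$. Each summand then reads $\Pr[\x \in S_j(f,r)\text{ and }y=j]$. I would finish by applying the law of total probability, factoring the joint law $\mu$ on $\calX \times \labels$ into the marginal $\mu$ on $\x$ and the conditional $p(y=j \mid \x)$:
\begin{equation*}
\Pr\bigl[\x \in S_j(f,r),\ y=j\bigr] = \int_{\x \in S_j(f,r)} p(y=j \mid \x)\, d\mu(\x).
\end{equation*}
Summing over $j$ yields the claimed identity.

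The conditioning step is routine, so the only thing to be careful about is measurability: I would need to assume (or note) that each $S_j(f,r)$ is $\mu$-measurable, which follows once $f$ and the map $\x \mapsto \rho(f,\x)$ are measurable---true for the decomposable classifiers considered here. The one structural observation worth stating explicitly, rather than a genuine computational obstacle, is that the regions $S_j(f,r)$ are pairwise disjoint across distinct $j$ (because $f(\x)$ is single-valued); this disjointness is what guarantees that summing the per-label integrals reconstructs the total astuteness without any double-counting.
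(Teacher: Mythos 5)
Your proposal is correct and follows essentially the same route as the paper's proof: both decompose the astuteness event over the label $j \in \labels$, identify the conjunction $\{\rho(f,\x)\geq r,\ f(\x)=j\}$ with membership in $S_j(f,r)$, and factor the joint law into the conditional $p(y=j\mid\x)$ integrated against the marginal $d\mu(\x)$. The only difference is cosmetic---the paper writes the decomposition with explicit indicator functions inside a single integral, while you partition the event first---and your added remarks on measurability and disjointness are fine but not needed beyond what the paper implicitly assumes.
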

\begin{proof}
	Recall the definition of the robust regions of a classifier,
	\[S_j(f,r) = \{\x \in \calX \mid f(\x) = j \mbox{ and } \rho(f,\x) \geq r\}.\]  
	Starting with the definition of astuteness, we compute the following.
	\begin{align*}
		&\astu_\mu(f,r) \\
		&= 
		\Pr_{(\x,y) \sim \mu}[\rho(f,\x) \geq r \mathrm{\ and\ } f(\x) = y]
		\\
		&=
		\int_{\x} p(y \mid \x) \cdot \mathbf{1}_{\{ \rho(f,\x) \geq r \} } \cdot 
		\mathbf{1}_{\{ f(\x) =y\} } \ d \mu(\x)
		\\
		&=
		\sum_{j=1}^{C} \int_{\x} p(y = j \mid \x) \cdot \mathbf{1}_{\{ \rho(f,\x) \geq r \} } \cdot \mathbf{1}_{\{ f(\x) =j \} }\ d \mu(\x)
		\\
		&=
		\sum_{j=1}^{C} \int_{\x \in S_j(f,r)} p(y = j \mid \x)\ d \mu(\x).
	\end{align*}
\end{proof}

How should we define the classifier that maximizes astuteness?
Lemma~\ref{lem:astute-integral}  implies that, to calculate astuteness, it suffices to consider the robust regions $S_j(f,r)$ for a classifier. As a consequence, we claim that in order to determine the optimal classifier, it suffices to find the optimal robust regions under $\mu$. We first formalize this intermediate goal using the following maximization problem.
\begin{flalign}
\label{eqn:roptimal}
&\max_{S_1,\ldots, S_C}\ \sum_{j=1}^{C} \int_{\x \in S_j} p(y = j \mid \x)\  d\mu(\x) \\ &\mbox{s.t. } d(S_j, S_{j'}) \geq 2r \mbox{ for all } j \neq j', \nonumber
\end{flalign}
where $d(S_j, S_{j'}) := \min_{u \in S_j, v \in S_{j'}} \| u-v\|$.
Notice that for any classifier $f$, the sets $S_j(f,r)$ for $j \in [C]$ have pairwise distance at least $2r$, implying that they  are feasible  solutions for (\ref{eqn:roptimal}).

Besides being distance $2r$ apart, an optimal solution $S_1^*,\ldots, S_C^*$ to (\ref{eqn:roptimal}) maximizes accuracy in the following sense.  The integral measures the probability that $(\x, y)\sim \mu$ has $y=j$ and $\x \in S_j^*$. In other words, $S_j^*$ has the highest frequency of points with label $j$ under $\mu$, subject to the  distance constraint. 

The sets $S_j^*$ form the basis for the optimal classifier's decision regions. To ensure the separation, we consider the distance $r$ ball around these sets. Formally, we have the following.

\begin{definition} Fix $r$ and $\mu$. Let $S_1^*,\ldots, S_C^*$ be optimizers of (\ref{eqn:roptimal}). The {\em $r$-Optimal classifier} $\ropt$ is any classifier such that $\ropt(\x) = j$ whenever $d(\x, S_j^*) \leq r$. 
\end{definition}

We remark that when $r = 0$, the $0$-Optimal classifier is the standard Bayes Optimal classifier. 

Finally, because $S_j(\ropt, r) = S_j^*$, Lemma~\ref{lem:astute-integral} then implies that $r$-Optimal classifier maximizes astuteness:

\begin{theorem}\label{thm:ropt}
	$\ropt = \argmax_f \astu_\mu(f,r).$
\end{theorem}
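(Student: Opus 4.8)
The plan is to reduce the optimization over all classifiers to the set-optimization problem (\ref{eqn:roptimal}). By Lemma~\ref{lem:astute-integral}, every classifier $f$ satisfies $\astu_\mu(f,r) = \sum_{j=1}^C \int_{\x \in S_j(f,r)} p(y=j\mid\x)\,d\mu(\x)$, so maximizing astuteness is the same as choosing robust regions that maximize the objective of (\ref{eqn:roptimal}). I would therefore establish two things: (i) the robust regions of \emph{any} classifier form a feasible point of (\ref{eqn:roptimal}), so its astuteness is at most the optimum value $\mathrm{OPT}$; and (ii) the regions of $\ropt$ attain $\mathrm{OPT}$. Together these give $\astu_\mu(f,r) \le \mathrm{OPT} = \astu_\mu(\ropt,r)$ for all $f$, which is exactly the claim.

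For (i), fix any $f$ and suppose $\x \in S_j(f,r)$ and $\x' \in S_{j'}(f,r)$ with $j \ne j'$. Since $\rho(f,\x) \ge r$, every point strictly within distance $r$ of $\x$ is labeled $j$ by $f$, and likewise every point strictly within distance $r$ of $\x'$ is labeled $j'$. If $\|\x - \x'\| < 2r$, the midpoint lies in both open balls of radius $r$, forcing a common point to carry both labels $j$ and $j'$, a contradiction. Hence $d(S_j(f,r), S_{j'}(f,r)) \ge 2r$, so the tuple $(S_1(f,r),\ldots,S_C(f,r))$ is feasible for (\ref{eqn:roptimal}) and $\astu_\mu(f,r) \le \mathrm{OPT}$.

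For (ii), I claim it suffices to prove the single inclusion $S_j^* \subseteq S_j(\ropt,r)$ for each $j$: then $\astu_\mu(\ropt,r) = \sum_j \int_{S_j(\ropt,r)} p(y=j\mid\x)\,d\mu(\x) \ge \sum_j \int_{S_j^*} p(y=j\mid\x)\,d\mu(\x) = \mathrm{OPT}$ because the integrands are nonnegative, and combined with (i) applied to $\ropt$ this pins $\astu_\mu(\ropt,r)$ exactly at $\mathrm{OPT}$. To prove the inclusion, take $\x \in S_j^*$; then $d(\x, S_j^*) = 0 \le r$ gives $\ropt(\x) = j$. For any $\wt\x$ with $\|\x - \wt\x\| < r$ we have $d(\wt\x, S_j^*) < r$, while the constraint $d(S_j^*, S_{j'}^*) \ge 2r$ together with the triangle inequality $d(S_j^*,S_{j'}^*) \le d(\wt\x,S_j^*) + d(\wt\x,S_{j'}^*)$ forces $d(\wt\x, S_{j'}^*) > r$ for every $j' \ne j$; hence $\ropt(\wt\x) = j$ unambiguously, so $\rho(\ropt,\x) \ge r$ and $\x \in S_j(\ropt,r)$.

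The main obstacle is the geometric bookkeeping in steps (i) and (ii): both rest on translating the robustness condition $\rho \ge r$ into a statement about radius-$r$ balls and on using the triangle inequality for set distances to guarantee that $\ropt$ is well defined, i.e.\ that no point lies within radius $r$ of two distinct optimal regions (apart from a boundary set where both distances equal $r$ and $d(S_j^*,S_{j'}^*)=2r$, which the freedom in ``any classifier such that'' absorbs). I would emphasize that only the inclusion $S_j^* \subseteq S_j(\ropt,r)$ is needed for the theorem, so the full equality asserted in the text — which can fail in regions far from every $S_k^*$ where $\ropt$ is unconstrained — is not actually required for optimality.
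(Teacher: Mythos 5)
Your proof is correct and shares the paper's overall skeleton --- reduce to the set-optimization problem \eqref{eqn:roptimal} via Lemma~\ref{lem:astute-integral}, then show feasibility of the regions $S_j(f,r)$ for an arbitrary $f$ (your midpoint argument is the paper's line-segment argument, done with the appropriate strict inequalities) --- but it handles the optimality of $\ropt$ differently, and more carefully. The paper asserts ``$S_j^* = S_j(\ropt,r)$'' as if it were immediate from the definition, and uses that set identity in its chain of inequalities; as you note, the equality is not justified and can genuinely fail, since $\ropt$ is unconstrained at points farther than $r$ from every $S_k^*$, so $S_j(\ropt,r)$ may be strictly larger than $S_j^*$. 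You instead prove only the inclusion $S_j^* \subseteq S_j(\ropt,r)$, via the set-distance triangle inequality $d(S_j^*,S_{j'}^*) \le d(\wt\x,S_j^*) + d(\wt\x,S_{j'}^*)$ together with the $2r$-separation constraint, and close with a sandwich: nonnegativity of the integrand gives $\astu_\mu(\ropt,r)\ge \mathrm{OPT}$, while feasibility applied to $\ropt$ itself gives $\astu_\mu(\ropt,r)\le \mathrm{OPT}$. This buys two things: it repairs the paper's unjustified step (only the inclusion is ever needed, since the middle ``$=$'' in the paper's chain can be weakened to a ``$\le$'' pointing the right way), and your inclusion argument simultaneously verifies that the constraints defining $\ropt$ are mutually consistent at all points strictly within distance $r$ of some $S_j^*$, i.e.\ that $\ropt$ makes sense where it matters --- something the paper never checks. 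The one loose end, which you flag but do not fully resolve, is well-definedness of $\ropt$ when $d(S_j^*,S_{j'}^*)=2r$ exactly, where a point equidistant at exactly $r$ from both sets receives conflicting requirements; a tie-breaking convention (or a strict inequality in the definition) fixes this, and your argument, which only ever uses points at distance strictly less than $r$, is unaffected.
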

\begin{proof}
	Recall that the $r$-Optimal classifier $\ropt$ is defined in terms of an optimal solution $S_1^*,\ldots, S_C^*$ to the maximization problem (\ref{eqn:roptimal}).
	By definition, $\ropt(\x) = j$ whenever $d(S_j^*, \x) \leq r$. In other words, $S_j^* = S_j(\ropt,r)$.
	
	We will need the fact that for any classifier $f$, the sets $S_j(f,r)$ are a feasible solution to the above maximization problem. That is, for $j \neq j'$, the distance between $S_j(f,r)$ and $S_{j'}(f,r)$ is at least~$2r$. To see this, consider any two points $u \in S_j(f,r)$ and $v \in S_{j'}(f,r)$. Then, consider the line segment between them $w = \lambda u + (1-\lambda)v$, for $\lambda \in [0,1]$. By definition of the robustness radius, we know that $f(w) = f(u) = j$ whenever $d(w,u) \leq r$. Similarly, $f(w) = f(v) = j'$ whenever $d(w,v) \leq r$. Therefore, we must have that $d(u,v) \geq 2r$. As $u$ and $v$ were an arbitrary pair of points in $S_j(f,r)$ and $S_{j'}(f,r)$, we conclude that these subsets have distance at least $2r$, and this holds for all $j \neq j'$.

	Using  Lemma~\ref{lem:astute-integral}, we now compute the following.
	\begin{eqnarray*}
		\astu_\mu(f,r) 
		&=& \sum_{j=1}^{C} \int_{\x \in S_j(f,r)} p(y = j \mid \x)\ d \mu(\x)
		\\
		&\leq& 
		\sum_{j=1}^{C} \int_{\x \in S_j^*} p(y = j \mid \x)\ d \mu(\x)
		\\
		&=&
		\sum_{j=1}^{C} \int_{\x \in S_j(\ropt,r)} p(y = j \mid \x)\ d \mu(\x)
		\\
		&=& \astu_\mu(\ropt,r).
	\end{eqnarray*}
	
	The inequality uses that the sets $S_j(f,r)$ have pairwise distance at least $2r$, and therefore, they are feasible for the above maximization problem, which has optimal solution $S_j^* = S_j(\ropt,r)$.
\end{proof}

{\bf Finite Sample Approximation.} Prior work shows that $1$-NN applied to a variant of adversarial pruning leads to provably robust classifiers~\citep{WJC18}.
The main difference with our work is their method also selects a subset of confident training examples to keep in the pruned subset - which ensures that the classifier converges to being robust in regions where the Bayes Optimal is robust.
In contrast, our aim is to develop generic techniques, for multiple classifiers, and we show that our method can be interpreted as a finite sample approximation to the $r$-Optimal classifier -- the optimally astute classifier.

Adversarial pruning works by removing certain training points so that no oppositely labeled pairs of examples remain. We can view this process in the light of the $r$-optimal classifier as follows. To prune the dataset $\calS$, we solve the maximization problem:
\begin{flalign} \label{eqn:roptimalapprox}
&\max_{S_1,\ldots, S_C \subseteq \calS} \ \sum_{j=1}^{C} \sum_{\x^i \in S_j} \mathbf{1}_{\{y^i = j\}} \\ 
&\mbox{s.t. } \nonumber
 d(S_j, S_{j'}) \geq 2r \mbox{ for all } j\neq j'.\nonumber 
\end{flalign}
The solution to~\eqref{eqn:roptimalapprox} will be maximum subsets of training data with pairwise distance~$2r$. As long as the training set $\calS$ is representative of the underlying distribution $\mu$, these subsets will approximate the optimal $S_j^*$ sets. %
Hence, we posit that a non-parametric method trained on $\spruned$ should approximate the $r$-Optimal classifier.

\subsection{Attack Algorithm Analysis}

The run time of the region-based attack depends on the norm. We focus on $\ell_p$ with $p \in \{1,2,\infty\}$ as these
are the most relevant for adversarial examples. 
We prove the following theorem in Appendix~\ref{app:attack}.

\begin{theorem}\label{thm:main}
	If $f$ is $(s,m)$-decomposable, then the region-based attack outputs optimal adversarial examples in time $s \cdot \poly(m, d)$, for $\ell_p$ distance, $p \in \{1,2,\infty\}$.
\end{theorem}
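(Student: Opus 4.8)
The plan is to prove the two claims separately: \emph{correctness} (the attack returns an adversarial example achieving the optimal robustness radius $\rho(f,\x)$) and the \emph{running time} bound $s \cdot \poly(m,d)$.

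For correctness, I would start from the fact that $P_1,\ldots,P_s$ partition $\R^d$ and that $f$ is constant on each $P_i$ with value $y_i$. Then for any point $\z$, we have $f(\z) \neq f(\x)$ exactly when $\z$ lies in some $P_i$ with $y_i \neq f(\x)$, so the feasible set of adversarial examples is precisely $\bigcup_{i \,:\, y_i \neq f(\x)} P_i$. The optimal adversarial example minimizes $\|\x - \z\|$ over this union, and the minimum over a union equals the outer minimum over the individual sets of the per-set minima. This is exactly the nested minimization in (\ref{eqn:opt-untargeted}), so the returned $\wt\x$ attains $\rho(f,\x)$ and is therefore optimal.

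For the running time, the key step is to show each inner subproblem $\min_{\z \in P_i} \|\x - \z\|$ is solvable in $\poly(m,d)$ time. Each $P_i$ is cut out by at most $m$ linear inequalities in $\R^d$, and the objective is an $\ell_p$ norm. For $p \in \{1,\infty\}$ I would linearize the norm with auxiliary variables (for $\ell_1$, variables $t_j \geq |x_j - z_j|$ enforced by a pair of linear constraints each; for $\ell_\infty$, a single $t$ with $t \geq |x_j - z_j|$ for all $j$), turning the subproblem into a linear program with $O(m+d)$ constraints and $O(d)$ variables, solvable in $\poly(m,d)$ time by any polynomial-time LP algorithm. For $p=2$, minimizing $\|\x-\z\|_2$ has the same minimizer as minimizing the convex quadratic $\|\x-\z\|_2^2$ over the same linear constraints (squaring is monotone on nonnegatives), yielding a convex quadratic program, again solvable in $\poly(m,d)$ time. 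Since there are at most $s$ such subproblems and the final outer minimization over the $s$ candidates $\z^i$ costs only $O(s)$ comparisons, the total time is $s \cdot \poly(m,d)$.

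The main obstacle is not conceptual but the careful bookkeeping of the norm reformulations: one must verify that the auxiliary-variable reductions for $\ell_1$ and $\ell_\infty$ preserve the optimal value, and that the $\ell_2$ case genuinely reduces to a tractable convex program rather than requiring the nondifferentiable norm directly. A minor point worth addressing is boundary behavior: because the closed polyhedra may share faces, an optimal point can lie in several cells at once, but this does not change the computed minimum distance and hence leaves correctness unaffected.
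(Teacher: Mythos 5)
Your proposal is correct and follows essentially the same route as the paper's proof: correctness via the partition argument (the adversarial set is the union of differently-labeled regions, so the nested minimization in (\ref{eqn:opt-untargeted}) attains $\rho(f,\x)$), and the runtime bound by reformulating each inner problem as an LP for $p \in \{1,\infty\}$ (with exactly the auxiliary-variable constructions the paper gives in its appendix) or a convex QP for $p=2$, each solvable in $\poly(m,d)$ time across at most $s$ regions. Your remark on boundary behavior of shared faces is a minor technicality the paper itself glosses over, and your treatment of it is sound.
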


As $k$-NN and tree ensembles are $(s,m)$-decomposable, the region-based
attack produces an optimal adversarial example for these.
Note that an optimal attack {\em certifies} the robustness radius.
Indeed, if on input $\x$ the region-based attack outputs $\wt \x$, then $\|\x
- \wt \x\| = \rho(f,\x)$.

We leave it as an interesting open question to develop provably optimal algorithms with better running time. For example, in the case of large tree ensembles, the attack searches over all combinations of one leaf from each tree. This seems wasteful, as many of these polyhedra may be empty (in fact, we find that most potential regions are infeasible for random forests trained on real datasets). 

{\bf Approximate Attack Guarantees.} 
We claim that the approximate region-based attack outputs a valid adversarial example when $f$ is $(s,m)$-decomposable. Each region is defined by $m$ constraints, and $f$ is constant on each region. We search in $s'$ regions, finding the best candidate $\z^i$ from each. Each considered region contains a training example with a different label than $f(\x)$. Therefore, the best adversarial example $\widetilde{\x}$ in that region receives a different label $f(\widetilde{\x}) \neq f(\x)$.  The analysis of the time complexity for finding candidates is $\poly(m, d)$ for each region $P_i$. Compared to the exact attack (Theorem~\ref{thm:main}) we only consider $s'$ regions, so the total time is only $s' \cdot \poly(m, d)$.  We find in practice that $s' = 50$ regions suffices for a good attack, and the time only scales with $m$ and $d$.  

\section{Experiments}\label{sec:experiments}

We investigate the effectiveness of our methods by evaluating multiple classifiers on nine datasets.
We address the following questions:
\begin{enumerate}
    \item Does adversarial pruning increase robustness across multiple non-parametric classifiers?
    \item How well does the region-based attack perform compared with prior work?
\end{enumerate}

\begin{table*}[t]
	\small
	\centering
	\setlength{\tabcolsep}{3.0pt}
	\begin{tabular}{l|ccccc|cccc|ccc|cc}
        \toprule
                & \multicolumn{5}{c}{1-NN} & \multicolumn{4}{c}{3-NN} & \multicolumn{3}{c}{DT} & \multicolumn{2}{c}{RF} \\
                & Direct & BBox & Kernel & \makecell{RBA\\Exact} & \makecell{RBA\\Appr.} & Direct & BBox & Kernel & \makecell{RBA\\Appr.} & Papernot's & BBox & \makecell{RBA\\Exact} &    BBox & \makecell{RBA\\Appr.} \\
        \midrule
        austr. &             $.442$ &            $.336$ &                       $.379$ &           $\mathbf{.151}$ &               $\mathbf{.151}$ &             $.719$ &            $.391$ &                       $.464$ &               $\mathbf{.278}$ &                $.140$ &            $.139$ &       $\mathbf{.070}$ &      $\mathbf{.364}$ &                     $.446$ \\
        cancer &             $.223$ &            $.364$ &                       $.358$ &           $\mathbf{.137}$ &               $\mathbf{.137}$ &             $.329$ &            $.376$ &                       $.394$ &               $\mathbf{.204}$ &                $.459$ &            $.334$ &       $\mathbf{.255}$ &               $.451$ &            $\mathbf{.383}$ \\
        covtype &             $.130$ &            $.199$ &                       $.246$ &           $\mathbf{.066}$ &                        $.067$ &             $.200$ &            $.259$ &                       $.280$ &               $\mathbf{.108}$ &                $.254$ &            $.083$ &       $\mathbf{.051}$ &               $.233$ &            $\mathbf{.214}$ \\
        diabetes &             $.074$ &            $.112$ &                       $.165$ &           $\mathbf{.035}$ &               $\mathbf{.035}$ &             $.130$ &            $.143$ &                       $.191$ &               $\mathbf{.078}$ &                $.237$ &            $.133$ &       $\mathbf{.085}$ &      $\mathbf{.181}$ &                     $.184$ \\
        f-mnist06 &             $.080$ &            $.140$ &                       $.187$ &           $\mathbf{.029}$ &                        $.030$ &             $.129$ &            $.169$ &                       $.202$ &               $\mathbf{.051}$ &                $.189$ &            $.134$ &       $\mathbf{.079}$ &               $.206$ &            $\mathbf{.188}$ \\
        f-mnist35 &             $.187$ &            $.244$ &                       $.259$ &           $\mathbf{.075}$ &                        $.077$ &             $.234$ &            $.238$ &                       $.266$ &               $\mathbf{.094}$ &                $.262$ &            $.185$ &       $\mathbf{.115}$ &      $\mathbf{.188}$ &                     $.246$ \\
        fourclass &             $.109$ &            $.124$ &                       $.137$ &           $\mathbf{.090}$ &               $\mathbf{.090}$ &             $.101$ &            $.113$ &                       $.134$ &               $\mathbf{.096}$ &                $.288$ &            $.197$ &       $\mathbf{.137}$ &               $.159$ &            $\mathbf{.133}$ \\
        halfmoon &             $.070$ &            $.129$ &                       $.102$ &           $\mathbf{.058}$ &               $\mathbf{.058}$ &             $.105$ &            $.132$ &                       $.115$ &               $\mathbf{.096}$ &                $.098$ &            $.148$ &       $\mathbf{.085}$ &               $.182$ &            $\mathbf{.149}$ \\
        mnist17 &             $.161$ &            $.251$ &                       $.262$ &           $\mathbf{.070}$ &                        $.073$ &             $.221$ &            $.261$ &                       $.269$ &               $\mathbf{.097}$ &                $.219$ &            $.171$ &       $\mathbf{.123}$ &      $\mathbf{.250}$ &            $\mathbf{.250}$ \\
        \bottomrule
\end{tabular}

	\caption{
        The Empirical Robustness for different attacks on four  classifiers {\bf (lower is better; best is in bold)}.
        }
	\label{table:compare_attacks_avg_pert-main}
\end{table*}

\subsection{Experimental Setup}

\textbf{Classifiers and Datasets.} We evaluate three non-parametric classifiers:
$k$-nearest neighbor ($k$-NN), decision tree (DT)
 and random forest (RF)
\citep{breiman2001random, breiman2017classification, cover1967nearest}.
We use nine standard binary classification
datasets.
All features are scaled to be in [0,1].  We evaluate in $\ell_\infty$ to be consistent with prior work.
We reduce the feature dimension of the image datasets (f-mnist and mnist)
with PCA to 25 dimensions for two reasons: (i) non-parametrics are normally used for low
dimensional spaces, (ii)
adversarial pruning requires non-binary features for $\ell_\infty$.
Details are in Appendix~\ref{app:exp}; code
 in a public repository.\footnote{\url{https://github.com/yangarbiter/adversarial-nonparametrics/}}

\textbf{Performance Measures.}
Besides measuring accuracy, we evaluate attacks using {{empirical
robustness}}, following prior work~\citep{chen2019robust, kantchelian}.
Intuitively, we want to measure the perturbation distance to the nearest adversarial example (as opposed to fixing $r$ and evaluating error). Formally, the {\em empirical robustness} for attack $A$ on $f$ at input $\x$ is $\emprob(A,f,\x) := \|\x-\wt \x_A\|_\infty$, where $A$ outputs $\wt \x_A$ as the adversarial example for $f$ at $\x$.
Observe that larger empirical robustness means worse attacks, and  the minimal
empirical robustness of $f$ at $\x$ is the robustness radius $\rho(f, \x)$.
To fairly compare classifiers having different accuracies, we actually
compute $\emprob(A, f, S, t)$ over $t$ test inputs.
To do so, we draw $t$ random samples $S_t$ from $S$ that are classified
correctly by $f$, and we report the average of $\emprob(A, f, \x)$ over $\x
\in S_t$. We set $t=100$ to balance efficiency and thoroughness.

Again, for defenses, we use perturbation distance to evaluate robustness.
Each defense method $D$ produces a classifier $f_D$. We evaluate a defense $D$ by assigning it
a score, the \defenderscore. The {\em \defenderscore} with respect to an attack $A$, a
test set $S$ and test size $t$ is the ratio 
\[\mbox{\defenderscore}(D, A, f, S, t) = \frac{\emprob(A, f_D, S, t)}{\emprob(A,
f, S, t)},\] where $f$ is the undefended classifier. A larger \defenderscore implies a better defense. 
Whenever feasible, we use the optimal attack while calculating the \defenderscore.

\textbf{Attack Algorithms.}  
For 1-NN and DT, we apply the exact region-based attack (RBA-Exact).
For 3-NN and RF, the RBA-Exact attack is computationally intensive, and we use the approximate region-based attack (RBA-Approx). For 3-NN, it uses $s'=50$ polyhedra, and for RF, it uses $s'=100$ polyhedra.
We compare RBA-Exact and RBA-Approx against several
baselines.  A general attack that applies to all methods is
the black-box attack (BBox)~\citep{cheng2019query};
this attack seems to be the state-of-the-art for
non-parametrics.
For $k$-NN, we compare against two white-box attacks, the direct attack (Direct) and kernel substitution
attack (Kernel)~\citep{papernot2016transferability}.
The direct attack perturbs the test instance towards the center of the $k$
nearest oppositely-labeled training examples.
The kernel substitution attack uses a soft nearest neighbor to build a
substitution model and applies the
projected gradient descent attack~\citep{Kurakin:2016}.
For DT, the RBA-Exact attack is optimal, and so is the attack by \citet{kantchelian};
we only report RBA-Exact because these achieve the same results.
We also evaluate the heuristic DT attack by \citet{papernot2016transferability}.
For RF, both optimal attacks are infeasible, and we only evaluate BBox and RBA-Approx.

\textbf{Defense Methods.}
For our defense, we train each classifier on the dataset pre-processed with adversarial pruning~(AP);
we use $\ell_\infty$ to determine examples to prune.
For the separation $r$ of AP, we found that $r=0.3$ balances robustness vs. accuracy.
We set $r = 0.3$ for all datasets (Appendix~\ref{app:defense} has other $r$ settings).
A generic baseline is adversarial training (AT), where the training data is
augmented with examples generated by the corresponding attack algorithm.
AT has been reported to be ineffective for 1-NN and boosted decision tree
\citep{WJC18,chen2019robust}, but we include it for completeness.
For AT, we retrain the classifier after attacking each training point once; we augment the training data with adversarial examples that are distance at most $0.3$ from the original input.
The parameter $0.3$ matches the parameter $r$ for AP.
For $1$-NN, an available baseline defense is \citet{WJC18}, but for general $k$-NN, we are not aware of other defenses.  For DT
and RF, we compare against the best known defense algorithm, Robust Splitting
(RS)~\citep{chen2019robust}.
We set the RS parameter to $0.3$ as well. 

\begin{table*}[t]
	\small
	\centering
	\setlength{\tabcolsep}{6.0pt}
	\begin{tabular}{l|ccc|cc|ccc|ccc}
        \toprule
                &\multicolumn{3}{c}{1-NN} &\multicolumn{2}{c}{3-NN} &\multicolumn{3}{c}{DT} &\multicolumn{3}{c}{RF} \\
                & AT & WJC & AP & AT & AP & AT & RS & AP & AT & RS & AP \\
        \midrule
        austr. &                    $0.64$ &           $\mathbf{1.65}$ &           $\mathbf{1.65}$ &                   $0.68$ &               $\mathbf{1.20}$ &                  $2.36$ &            $\mathbf{5.86}$ &                         $2.37$ &                     $1.07$ &            $\mathbf{1.12}$ &                     $1.04$ \\
        cancer &                    $0.82$ &                    $1.05$ &           $\mathbf{1.41}$ &                   $1.06$ &               $\mathbf{1.39}$ &                  $0.85$ &                     $1.09$ &                $\mathbf{1.19}$ &                     $0.87$ &            $\mathbf{1.54}$ &                     $1.26$ \\
        covtype &                    $0.61$ &           $\mathbf{4.38}$ &           $\mathbf{4.38}$ &                   $0.88$ &               $\mathbf{3.31}$ &                  $1.47$ &                     $2.73$ &                $\mathbf{4.51}$ &                     $1.02$ &                     $1.01$ &            $\mathbf{2.13}$ \\
        diabetes &                    $0.83$ &           $\mathbf{4.69}$ &           $\mathbf{4.69}$ &                   $0.87$ &               $\mathbf{2.97}$ &                  $0.93$ &                     $1.53$ &                $\mathbf{2.22}$ &                     $1.19$ &                     $1.25$ &            $\mathbf{2.22}$ \\
        f-mnist06 &                   $0.90$ &                    $1.93$ &           $\mathbf{2.59}$ &                  $0.88$ &               $\mathbf{1.75}$ &                  $1.33$ &                     $2.33$ &                $\mathbf{2.57}$ &                     $1.04$ &                     $1.10$ &            $\mathbf{1.77}$ \\
       f-mnist35 &                   $0.83$ &                    $1.05$ &           $\mathbf{1.19}$ &                   $0.83$ &               $\mathbf{1.15}$ &                  $0.97$ &            $\mathbf{3.03}$ &                         $2.06$ &                     $0.99$ &                     $1.23$ &            $\mathbf{1.41}$ \\
        fourclass &                    $0.93$ &           $\mathbf{3.09}$ &           $\mathbf{3.09}$ &                   $0.89$ &               $\mathbf{3.09}$ &                  $1.06$ &                     $1.23$ &                $\mathbf{3.04}$ &                     $1.03$ &                     $1.92$ &            $\mathbf{3.59}$ \\
        halfmoon &                    $1.05$ &                    $2.00$ &           $\mathbf{2.78}$ &                   $0.93$ &               $\mathbf{1.92}$ &                  $1.54$ &                     $1.98$ &                $\mathbf{2.58}$ &                     $1.04$ &                     $1.01$ &            $\mathbf{1.82}$ \\
        mnist-17 &                   $0.88$ &                    $1.06$ &           $\mathbf{1.39}$ &                   $0.80$ &               $\mathbf{1.13}$ &                  $1.11$ &            $\mathbf{3.97}$ &                         $1.32$ &                     $0.88$ &                     $0.92$ &            $\mathbf{1.26}$ \\
        \bottomrule
\end{tabular}
	\caption{
		\defenderscore using
		different defenses for four different classifiers {\bf (higher is better; best is in bold)}. The
		\defenderscore for undefended classifiers is 1.00 (greater than 1.00 is more robust). We use RBA-Exact for
        $1$-NN and DT, and RBA-Approx for $3$-NN and RF. We use RBA-Approx for AT on large datasets. 
	}
	\label{table:compare_defense_avg_pert-main}
\end{table*}

\subsection{Results} 

We separately evaluate attacks and defenses, in Table~\ref{table:compare_attacks_avg_pert-main} and Table~\ref{table:compare_defense_avg_pert-main}, respectively. We also provide an accuracy vs. perturbation distance experiment in Figure~\ref{fig:compare_defenses-main}.

\textbf{Effectiveness of Attacks.}
Table \ref{table:compare_attacks_avg_pert-main} exhibits empirical robustness across
four undefended classifiers and nine datasets.
Recall that a smaller empirical robustness implies a more effective attack.
For 1-NN, we see that RBA-Exact works as expected, achieving the smallest
empirical robustness.
For 3-NN, our RBA-Approx attack is more effective than prior attacks, with a
much lower empirical robustness.
This indicates that RBA-Approx can be a strong attack for $k >1$, where
previously no consistently effective baseline is known.
For DT, RBA-Exact again has the best performance.
The improvement in many cases shows that the optimal attack for 1-NN and DT
can be significantly better than heuristics, which will lead to a more
informative defense evaluation.
For RF, RBA-Approx wins on five of the nine datasets, and BBox wins on four.
Overall, our RBA-Approx attack is competitive with the state-of-the-art attack for RF, and better for $3$-NN.

\begin{figure*}[t]
	\centering
	\subfloat[1-NN]{
		\includegraphics[width=.24\textwidth]{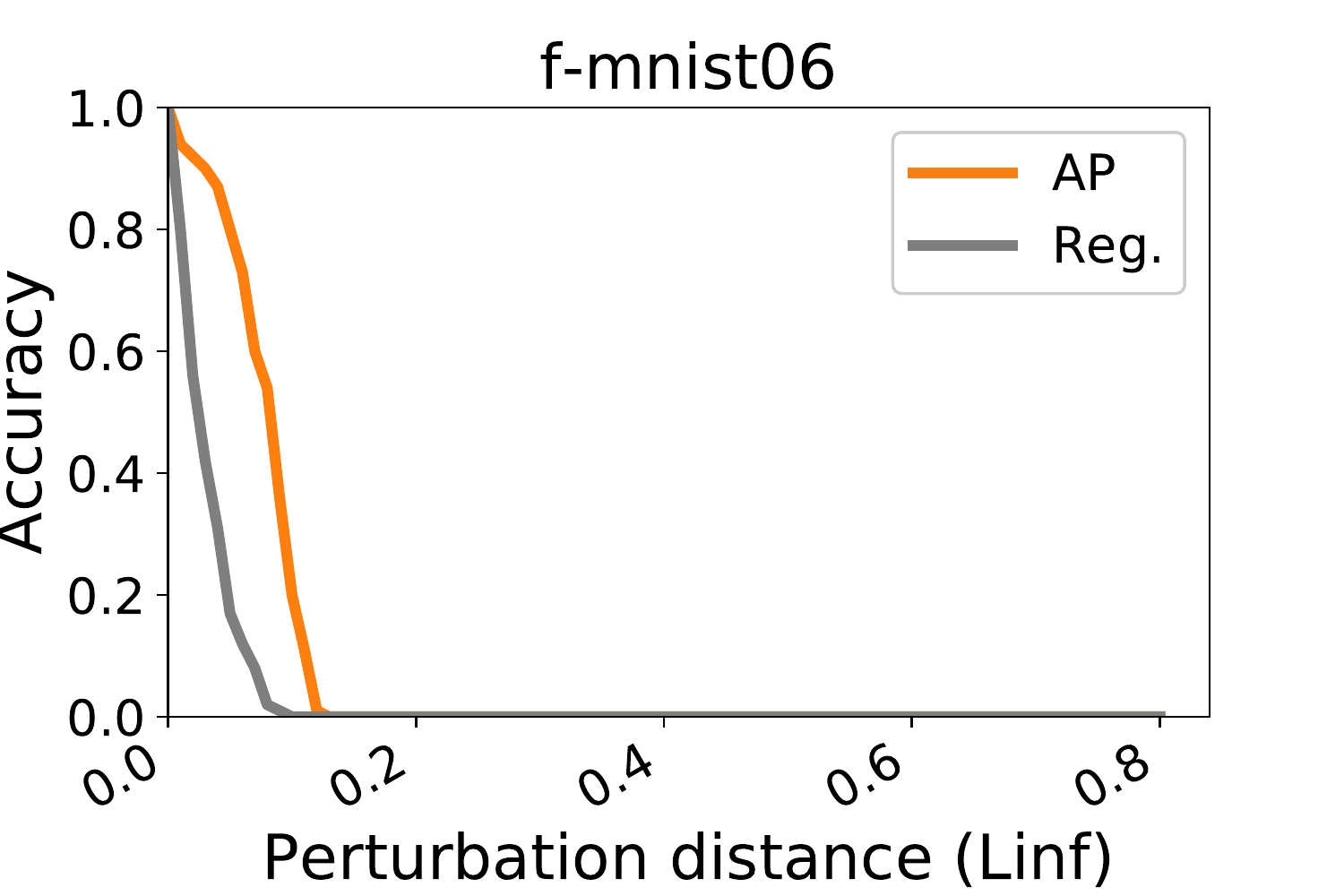}}
	\subfloat[3-NN]{
		\includegraphics[width=.24\textwidth]{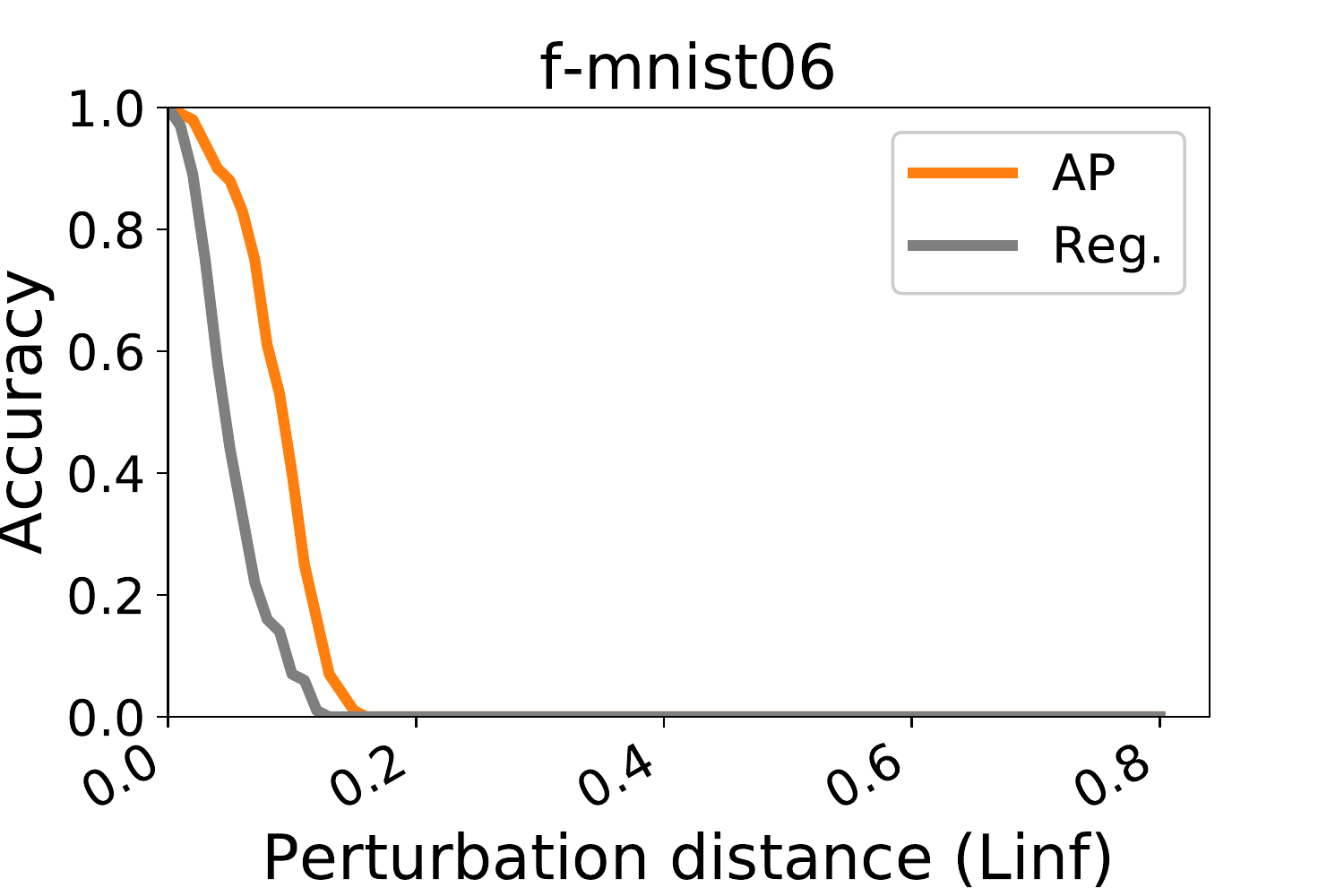}}
	\subfloat[Decision tree]{
		\includegraphics[width=.24\textwidth]{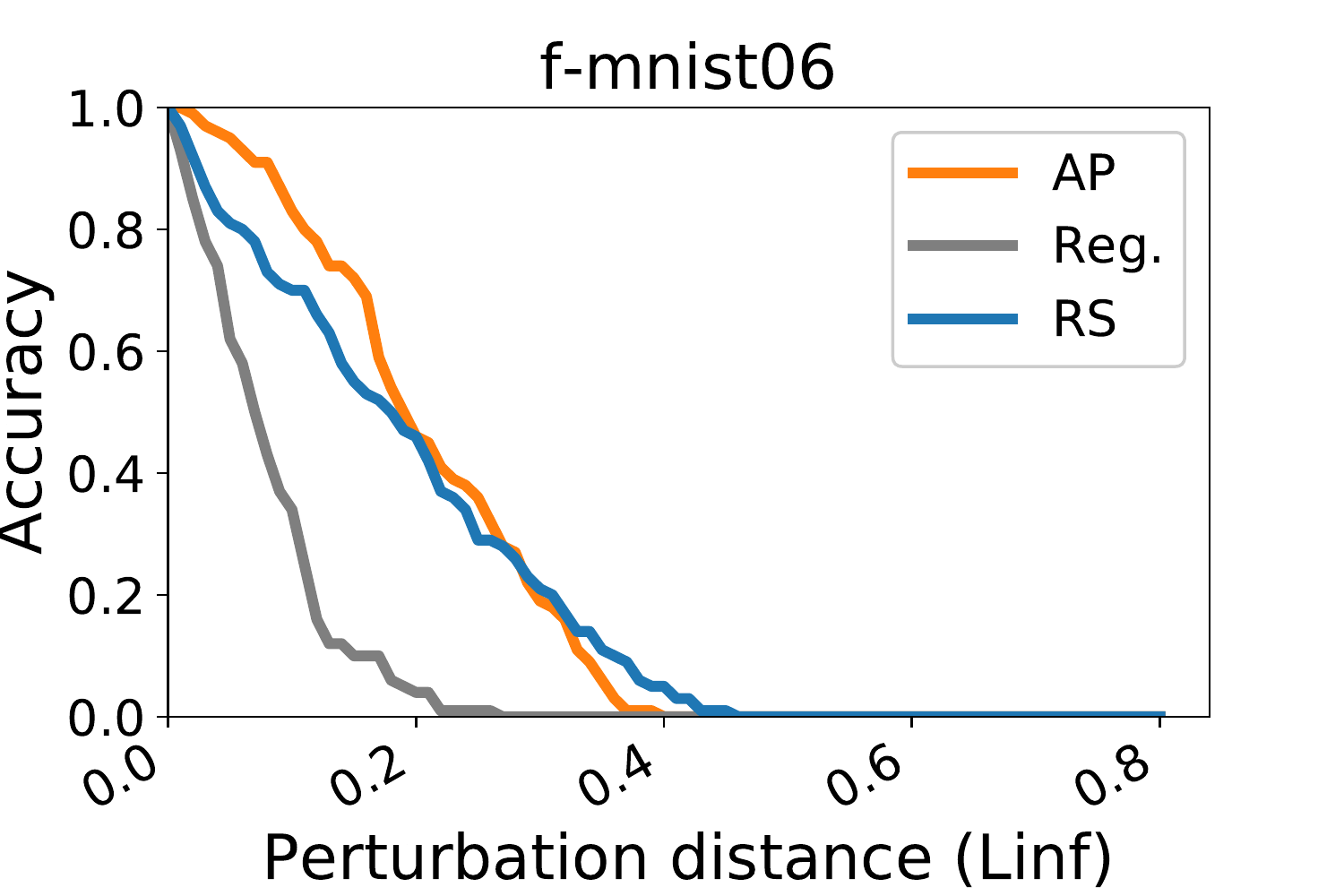}}
	\subfloat[Random forest]{
		\includegraphics[width=.24\textwidth]{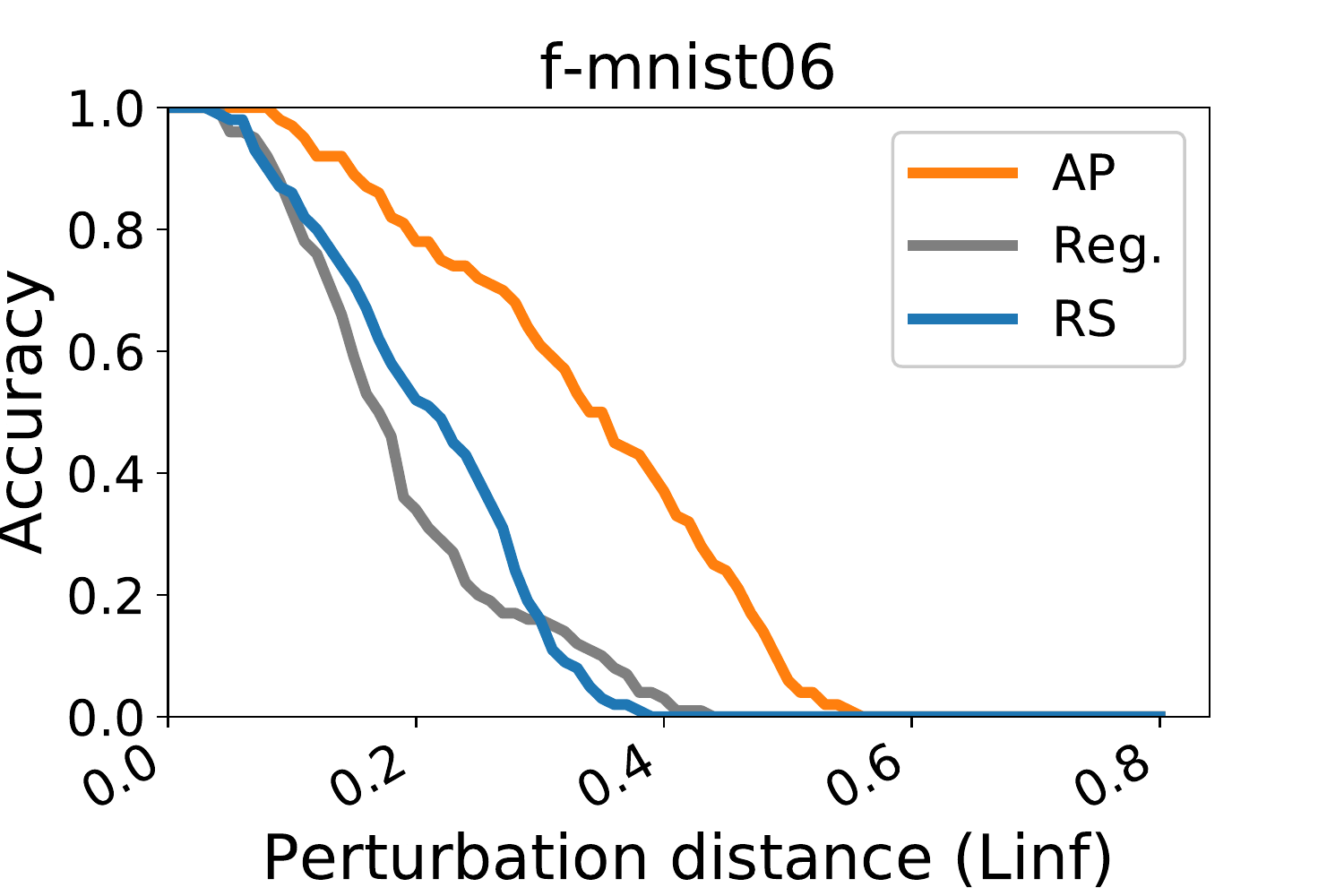}}
	
		\subfloat[1-NN]{
			\includegraphics[width=0.24\textwidth]{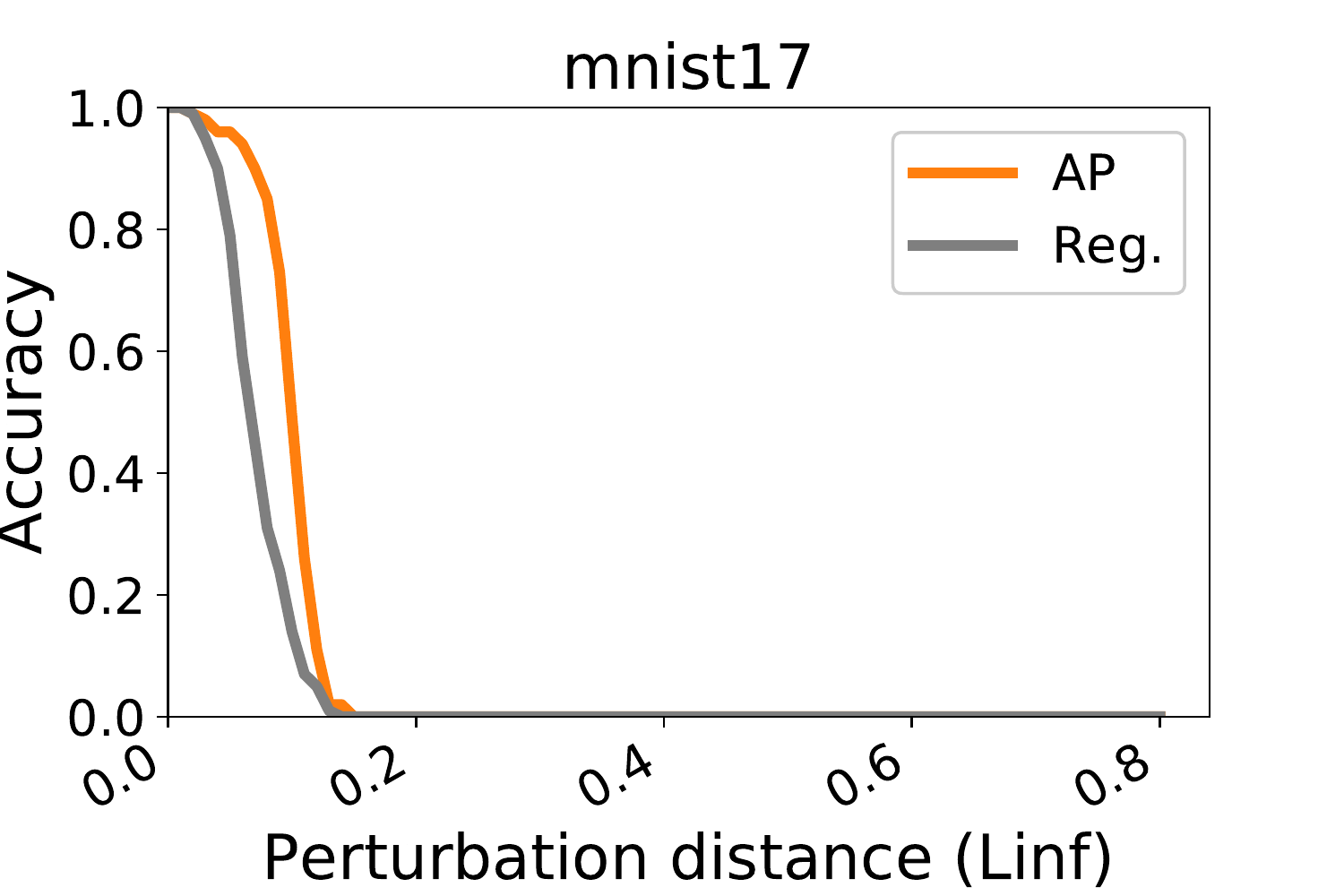}}
		\subfloat[3-NN]{
			\includegraphics[width=0.24\textwidth]{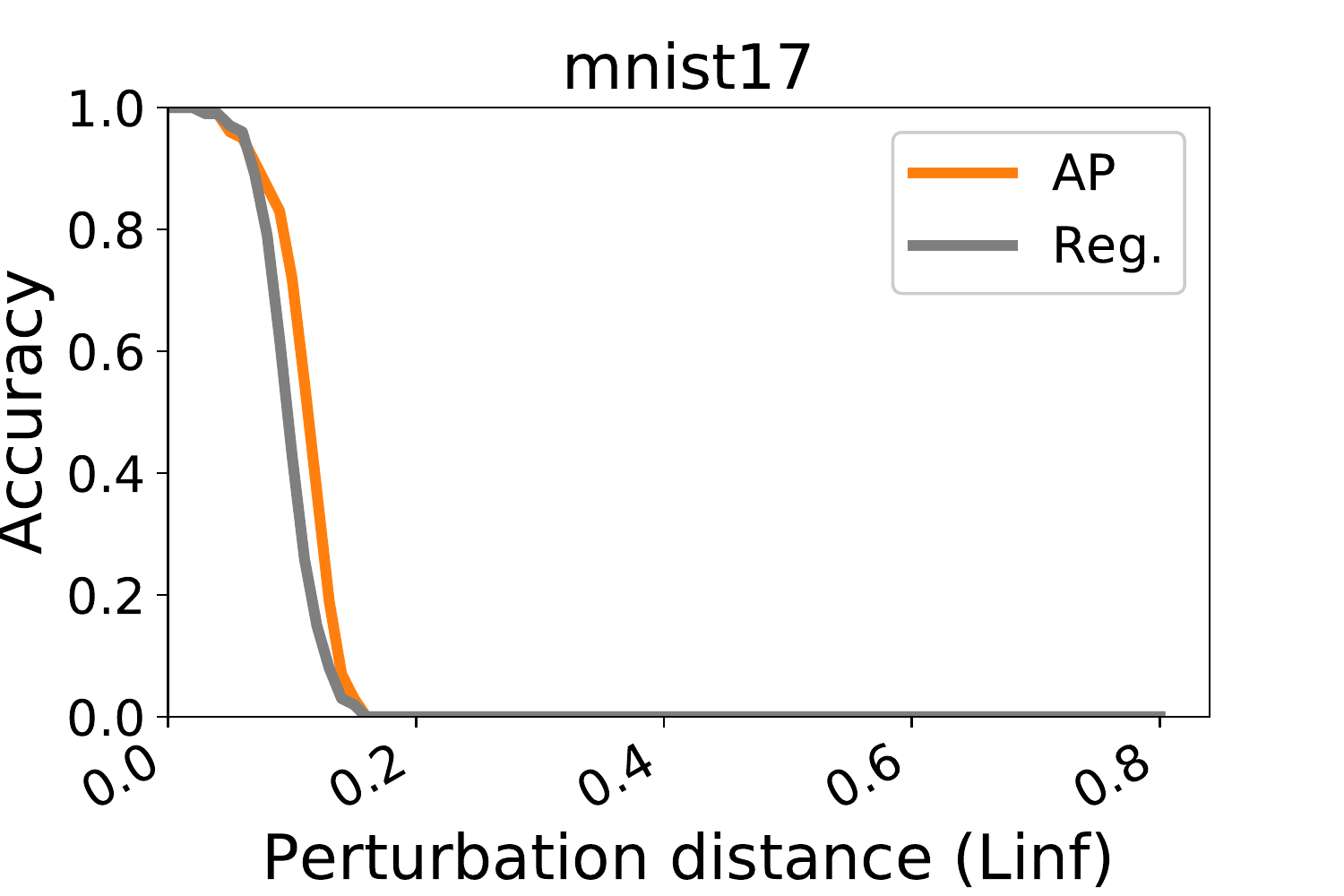}}
		\subfloat[Decision tree]{
			\includegraphics[width=0.24\textwidth]{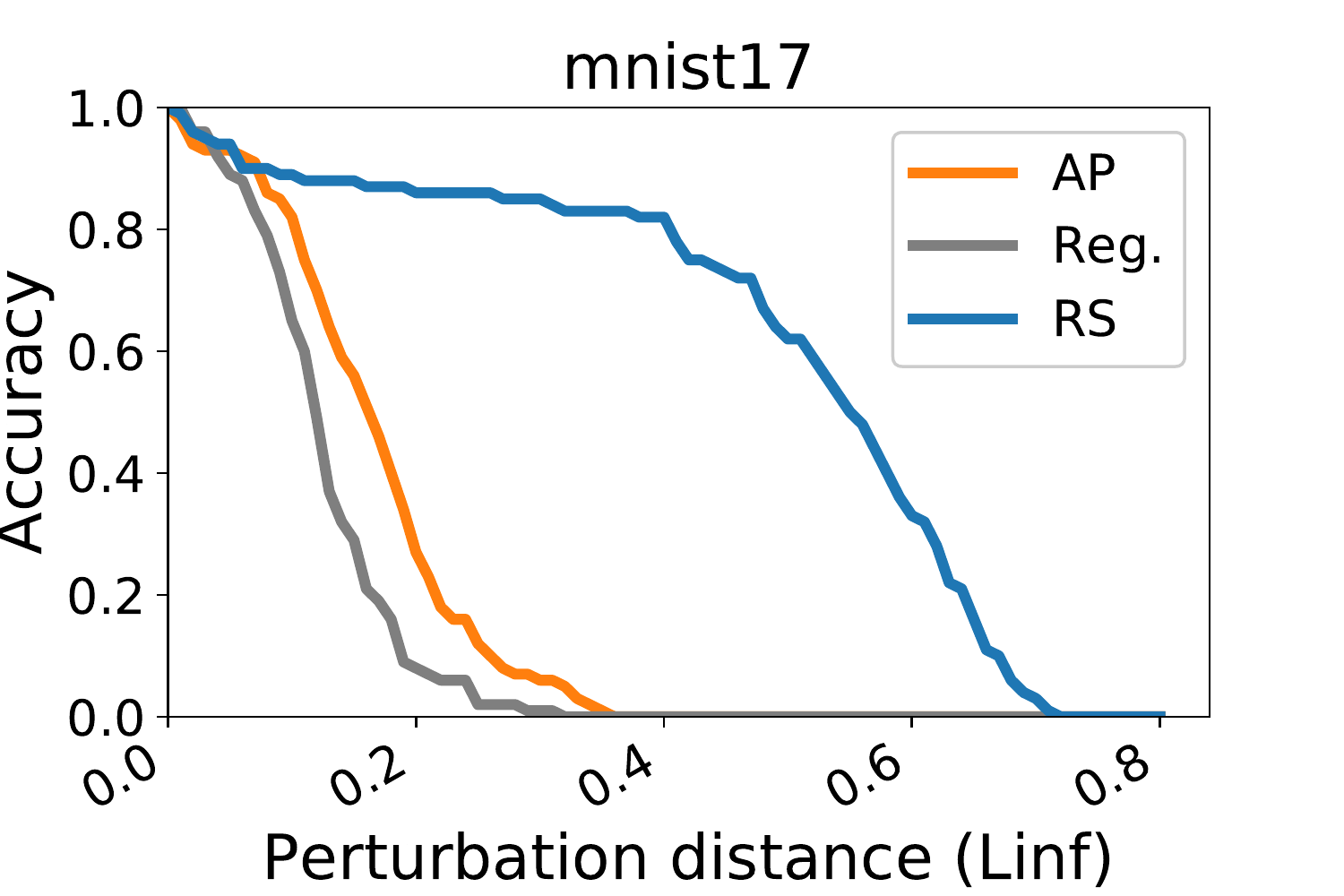}}
		\subfloat[Random forest]{
			\includegraphics[width=0.24\textwidth]{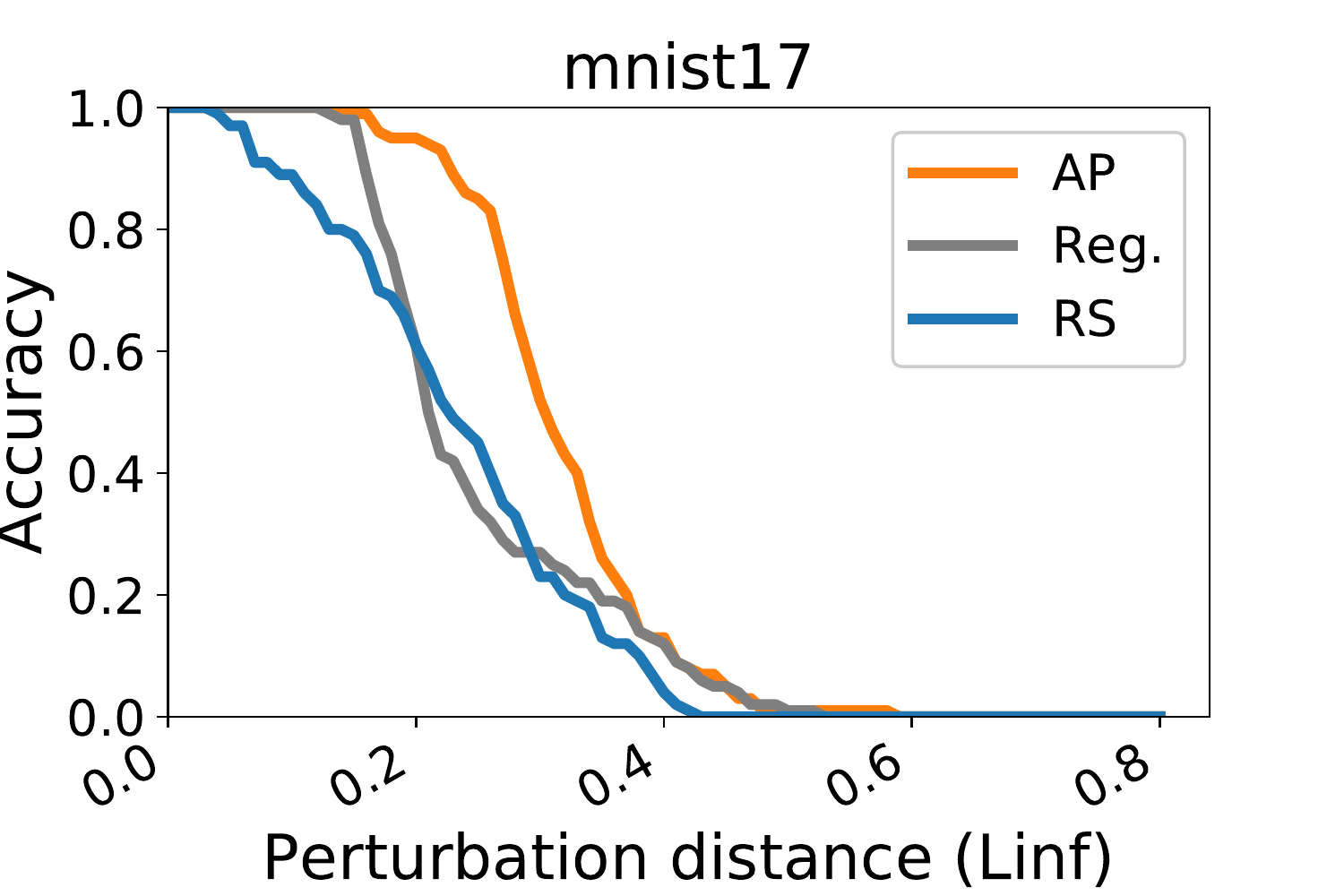}}
	\caption{Accuracy (y-axis) vs. perturbation distance (x-axis) for four classifiers on
		Fashion MNIST classes 0 vs. 6 (top row, subfigures (a)-(d)) and MNIST classes 1 vs. 7 (bottom row, subfigures (e)-(h)). We used the $\ell_\infty$ distance after applying PCA to 25 dimensions  {\bf(larger accuracy is better)}.
		Other datasets appear in Appendix~\ref{app:defense-figs}.
		In the legend, 
		Reg. $=$ regular (undefended) classifier,
		AP $=$ adversarial pruning, and
        RS $=$ robust splitting.
	}
	\label{fig:compare_defenses-main}
\end{figure*} 

\textbf{Effectiveness of Defenses.} Table
\ref{table:compare_defense_avg_pert-main} shows \defenderscore across four classifiers and several defense methods.
For each dataset, the AP defense trains all four classifiers on the same
pruned version of the dataset.
For all classifiers, we see that AP results in a greater than one
\defenderscore, indicating that classifiers trained with AP are more robust.
In contrast, AT usually achieves \defenderscore less than one, worse than the
undefended classifier; this corroborates previous results~\citep{WJC18}.
For $1$-NN, observe that AP is slightly better than the defense of \citet{WJC18}.
We believe that this is because their method converges to Bayes Optimal,
while AP approximates the $r$-Optimal classifier.
For the DT and RF experiments, we see that RS and AP perform competitively, each winning out on some datasets.
Overall, AP performs slightly better than RS.
We remark that we have evaluated 1-NN and DT against the optimal attack.
This provides concrete evidence that AP leads to a more robust
classifier.

\subsection{Discussion} 
From the results, we see that our generic attack and defense either outperform or perform competitively with prior work on many datasets. We note that there can be a big difference in the perturbation distance depending on the attack algorithms. We also see that
our adversarial pruning achieves more robustness compared both to undefended variants and to the classifiers trained using adversarial training. Surprisingly, the pruned subset is computed ahead of time, yet it improves the robustness of many different classifiers. 

The main conclusion from the experiments is that our work provides a new and suitable baseline for many methods. This is analogous to how AT and PGD are generic baselines for parametrics. In particular, if a new non-parametric algorithm is developed, then
AP and RBA may be used to evaluate robustness. Our work also opens to the door to combine AP with classifier-specific defenses, e.g. robust boosting~\citep{chen2019robust}. We note that our methods can sometimes be slow, but we expect that classier-specific optimizations and  techniques will readily improve the running time. 

\section{Related Work}\label{sec:related}

The bulk of previous research on robust classifiers has focused on parametric models, with many generic attacks~\citep{carliniwagner, bbox2,  papernot2017, papernot2015limitations,szegedy2013intriguing}, as well as several defenses~\citep{hein2017, barett, madry2017, distillation, raghunathan2018certified, duchi18, zhang2019theoretically}. In contrast, adversarial examples for non-parametric classifiers have been studied in a more case-by-case basis.

For tree ensembles, \citet{kantchelian} formulate an optimal attack as a Mixed Integer Linear Program (superseding an earlier attack~\citep{papernot2016transferability}) and prove NP-Hardness for many trees. \citet{chen2019robust} increase the robustness of {\em boosted} ensembles by introducing a more robust splitting criteria during training. Concurrent work also studies the robustness of decision stumps (i.e., random forests with depth-one trees), and we leave it as future work to compare our methods to theirs~\citep{andriushchenko2019provably}.

For $k$-NN, prior work on adversarial examples only considers suboptimal
attacks, such the direct attack and variants
thereof~\citep{amsaleg2017vulnerability, sitawarin2019robustness, WJC18}.
Concurrent work~\citep{khoury2019safeml} on Voronoi-based adversarial
training for neural networks also introduces the optimal attack for $1$-NN
(i.e., Region-Based attack restricted to 1-NN).  In terms of defenses,
\citet{WJC18} increase $1$-NN robustness by strategically removing training
points.  Besides only testing $1$-NN against suboptimal attacks, they do not
consider other non-parametrics; additionally, their
defense is shown to be robust in the large sample limit only where the Bayes
Optimal is robust.  Our methods are thus more general, and our defense can be
interpreted as a finite sample approximation to the $r$-Optimal classifier.

Outside the realm of adversarial examples, pruning has been used to improve the accuracy and generalization (but not robustness) of 1-NN~\citep{gates1972reduced, gottlieb2014near, hart1968condensed,  kontorovich2017nearest}. It would be interesting to revisit these works in the context of adversarial robustness, and in particular, in terms of the $r$-Optimal classifier.

Related attacks and defenses have been developed for ReLU
networks~\citep{croce2019provable, jordan2019provable,
tjeng2019evaluating, xiao2019training}. These results do not directly pertain
to non-parametrics, as ReLUs are fundamentally different. The geometric
attacks and defenses are similar in spirit to ours. Optimizations based on
the dual formulation may improve the efficiency of our
methods~\citep{tjeng2019evaluating, xiao2019training}. It would be
interesting to explore the relationship between our defense method
(adversarial pruning) and the ReLU defense methods and robustness
certificates.
For example, do robust ReLU networks approximate or converge to the $r$-Optimal classifier?

\section{Conclusion}\label{sec:conclusion}

We consider adversarial examples for non-parametric methods, with a focus on {\em generic} attacks and defenses. We provide a new attack, the region-based attack, which often outperforms previous attacks. We also provide a new method of defense, adversarial pruning, which should serve as a strong baseline for evaluating the robustness of many classifiers. 
On the theory side, we prove that the region-based attack outputs the optimal adversarial example. We also introduce and analyze a novel robust analogue to the Bayes Optimal. We prove that the $r$-Optimal classifier maximizes astuteness. On the experimental side, we demonstrate that our methods are better than or competitive with prior work, while being considerably more general.

{\bf Acknowledgments.}
We thank Somesh Jha, Ruslan Salakhutdinov and Michal Moshkovitz for helpful discussions. Part of this research is supported by ONR under N00014-16-1-261, UC Lab Fees under LFR 18-548554 and NSF under 1804829 and 1617157.

\newpage
\bibliographystyle{apalike}
\bibliography{nonparametricbib}
\clearpage

\appendix
\onecolumn

\section{Attack Algorithm: Theoretical Results and Omitted Proofs}
\label{app:attack}

In this section, we analyze the exact and approximate region-based attacks. To do so, we provide details about the decompositions for $k$-NN and tree ensemble classifiers. We also prove Theorem~\ref{thm:main} in general, and we give a corollary for the classifiers that we consider. Finally, we discuss our approximate attack, providing more details and an analysis.

Before getting into these details, we observe that our attack actually holds for the more general class of linear decision trees, which we now define.

\subsubsection*{Defining Linear Decision Trees}
A {\em linear decision tree} is a binary tree consisting of (i) internal nodes associated with affine functions and (ii) leaf nodes associated with labels in $\labels$. The value $f(\x)$ is determined by following the root to a leaf, going left or right depending on whether $\x$ satisfies or violates the linear constraint in the current node; then, $f(\x)$ is the label of the leaf. Such trees generalize (standard) decision trees, which restrict each constraint to depend on a single variable.

An {\em ensemble of linear decision trees} is collection of trees with the modification that the leaves are labeled with vectors in $\R^{C}$. The value $f(\x)$ is determined by a two-stage process. First, find the root-to-leaf path associated with each tree separately, resulting in a collection of vectors $\bfu^1,\ldots, \bfu^T \in \R^{C}$, where $T$ is the number of trees. Then, letting $\bfu = \bfu^1 + \cdots + \bfu^T$, the output $f(\x)$ equals the index of the largest coordinate $i \in \labels$ in the vector $\bfu$. Note that for binary labels, this is  equivalent to the definition of having scalar leaf labels and outputting the sign of the sum.

\subsection{Decompositions for Specific Classifiers}

We now describe the decompositions for tree ensembles and $k$-NN. Parameters for the decompositions will directly determine the running time of the optimal attack algorithm. 

\subsubsection*{Decomposition for Tree Ensembles}

\begin{lemma}\label{lem:tree-decomposition}
    If $f$ is an ensemble of $T$ linear decision trees, each with depth at most $D$ and with at most $L$ leaves, then $f$ is $(L^T, TD)$-decomposable.
\end{lemma}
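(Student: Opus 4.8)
The plan is to build the required decomposition as the common refinement of the partitions induced by the individual trees, handling one tree first and then combining $T$ of them. First I would fix a single linear decision tree $t$ of depth at most $D$ with at most $L$ leaves. Each leaf $\ell$ is reached by a unique root-to-leaf path, and along that path the tree tests at most $D$ affine functions, branching left or right at each internal node. The set $Q_\ell$ of inputs $\x$ that follow this path is precisely the intersection of the corresponding halfspaces (at most $D$ of them), so $Q_\ell$ is a convex polyhedron described by at most $D$ linear constraints. Since every $\x \in \R^d$ follows exactly one path, the regions $\{Q_\ell\}$ partition $\R^d$, and the leaf vector $\bfu^t$ that the tree outputs is constant on each $Q_\ell$. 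This shows a single tree is $(L, D)$-decomposable.

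Next, for the ensemble $f$ with trees indexed by $t = 1, \ldots, T$, I would index regions by a tuple $(\ell_1, \ldots, \ell_T)$ consisting of one leaf per tree and set $P_{(\ell_1,\ldots,\ell_T)} = \bigcap_{t=1}^T Q_{\ell_t}^{(t)}$, where $Q_{\ell_t}^{(t)}$ is the path region of tree $t$ from the single-tree step. There are at most $L^T$ such tuples, each $P_{(\ell_1,\ldots,\ell_T)}$ is an intersection of $T$ convex polyhedra and hence convex, and its constraint list is the concatenation of the per-tree lists, giving at most $TD$ linear constraints. These regions partition $\R^d$ because each factor partition does. On any nonempty $P_{(\ell_1,\ldots,\ell_T)}$ the path taken in every tree is fixed, so each $\bfu^t$ is constant, the sum $\bfu = \bfu^1 + \cdots + \bfu^T$ is constant, and therefore its argmax, which equals $f(\x)$, is constant. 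This produces an $(L^T, TD)$-decomposition, as claimed.

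I expect no deep obstacle here; the argument is essentially a convexity-and-counting bookkeeping exercise. The two points needing care are (i) confirming that intersecting $T$ polyhedra yields a polyhedron with at most $TD$ constraints, which follows immediately by unioning the constraint lists, and (ii) observing that many tuples $(\ell_1,\ldots,\ell_T)$ may give empty intersections, so the true statement is \emph{at most} $L^T$ nonempty regions. This is compatible with the definition of $(s,m)$-decomposability, since we may pad with empty cells (or relabel to drop them) without affecting the bound; either way $s \leq L^T$ and $m \leq TD$.
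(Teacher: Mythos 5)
Your proof is correct and follows essentially the same route as the paper's: decompose a single tree into leaf regions cut out by the at most $D$ halfspaces along each root-to-leaf path, then take the common refinement over all $T$ trees, giving at most $L^T$ regions each defined by at most $TD$ constraints. Your added remarks --- that $f$ is constant on each cell because the summed leaf vectors and hence the argmax are fixed, and that empty tuples can be dropped without harming the bounds --- are details the paper leaves implicit, but they do not change the argument.
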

\begin{proof} We first describe the decomposition for a single  tree, then generalize to an ensemble of trees. Let $\calT$ be a linear decision tree with depth $D$ leaves $(\ell_1, \ell_2,\ldots, \ell_m)$. The polyhedron $P_i$ will be the set of $\z$ that reach leaf $\ell_i$ in $\calT$. The hyperplane description for $P_i$ can be computed as follows. Each internal node $v$ from the root of $\calT$ to the leaf $\ell_i$ contains a linear constraint $a_v(\z) \leq b_v$. On the path to $\ell_i$, group all the violated (resp. satisfied) constraints $a_v, b_v$ as rows of the matrix $A^-$ and entries of the vector $\b^-$ (resp. $A^+$ and $\b^+$). Then, all $\z$ that reach $\ell_i$ are exactly the vectors that satisfy $A^- \z > \b^-$ and $A^+ \z \leq \b^+$. Therefore, these at most $D$ constraints determine $P_i$ precisely.
    
    Now, consider ensembles of $T$ trees with depth at most $D$ and at most $L$ leaves. The polyhedra correspond to combinations of one leaf from each tree.
    Each leaf contributes at most $D$ constraints, for at most $TD$ total constraints. There are at most $L^T$ choices for one leaf from each of $T$ trees.
\end{proof}

\subsubsection*{Decomposition for $k$-NN}

The decomposition for $k$-NN is a standard fact, known as the $\kth$ order Voronoi diagram, and it is a classical result in machine learning and computational geometry (see for example Chapter 12 in the book \citep{manning2010introduction}, or the survey \citep{aurenhammer1991voronoi}, or the paper \citep{mulmuley1991levels}). We sketch a proof for completeness.

\begin{lemma}\label{lem:knn-decomposition}
    If $f$ is a $k$-NN classifier for a dataset of size $n$, then $f$ is 
    $(\binom{n}{k}, k(n-k))$-decomposable.
\end{lemma}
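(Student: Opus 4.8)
The plan is to exhibit the $\kth$-order Voronoi diagram explicitly as the required decomposition. For each size-$k$ subset $K \subseteq \{1,\ldots,n\}$ of training indices I would define
\[
P_K := \{\z \in \R^d : \|\z - \x^p\| \leq \|\z - \x^q\| \text{ for all } p \in K,\ q \notin K\},
\]
the set of points whose $k$ nearest training examples are exactly those indexed by $K$. The first step is to observe that the collection $\{P_K\}$ covers $\R^d$ and partitions it away from the measure-zero boundaries where ties occur; these boundaries can be assigned to a single region by a fixed tie-breaking rule (or avoided under a general-position assumption). Since there are $\binom{n}{k}$ subsets of size $k$, this yields at most $\binom{n}{k}$ regions, matching the claimed value of $s$ (some $P_K$ may be empty, which only helps).

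The second step is to show each $P_K$ is a convex polyhedron cut out by at most $k(n-k)$ linear constraints. Here I would use that, for the Euclidean norm, $\|\z - \x^p\| \leq \|\z - \x^q\|$ is equivalent to $\|\z - \x^p\|^2 \leq \|\z - \x^q\|^2$, which expands to the affine inequality $2\z^\top(\x^q - \x^p) \leq \|\x^q\|^2 - \|\x^p\|^2$ in $\z$, since the quadratic term $\|\z\|^2$ cancels. The region $P_K$ is the intersection of one such half-space for each pair $(p,q)$ with $p \in K$ and $q \notin K$; there are exactly $k(n-k)$ such pairs, so $P_K$ is defined by at most $k(n-k)$ linear constraints and is convex as an intersection of half-spaces.

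The third step is to verify that $f$ is constant on each $P_K$. By construction every $\z \in P_K$ has the same set of $k$ nearest neighbors, namely the points indexed by $K$; since the $k$-NN prediction is the plurality label among these neighbors and the multiset of their labels depends only on $K$, the output $f(\z)$ is identical across all of $P_K$. Combining the three steps shows that $f$ is $\left(\binom{n}{k}, k(n-k)\right)$-decomposable.

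The main obstacle is the linearity of the bisectors used in the second step: the half-space description relies on the perpendicular bisector of two points being a hyperplane, which holds for the Euclidean metric defining the diagram but not for a general $\ell_p$ metric. I would stress that the decomposition into convex polyhedra is a statement about the Euclidean $\kth$-order Voronoi structure, whereas the attack in (\ref{eqn:opt-untargeted}) is free to minimize any $\ell_p$ objective over these fixed polyhedra, so the two choices of norm play distinct roles. A secondary technical point is the careful treatment of ties so that the $P_K$ form a genuine partition rather than merely a cover; this is routine and does not affect the parameters $s$ and $m$.
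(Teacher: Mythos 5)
Your proof is correct and follows essentially the same route as the paper: both construct the $\kth$-order Voronoi diagram, indexing regions by size-$k$ subsets and cutting each region out with the $k(n-k)$ bisecting hyperplanes between points inside and outside the subset, then noting $f$ is constant on each region. Your version adds some welcome detail the paper's sketch omits (the explicit affine expansion of the bisector constraint, tie-breaking, and the point that the Euclidean structure of the decomposition is independent of the $\ell_p$ norm minimized by the attack), but the underlying argument is the same.
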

\begin{proof}(Sketch). Let $\calS$ be the training dataset on $n$ points.
    We define $\binom{n}{k}$ convex polyhedra, one for each subset $U \subseteq \calS$ containing $|U|= k$ points. The polyhedron $P_U$ is the subset of $\R^d$ such that if $\z \in P_U$, then the $k$ nearest neighbors to $\z$ from the dataset $\calS$ in the $\ell_2$ distance are the $k$ points in~$U$. By definition, the $k$-NN classifier will be constant on each polyhedron $P_U$, as the output label is completely determined by the $k$ nearest neighbors for $\z$, which is the set $U$.
    
    We show that $P_U$ can be defined by $k(n-k)$ hyperplanes as follows. For each of the $k$ points $\x \in U$, we use the $(n-k)$ bisecting hyperplanes separating $\x$ from each of the $n-k$ points  not in~$U$ (that is, separating $\x$ from the points $\calS\setminus U$). This is a total of $k(n-k)$ linear constraints, and we define $P_U$ as the intersection of these $k(n-k)$ halfspaces. Clearly, $P_U$ is a convex polyhedron. 
    
    To see the nearest neighbor property, consider any $\z \in P_U$. For every $\x \in U$, the constraints defining $P_U$ include the $(n-k)$ bisecting hyperplanes that separate $\x$ from the $n-k$ points outside of $U$. In particular, $\z$ is closer to $\x$ than to these $n-k$ other points. To put this another way, $\z$ is in the Voronoi cell for $\x$ in the reduced dataset consisting only of $\x$ and the other $n-k$ points (that is, $\x \cup (\calS \setminus U)$). As this is true for each of the $k$ points in $U$, we have that $\z$ is closer to each of the $k$ points in $U$ than to the other $n-k$ points. Therefore, we conclude that $U$ consists of the $k$ nearest neighbors to $\z$.
\end{proof}

\subsection{Analyzing the Region-Based Attack}

We have just shown that $f$ is decomposable when it is the classifier determined by $k$-NN or a linear decision tree (or, more generally, an ensemble of linear decision trees). The consequence of this is that Theorem~\ref{thm:main} implies an efficient and optimal algorithm for a wide-range of non-parametric classifiers. We first discuss the specific convex programs, then finish the proof of the theorem.

\subsubsection*{Norms as Convex Objectives}\label{sec:objective}

Recall that if a classifier is $(s,m)$-decomposable,  then there exists $s$ polyhedra $P_1,\ldots, P_s$ such that each $P_i$ is the intersection of at most $m$ halfspaces. Moreover, the classifier is constant on each of these convex regions, predicting label $y_i$ at all points in $P_i$.

For an input $\x$, let $\calI_\x$ be the indices of polyhedra $P_i$ such that $f(\x) \neq y_i$. Then, the region-based attack optimizes over all polyhedra $P_i$ for $i \in \calI_\x$ by solving the inner minimization of Equation \eqref{eqn:opt-untargeted}, namely
\begin{equation}\label{eq:inner-min}
\underset{\z \in P_i}{\text{\ min\ }} \|\x - \z\|_p.
\end{equation}

Given that $P_i$ is a polyhedron, the constraint $\mathbf{z} \in P_i$ can be expressed using the $m$ linear constraints that define $P_i$. Then, the norm minimization can be expressed as a convex objective. In particular, the problem (\ref{eq:inner-min}) can be solved with a linear program for $p \in \{1,\infty\}$ or a
quadratic program for $p = 2$ using standard techniques~\citep{boyd}.
The following are the specific LP formulations for $p \in \{1,\infty\}$.

{\bf $\ell_\infty$ norm.}
Let $t \in \mathbb{R}$ be single variable.
When $p=\infty$, the problem in (\ref{eq:inner-min}) can be solved in $\R^d$ using the following linear program with $d+1$ variables and $m+2d$ linear constraints.
\begin{equation}
\label{eq:linf}
\begin{aligned}
& \underset{\mathbf{z}, t}{\text{minimize}}
& & t \\
& \text{subject to} 
& & \mathbf{z} \in P_i & \\
& & & (\mathbf{z} - \mathbf{x})_j \leq t & \forall\ j \in [d] \\
& & & (\mathbf{z} - \mathbf{x})_j \geq -t & \forall\ j \in [d] \\
\end{aligned}
\end{equation}

{\bf $\ell_1$ norm.}
Let $\mathbf{t} \in \mathbb{R}^d$ be vector.
When $p=1$, the problem in (\ref{eq:inner-min}) can be solved in $\R^d$ using the following linear program with $2d$ variables and $m+2d$ linear constraints.
\begin{equation}
\label{eq:l1}
\begin{aligned}
& \underset{\mathbf{z}, \mathbf{t}}{\text{minimize}}
& & \mathbf{1}^T\mathbf{t} & \\
& \text{subject to} 
& & \mathbf{z} \in P_i & \\
& & & (\mathbf{z} - \mathbf{x})_j \leq \mathbf{t}_j &\forall\ j \in [d] \\
& & & (\mathbf{z} - \mathbf{x})_j \geq -\mathbf{t}_j &\forall\ j \in [d] \\
\end{aligned}
\end{equation}

\subsubsection*{Finishing the Analysis of the Exact Region-Based Attack}

\begin{proof}[Proof of Theorem~\ref{thm:main}]
    We first claim that the attack produces the optimal adversarial example when $f$ is any $(s,m)$-decomposable classifier. By assumption, there is a partition of $\R^d$ into polyhedra $P_1,\ldots, P_s$ such that $f$ is constant on each $P_i$ region. Let~$y_i$ be the label that $f$ gives to all points in $P_i$ for each $i \in [s]$. On input $\x$, the algorithm considers $i \in \calI_\x$, where $\calI_\x \subseteq [s]$ are the indices such that $f(\x) \neq y_i$. Thus, the point $\z^i \in P_i$ closest to $\x$ will have \[f(\z^i) = y_i \neq f(\x).\] Finally, the algorithm's output is
    \[
        \argmin_{\{\z^i \mid  i\in\calI_\x\}} \ \|\z^i - \x\|.
    \]
    As the regions $P_i$ partition $\R^d$, this is the closest point to $\x$ that receives a different label under $f$. 
    
    We now analyze the running time. For the $\ell_p$ distance, $p \in \{1,2,\infty\}$, finding each candidate point $\z^i$ requires solving a convex program with $O(m)$ constraints and $O(d)$ variables. This can be done in $\poly(d,m)$ time using standard optimization techniques (e.g., the interior point method). The number of convex programs is $|\calI_\x| \leq s$. Therefore, the total running time is at most $s \cdot \poly(d,m)$.
\end{proof}

\begin{remark}[Targeted Attack] So far, we have considered untargeted attacks, allowing adversarial examples to have any label other than $f(\x)$. An important variation is a {\em targeted attack}, which specifies a label $\ell \in [C]$, and the goal is to output a close point $\wt \x$ such that $f(\wt \x) = \ell$. We note that the region-based attack can be easily modified for this by only searching over $\calI_\x^\ell = \{i \in [s] \mid y_i = \ell \}$. This may significantly reduce the running time in practice, as $|\calI_\x^\ell|$ may be much smaller than $|\calI_\x|$.
\end{remark}

We specialize the above theorem to ensembles of linear decision trees and the $k$-NN classifier.

\begin{corollary}\label{cor:main}
    Let $n$ be the size of the training set. If $f : \R^d \to \labels$ is a classifier determined by $k$-NN with $k = O(1)$ or an ensemble of $O(1)$ linear decision trees with depth $\poly(n)$ and $\poly(n)$ total leaves, then the region-based attack outputs the optimal adversarial example in time $\poly(d,n)$.
\end{corollary}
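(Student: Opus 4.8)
The plan is to combine Theorem~\ref{thm:main} with the two decomposition lemmas established earlier in this section, namely Lemma~\ref{lem:knn-decomposition} for $k$-NN and Lemma~\ref{lem:tree-decomposition} for ensembles of linear decision trees. Since Theorem~\ref{thm:main} already guarantees that the region-based attack produces an optimal adversarial example in time $s \cdot \poly(m,d)$ for any $(s,m)$-decomposable classifier, the optimality of the output is inherited for free, and the only real work is to bound the decomposition parameters $s$ and $m$ under the stated hypotheses and verify they are polynomial in $n$.

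First, for the $k$-NN case, I would invoke Lemma~\ref{lem:knn-decomposition}, which shows that $f$ is $\left(\binom{n}{k}, k(n-k)\right)$-decomposable. With $k = O(1)$, the number of regions is $s = \binom{n}{k} = O(n^k) = \poly(n)$, and the number of constraints per region is $m = k(n-k) = O(n) = \poly(n)$. Substituting into the running time from Theorem~\ref{thm:main} yields $s \cdot \poly(m,d) = \poly(n) \cdot \poly(n,d) = \poly(n,d)$, as claimed.

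Second, for the tree ensemble case, I would invoke Lemma~\ref{lem:tree-decomposition}, which shows that an ensemble of $T$ trees, each of depth at most $D$ and with at most $L$ leaves, is $(L^T, TD)$-decomposable. Under the assumptions $T = O(1)$, $D = \poly(n)$, and $\poly(n)$ total leaves — so that each individual tree has $L = \poly(n)$ leaves — the number of regions is $s = L^T = \poly(n)$, since raising a polynomial to a constant power keeps it polynomial, and the number of constraints per region is $m = TD = \poly(n)$. Substituting into Theorem~\ref{thm:main} again gives running time $\poly(n,d)$.

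The one point to handle carefully — more a matter of bookkeeping than a genuine obstacle — is tracking exactly how the $O(1)$ hypotheses on $k$ and $T$ tame the otherwise exponential growth: the $\binom{n}{k}$ and $L^T$ terms stay polynomial \emph{precisely because} the exponents $k$ and $T$ are constant. For non-constant $k$ or $T$ these counts become superpolynomial, consistent with the NP-hardness of the attack for general tree ensembles noted earlier in the paper. Beyond verifying that each of $s$ and $m$ remains $\poly(n)$, no further argument is needed.
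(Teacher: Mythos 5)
Your proposal is correct and follows exactly the paper's own proof: it invokes Lemma~\ref{lem:knn-decomposition} and Lemma~\ref{lem:tree-decomposition} to get the $\left(\binom{n}{k}, k(n-k)\right)$- and $(L^T, TD)$-decompositions, observes that the constant $k$ and $T$ keep $s$ and $m$ polynomial in $n$, and then plugs these bounds into the $s \cdot \poly(m,d)$ running time of Theorem~\ref{thm:main}. No gaps; the closing remark about NP-hardness for non-constant $T$ is a nice sanity check but not needed for the argument.
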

\begin{proof}
When $f$ is an ensemble of $T$ linear decision trees, each with depth $D$ and $L$ leaves, Lemma~\ref{lem:tree-decomposition} implies that $f$ is $(L^T,TD)$-decomposable. Assuming that $T$ is a constant and $L$ and $D$ are polynomial means that $f$ is $(\poly(n),\poly(n))$-decomposable. Applying Theorem~\ref{thm:main}, the running time of the exact region-based attack is thus $\poly(d,n)$.

When $f$ is the $k$-NN classifier, Lemma~\ref{lem:knn-decomposition} implies that $f$ is $(\binom{n}{k}, k(n-k))$-decomposable. Assuming that $k$ is a constant means that $f$ is $(\poly(n),O(n))$-decomposable. Applying Theorem~\ref{thm:main}, the running time of the exact region-based attack is thus $\poly(d,n)$.
\end{proof}

\section{More Experimental Details}
\label{app:exp}

The experiment is run on desktop with Intel - Core i7-9700K 3.6 GHz 8-Core Processor and 32 GB of RAM.

\subsection{Classifier Implementation Details}

The implementation for DT, RF and $k$-NN
are based on \texttt{scikit-learn} \citep{scikit-learn}.
For DT and RF, the splitting criterion is set to ``entropy''.
For computational efficiency, we fixed the maximum depth of DT and RF to be five.
For reproducibility, all other hyper-parameters are set to the default
parameter settings of the specific implementation.

\subsection{Attack and Defense Implementation Details}

For kernel substitution attack, we set the approximation parameter $c=1.0$ and attack
the substitution model with Projected Gradient Descent (PGD)~\citep{madry2017}.
For both Region-Based Attacks (RBA-Exact and RBA-Approx), the underlying LP solver that we use is Gurobi~\citep{gurobi}.
For kernel substitution attack, we use PGD
implemented in \texttt{Cleverhans} \citep{papernot2017cleverhans}.
The implementation of the black-box attack by \citep{cheng2019query}~(BBox) is provided by authors in their public
repository.\footnote{\url{https://github.com/cmhcbb/attackbox}}

For $k$-NN, we do not compare with the gradient-based
extension~\citep{sitawarin2019robustness} attack directly in
Section \ref{sec:experiments} since it is under a different setting.
Their algorithm only works if $k$-NN uses the cosine distance instead of
$\ell_2$ distance.

\subsection{Dataset Details}
\label{app:data_detail}

For each dataset, we reserve 200 examples for testing.
We evaluate the testing accuracy on these 200 examples.
To compute empirical robustness $\emprob(A, f_D, S, t)$ 
and $\mbox{\defenderscore}(D, A, f, S, t)$,
we randomly select 100 correctly predicted examples for each classifier.
For efficiency purposes, the feature dimension for fashion-mnist (f-mnist),
mnist is reduced to 25 using principle component analysis (PCA).
The original covtype is sub-sampled to 2200 examples.
mnist17 represents a subset of mnist dataset for the binary classification
problem distinguishing between 1 and 7.
Similarly, f-mnist35 is the task of distinguishing between 3rd and 5th class, and
f-mnist06 is the task of distinguishing between 0th and 6th class. 
The features are scaled to $[0, 1]$ so the solver will avoid numerical
rounding errors.

\begin{table}[h!]
  \tiny
  \centering
  \setlength{\tabcolsep}{2.0pt}
  \tiny
\centering
\setlength{\tabcolsep}{.5pt}
\begin{tabular}{lccccc}
\toprule
{} &  \parbox{10mm}{\centering \# train} &  \parbox{13mm}{\centering \# test \\ (perturb.)} &  \parbox{13mm}{\centering \# test \\ (accuracy)} &  \parbox{10mm}{features} &  classes \\
\midrule
austr.     &                                                490 &                                                100 &                                                200 &           14 &           2 \\
cancer     &                                                483 &                                                100 &                                                200 &           10 &           2 \\
covtype    &                                               2000 &                                                100 &                                                200 &           54 &           2 \\
diabetes   &                                                568 &                                                100 &                                                200 &            8 &           2 \\
f-mnist35  &                                              12000 &                                                100 &                                                200 &           25 &           2 \\
f-mnist06  &                                              12000 &                                                100 &                                                200 &           25 &           2 \\
fourclass  &                                                662 &                                                100 &                                                200 &            2 &           2 \\
halfmoon   &                                               2000 &                                                100 &                                                200 &            2 &           2 \\
mnist17    &                                              13007 &                                                100 &                                                200 &           25 &           2 \\
\bottomrule
\end{tabular}

  \caption{Dataset statistics.}
  \label{table:dataset-stats}
\end{table}

\subsection{Additional Experiment Results}\label{app:defense}

Table \ref{table:1nn_robustness}, \ref{table:3nn_robustness}, \ref{table:dt-robustness},
\ref{table:rf-robustness} show
additional experiment results with adversarial pruning~(AP) as defense.
In these tables, for AP with separation parameter $r=0.5$, we have some invalid values.
These values are caused by setting a too large value of $r$ which results int that
the adversarial pruned datasets to be highly unbalanced in label or even
making the dataset have a single label left.
If the training accuracy goes below $0.5$ or the prediction of the
classifier outputs only one label, we will put the value being "-" in the
table.
For diabetes with $3$-NN, its caused by $3$-NN only predicts one label.

Testing accuracy is a sanity check that we are not giving away all accuracy for robustness.
The higher the empirical robustness is means the classifier is more robust to the given attack.
When considering the strength of the attack, empirical robustness is lower the better.
When considering the strength of the defense, \defenderscore is higher the better.
For \defenderscore higher mean that after defense (AP), the classifier become more robust, thus higher the better.

\begin{table*}[h!]
\tiny
\centering
\setlength{\tabcolsep}{2.50pt}
\begin{tabular}{lccc|cccc|cccc|cccc}
\toprule
        & \multicolumn{3}{c}{1-NN} & \multicolumn{4}{c}{AP (separation parameter $r$=.1)} & \multicolumn{4}{c}{AP (separation parameter $r$=.3)} & \multicolumn{4}{c}{AP (separation parameter $r$=.5)} \\
        & \parbox{9mm}{\centering ER} & \parbox{8mm}{\centering test \\ accuracy} & \# train &      \parbox{9mm}{\centering ER} & \parbox{8mm}{\centering test \\ accuracy} & \# train & \defenderscore &      \parbox{9mm}{\centering ER} & \parbox{8mm}{\centering test \\ accuracy} & \# train & \defenderscore &      \parbox{9mm}{\centering ER} & \parbox{8mm}{\centering test \\ accuracy} & \# train & \defenderscore \\
\midrule
austr. &                      $.151$ &                                    $.805$ &    $490$ &                           $.162$ &                                    $.800$ &    $484$ &        $1.073$ &                           $.249$ &                                    $.820$ &    $458$ &        $1.649$ &                           $.311$ &                                    $.825$ &    $427$ &        $2.060$ \\
cancer &                      $.137$ &                                    $.950$ &    $483$ &                           $.137$ &                                    $.950$ &    $483$ &        $1.000$ &                           $.193$ &                                    $.950$ &    $473$ &        $1.409$ &                           $.261$ &                                    $.965$ &    $458$ &        $1.905$ \\
covtype &                      $.066$ &                                    $.725$ &   $2000$ &                           $.072$ &                                    $.700$ &   $1904$ &        $1.091$ &                           $.289$ &                                    $.685$ &   $1417$ &        $4.379$ &                           $.346$ &                                    $.675$ &   $1384$ &        $5.242$ \\
diabetes &                      $.035$ &                                    $.695$ &    $568$ &                           $.035$ &                                    $.700$ &    $535$ &        $1.000$ &                           $.164$ &                                    $.660$ &    $379$ &        $4.686$ &                           $.375$ &                                    $.660$ &    $370$ &       $10.714$ \\
f-mnist06 &                      $.029$ &                                    $.800$ &  $12000$ &                           $.031$ &                                    $.820$ &  $11509$ &        $1.069$ &                           $.075$ &                                    $.765$ &   $7348$ &        $2.586$ &                           - &                                    $.495$ &   $6000$ &         - \\
f-mnist35 &                      $.075$ &                                   $1.000$ &  $12000$ &                           $.075$ &                                   $1.000$ &  $11999$ &        $1.000$ &                           $.089$ &                                    $.980$ &  $10477$ &        $1.187$ &                           $.104$ &                                    $.945$ &   $8139$ &        $1.387$ \\
fourclass &                      $.090$ &                                   $1.000$ &    $662$ &                           $.107$ &                                    $.960$ &    $559$ &        $1.189$ &                           $.278$ &                                    $.750$ &    $453$ &        $3.089$ &                            - &                                    $.565$ &    $442$ &          - \\
halfmoon &                      $.058$ &                                    $.920$ &   $2000$ &                           $.151$ &                                    $.915$ &   $1702$ &        $2.603$ &                           $.161$ &                                    $.840$ &   $1144$ &        $2.776$ &                           - &                                    $.480$ &   $1004$ &         - \\
mnist17 &                      $.070$ &                                    $.975$ &  $13007$ &                           $.072$ &                                    $.975$ &  $13004$ &        $1.029$ &                           $.097$ &                                    $.965$ &  $11128$ &        $1.386$ &                           $.118$ &                                    $.810$ &   $6783$ &        $1.686$ \\
\bottomrule
\end{tabular}
\caption{
The number of training data left after adversarial pruning (AP), testing accuracy, empirical robustness,
and \defenderscore with different separation parameter of AP for 1-NN.
}
\label{table:1nn_robustness}
\end{table*}

\begin{table*}[h!]
\tiny
\centering
\setlength{\tabcolsep}{2.50pt}
\begin{tabular}{lccc|cccc|cccc|cccc}
\toprule
        & \multicolumn{3}{c}{3-NN} & \multicolumn{4}{c}{AP (separation parameter $r$=.1)} & \multicolumn{4}{c}{AP (separation parameter $r$=.3)} & \multicolumn{4}{c}{AP (separation parameter $r$=.5)} \\
        & \parbox{9mm}{\centering ER} & \parbox{8mm}{\centering test \\ accuracy} & \# train &      \parbox{9mm}{\centering ER} & \parbox{8mm}{\centering test \\ accuracy} & \# train & \defenderscore &      \parbox{9mm}{\centering ER} & \parbox{8mm}{\centering test \\ accuracy} & \# train & \defenderscore &      \parbox{9mm}{\centering ER} & \parbox{8mm}{\centering test \\ accuracy} & \# train & \defenderscore \\
\midrule
austr. &                      $.278$ &                                    $.805$ &    $490$ &                           $.317$ &                                    $.810$ &    $484$ &        $1.140$ &                           $.333$ &                                    $.815$ &    $458$ &        $1.198$ &                           $.371$ &                                    $.825$ &    $427$ &        $1.335$ \\
cancer &                      $.204$ &                                    $.975$ &    $483$ &                           $.204$ &                                    $.975$ &    $483$ &        $1.000$ &                           $.283$ &                                    $.960$ &    $473$ &        $1.387$ &                           $.350$ &                                    $.970$ &    $458$ &        $1.716$ \\
covtype &                      $.108$ &                                    $.750$ &   $2000$ &                           $.117$ &                                    $.735$ &   $1904$ &        $1.083$ &                           $.357$ &                                    $.685$ &   $1417$ &        $3.306$ &                           $.394$ &                                    $.680$ &   $1384$ &        $3.648$ \\
diabetes &                      $.078$ &                                    $.755$ &    $568$ &                           $.078$ &                                    $.750$ &    $535$ &        $1.000$ &                           $.232$ &                                    $.655$ &    $379$ &        $2.974$ &                            - &                                    $.660$ &    $370$ &          - \\
f-mnist06 &                      $.051$ &                                    $.795$ &  $12000$ &                           $.050$ &                                    $.825$ &  $11509$ &         $.980$ &                           $.089$ &                                    $.750$ &   $7348$ &        $1.745$ &                           - &                                    $.495$ &   $6000$ &         - \\
f-mnist35 &                      $.094$ &                                   $1.000$ &  $12000$ &                           $.093$ &                                   $1.000$ &  $11999$ &         $.989$ &                           $.108$ &                                    $.985$ &  $10477$ &        $1.149$ &                           $.121$ &                                    $.950$ &   $8139$ &        $1.287$ \\
fourclass &                      $.096$ &                                    $.995$ &    $662$ &                           $.127$ &                                    $.960$ &    $559$ &        $1.323$ &                           $.297$ &                                    $.750$ &    $453$ &        $3.094$ &                            - &                                    $.565$ &    $442$ &          - \\
halfmoon &                      $.096$ &                                    $.940$ &   $2000$ &                           $.159$ &                                    $.920$ &   $1702$ &        $1.656$ &                           $.184$ &                                    $.845$ &   $1144$ &        $1.917$ &                            - &                                    $.480$ &   $1004$ &          - \\
mnist17 &                      $.097$ &                                    $.985$ &  $13007$ &                           $.094$ &                                    $.985$ &  $13004$ &         $.969$ &                           $.110$ &                                    $.960$ &  $11128$ &        $1.134$ &                           $.141$ &                                    $.795$ &   $6783$ &        $1.454$ \\
\bottomrule
\end{tabular}
\caption{
The number of training data left after adversarial pruning (AP), testing accuracy, empirical robustness,
and \defenderscore with different separation parameter of AP for 3-NN.
}
\label{table:3nn_robustness}
\end{table*}

\begin{table*}[h!]
\tiny
\centering
\setlength{\tabcolsep}{2.50pt}
\begin{tabular}{lccc|cccc|cccc|cccc}
\toprule
        & \multicolumn{3}{c}{DT} & \multicolumn{4}{c}{AP (separation parameter $r$=.1)} & \multicolumn{4}{c}{AP (separation parameter $r$=.3)} & \multicolumn{4}{c}{AP (separation parameter $r$=.5)} \\
        & \parbox{9mm}{\centering ER} & \parbox{8mm}{\centering test \\ accuracy} & \# train &      \parbox{9mm}{\centering ER} & \parbox{8mm}{\centering test \\ accuracy} & \# train & \defenderscore &      \parbox{9mm}{\centering ER} & \parbox{8mm}{\centering test \\ accuracy} & \# train & \defenderscore &      \parbox{9mm}{\centering ER} & \parbox{8mm}{\centering test \\ accuracy} & \# train & \defenderscore \\
\midrule
austr. &                      $.070$ &                                    $.855$ &    $490$ &                           $.194$ &                                    $.835$ &    $484$ &        $2.771$ &                           $.166$ &                                    $.835$ &    $458$ &        $2.371$ &                           $.450$ &                                    $.835$ &    $427$ &        $6.429$ \\
cancer &                      $.255$ &                                    $.930$ &    $483$ &                           $.255$ &                                    $.930$ &    $483$ &        $1.000$ &                           $.303$ &                                    $.965$ &    $473$ &        $1.188$ &                           $.358$ &                                    $.960$ &    $458$ &        $1.404$ \\
covtype &                      $.051$ &                                    $.715$ &   $2000$ &                           $.051$ &                                    $.740$ &   $1904$ &        $1.000$ &                           $.230$ &                                    $.680$ &   $1417$ &        $4.510$ &                           $.221$ &                                    $.665$ &   $1384$ &        $4.333$ \\
diabetes &                      $.085$ &                                    $.715$ &    $568$ &                           $.085$ &                                    $.720$ &    $535$ &        $1.000$ &                           $.189$ &                                    $.670$ &    $379$ &        $2.224$ &                           $.378$ &                                    $.670$ &    $370$ &        $4.447$ \\
f-mnist06 &                      $.079$ &                                    $.805$ &  $12000$ &                           $.092$ &                                    $.825$ &  $11509$ &        $1.165$ &                           $.203$ &                                    $.770$ &   $7348$ &        $2.570$ &                           - &                                    $.495$ &   $6000$ &         - \\
f-mnist35 &                      $.115$ &                                    $.995$ &  $12000$ &                           $.110$ &                                    $.995$ &  $11999$ &         $.957$ &                           $.237$ &                                    $.940$ &  $10477$ &        $2.061$ &                           $.281$ &                                    $.925$ &   $8139$ &        $2.443$ \\
fourclass &                      $.137$ &                                    $.900$ &    $662$ &                           $.138$ &                                    $.910$ &    $559$ &        $1.007$ &                           $.416$ &                                    $.680$ &    $453$ &        $3.036$ &                            - &                                    $.565$ &    $442$ &          - \\
halfmoon &                      $.085$ &                                    $.950$ &   $2000$ &                           $.167$ &                                    $.895$ &   $1702$ &        $1.965$ &                           $.219$ &                                    $.670$ &   $1144$ &        $2.576$ &                            - &                                    $.480$ &   $1004$ &          - \\
mnist17 &                      $.123$ &                                    $.975$ &  $13007$ &                           $.126$ &                                    $.970$ &  $13004$ &        $1.024$ &                           $.162$ &                                    $.955$ &  $11128$ &        $1.317$ &                           $.316$ &                                    $.830$ &   $6783$ &        $2.569$ \\
\bottomrule
\end{tabular}
\caption{
The number of training data left after adversarial pruning (AP), testing accuracy, empirical robustness,
and \defenderscore with different separation parameter of AP for DT.
}
\label{table:dt-robustness}
\end{table*}

\begin{table*}[h!]
\tiny
\centering
\setlength{\tabcolsep}{2.50pt}
\begin{tabular}{lccc|cccc|cccc|cccc}
\toprule
        & \multicolumn{3}{c}{RF} & \multicolumn{4}{c}{AP (separation parameter $r$=.1)} & \multicolumn{4}{c}{AP (separation parameter $r$=.3)} & \multicolumn{4}{c}{AP (separation parameter $r$=.5)} \\
        & \parbox{9mm}{\centering ER} & \parbox{8mm}{\centering test \\ accuracy} & \# train &      \parbox{9mm}{\centering ER} & \parbox{8mm}{\centering test \\ accuracy} & \# train & \defenderscore &      \parbox{9mm}{\centering ER} & \parbox{8mm}{\centering test \\ accuracy} & \# train & \defenderscore &      \parbox{9mm}{\centering ER} & \parbox{8mm}{\centering test \\ accuracy} & \# train & \defenderscore \\
\midrule
austr. &                      $.446$ &                                    $.845$ &    $490$ &                           $.426$ &                                    $.855$ &    $484$ &         $.955$ &                           $.465$ &                                    $.840$ &    $458$ &        $1.043$ &                           $.496$ &                                    $.835$ &    $427$ &        $1.112$ \\
cancer &                      $.383$ &                                    $.970$ &    $483$ &                           $.383$ &                                    $.970$ &    $483$ &        $1.000$ &                           $.481$ &                                    $.965$ &    $473$ &        $1.256$ &                           $.496$ &                                    $.955$ &    $458$ &        $1.295$ \\
covtype &                      $.214$ &                                    $.750$ &   $2000$ &                           $.226$ &                                    $.700$ &   $1904$ &        $1.056$ &                           $.456$ &                                    $.680$ &   $1417$ &        $2.131$ &                           $.481$ &                                    $.695$ &   $1384$ &        $2.248$ \\
diabetes &                      $.184$ &                                    $.755$ &    $568$ &                           $.175$ &                                    $.740$ &    $535$ &         $.951$ &                           $.409$ &                                    $.660$ &    $379$ &        $2.223$ &                           $.710$ &                                    $.660$ &    $370$ &        $3.859$ \\
f-mnist06 &                      $.188$ &                                    $.790$ &  $12000$ &                           $.215$ &                                    $.785$ &  $11509$ &        $1.144$ &                           $.333$ &                                    $.755$ &   $7348$ &        $1.771$ &                           - &                                    $.495$ &   $6000$ &         - \\
f-mnist35 &                      $.246$ &                                   $1.000$ &  $12000$ &                           $.236$ &                                    $.995$ &  $11999$ &         $.959$ &                           $.346$ &                                    $.925$ &  $10477$ &        $1.407$ &                           $.289$ &                                    $.925$ &   $8139$ &        $1.175$ \\
fourclass &                      $.133$ &                                    $.980$ &    $662$ &                           $.181$ &                                    $.865$ &    $559$ &        $1.361$ &                           $.478$ &                                    $.665$ &    $453$ &        $3.594$ &                            - &                                    $.565$ &    $442$ &          - \\
halfmoon &                      $.149$ &                                    $.930$ &   $2000$ &                           $.198$ &                                    $.900$ &   $1702$ &        $1.329$ &                           $.271$ &                                    $.755$ &   $1144$ &        $1.819$ &                            - &                                    $.480$ &   $1004$ &          - \\
mnist17 &                      $.250$ &                                    $.970$ &  $13007$ &                           $.230$ &                                    $.965$ &  $13004$ &         $.920$ &                           $.314$ &                                    $.945$ &  $11128$ &        $1.256$ &                           $.359$ &                                    $.800$ &   $6783$ &        $1.436$ \\
\bottomrule
\end{tabular}
\caption{
The number of training data left after adversarial pruning (AP), testing accuracy, empirical robustness,
and \defenderscore with different separation parameter of AP for RF.
}
\label{table:rf-robustness}
\end{table*}

\subsubsection{Defense figures}\label{app:defense-figs}

Figures \ref{fig:defense-cmp} and \ref{fig:defense-cmp2} show the complete
experiment results for the experiment in Figure \ref{fig:compare_defenses-main}.
The accuracy (y-axis) is measured on the 100 correctly predicted
testing examples sampled initially.

\begin{figure*}[ht!]
\centering
\subfloat[1-NN]{
    \includegraphics[width=0.24\textwidth]{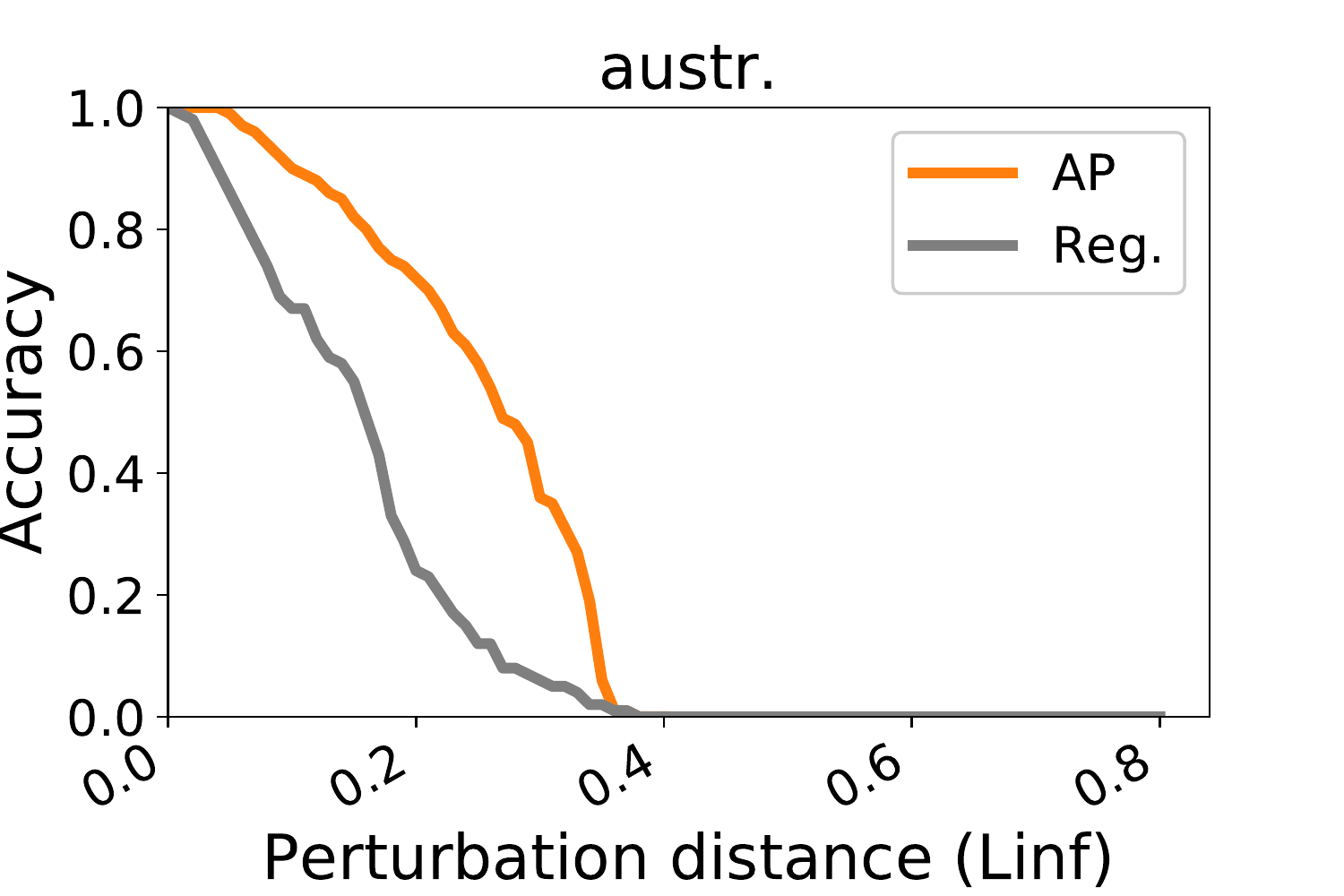}}
\subfloat[3-NN]{
    \includegraphics[width=0.24\textwidth]{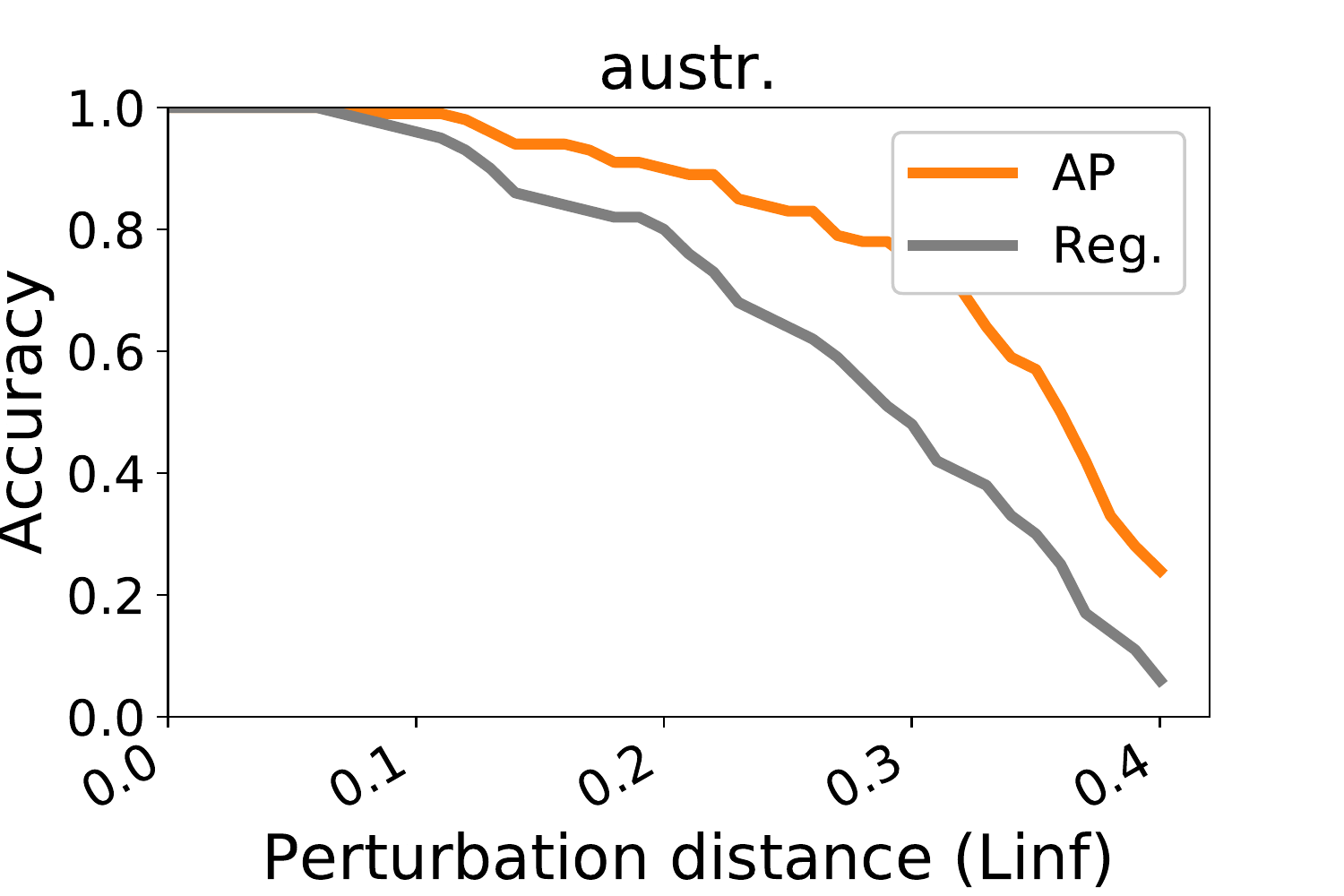}}
\subfloat[Decision tree]{
    \includegraphics[width=0.24\textwidth]{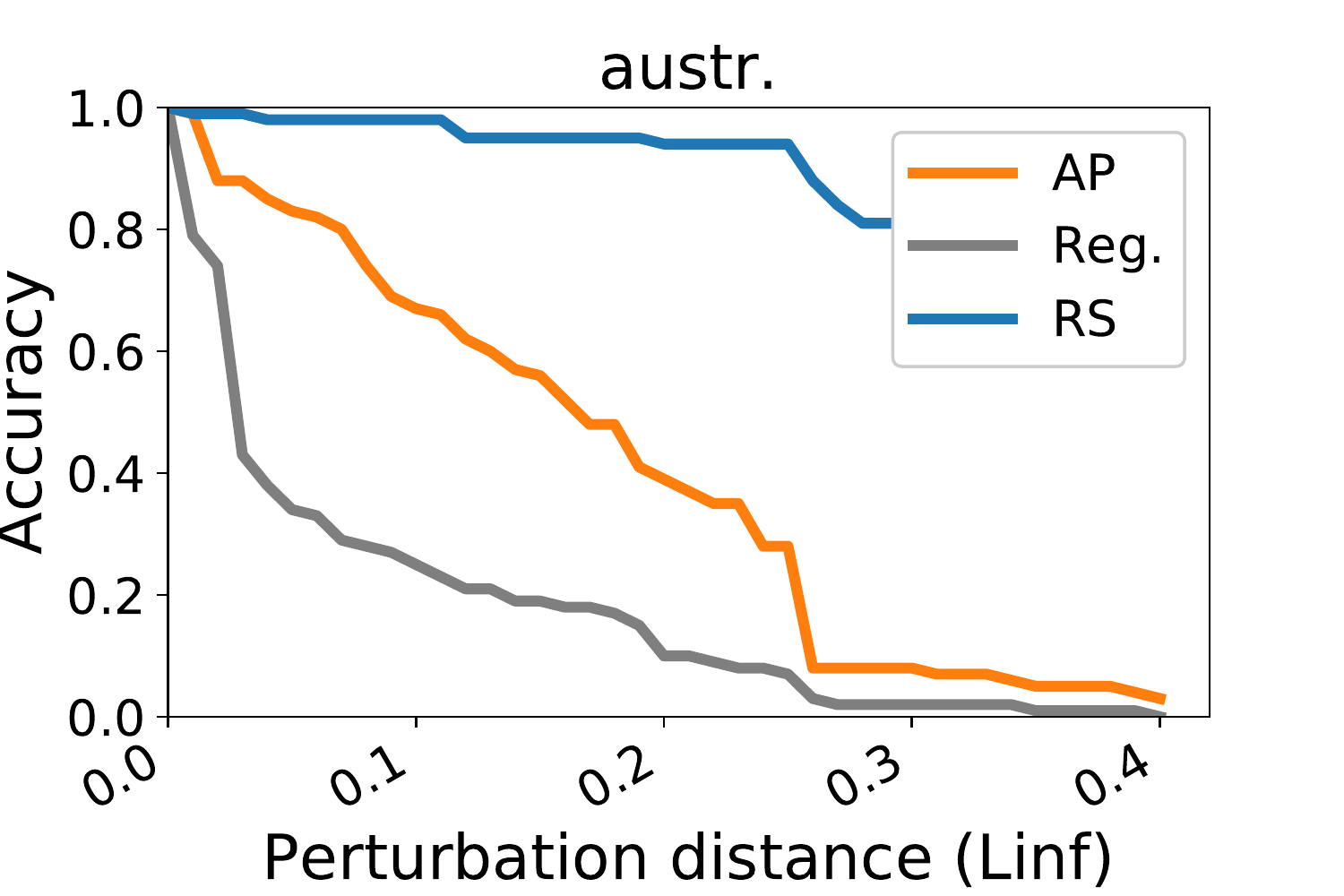}}
\subfloat[Random forest]{
    \includegraphics[width=0.24\textwidth]{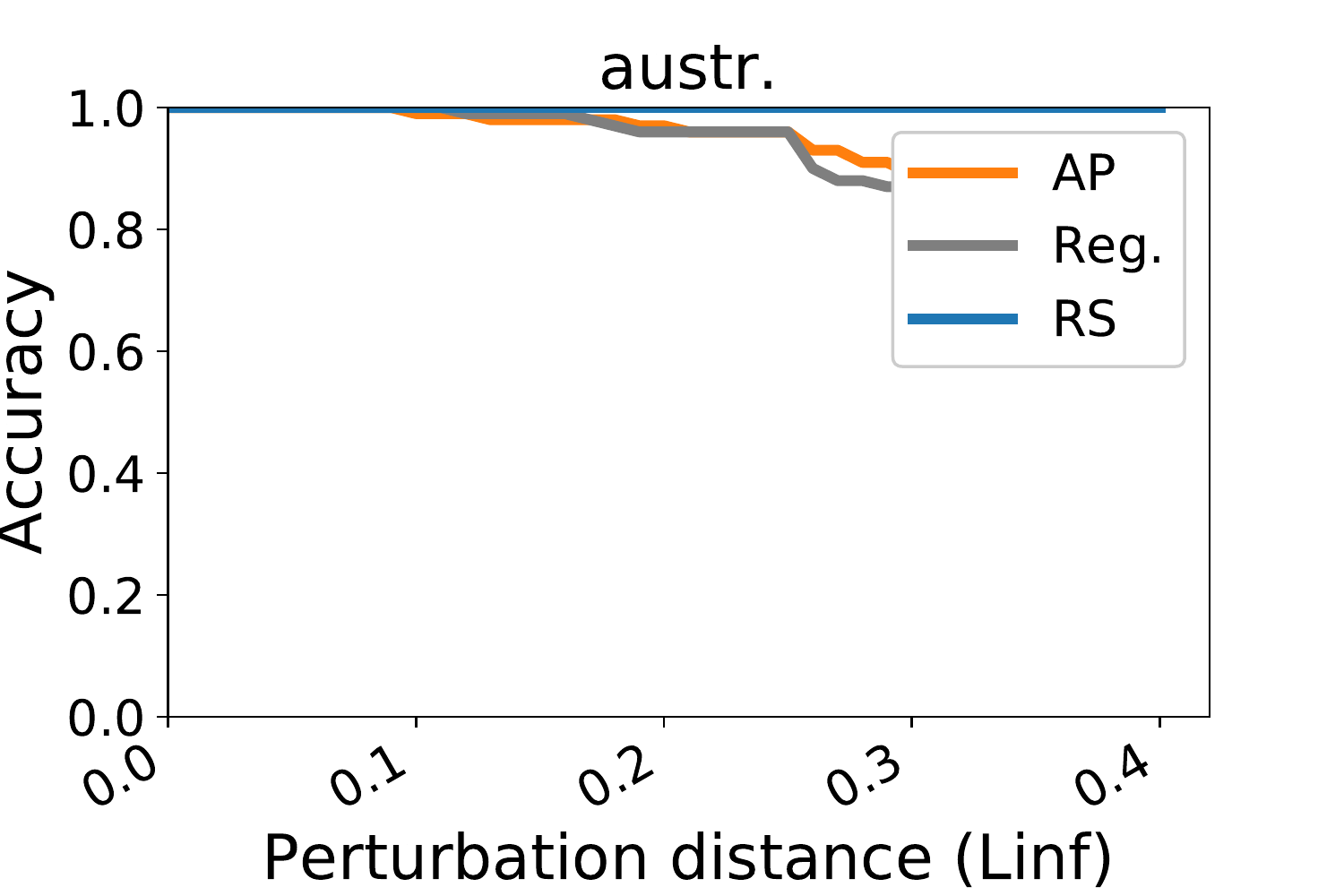}}

\subfloat[1-NN]{
    \includegraphics[width=0.24\textwidth]{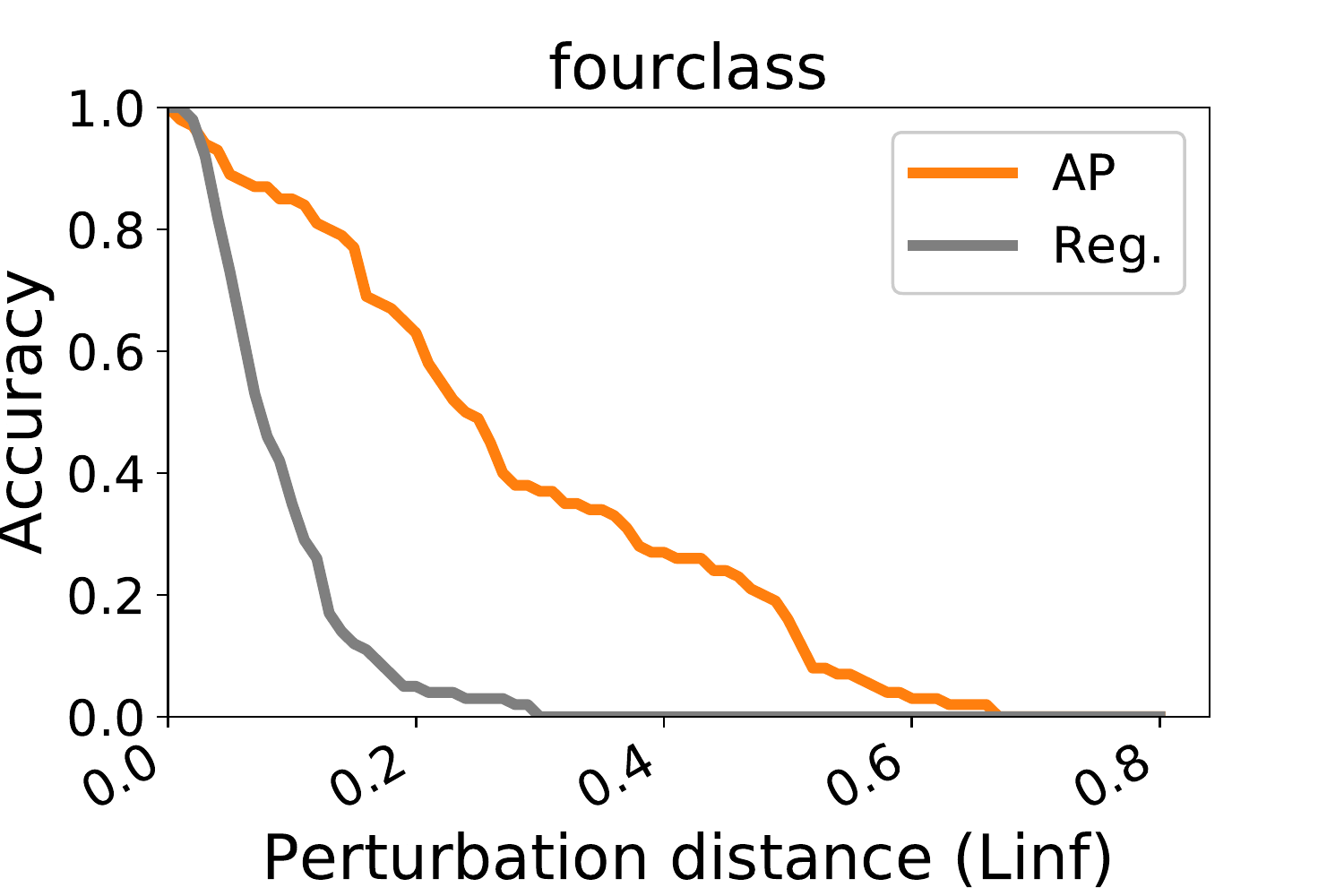}}
\subfloat[3-NN]{
    \includegraphics[width=0.24\textwidth]{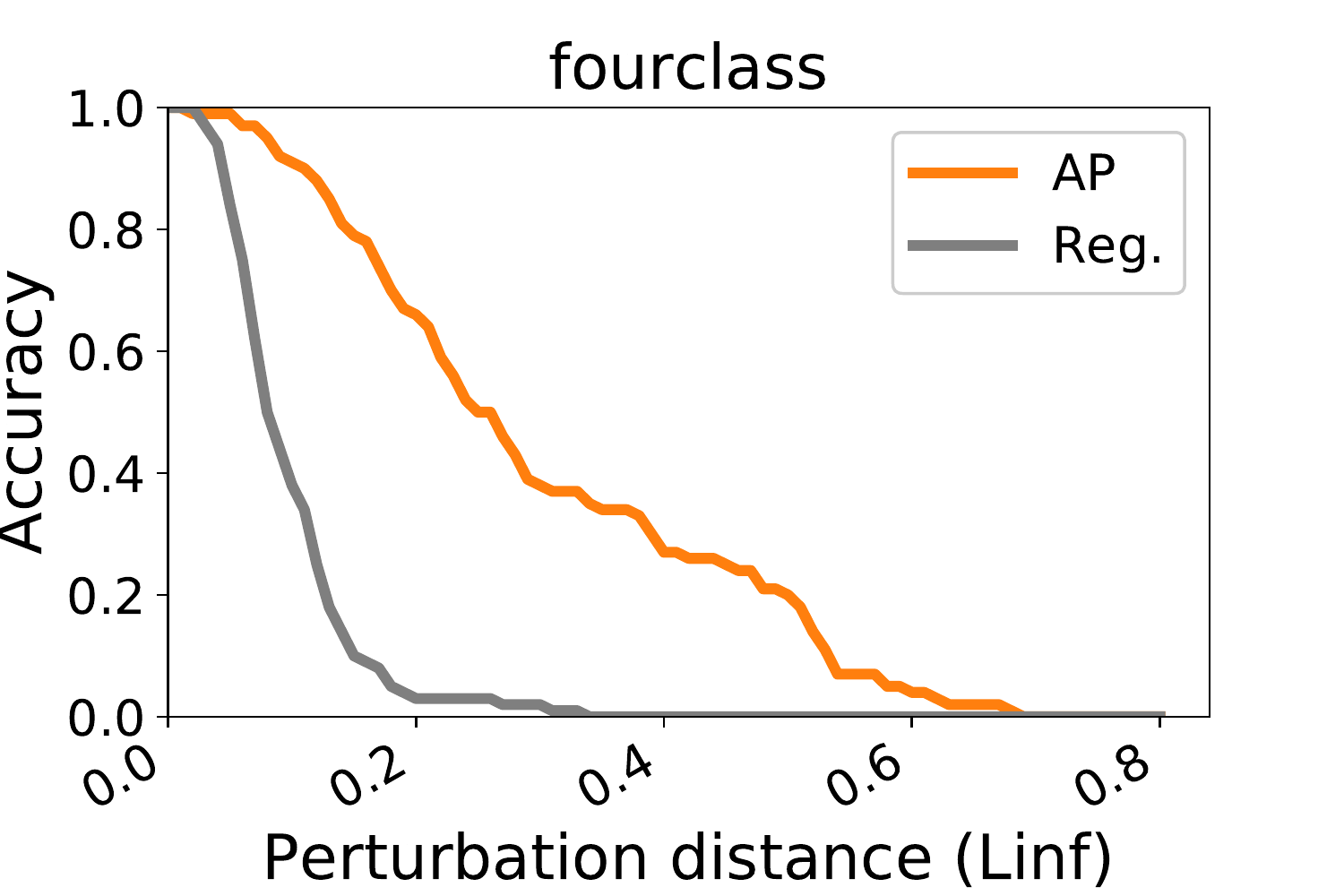}}
\subfloat[Decision tree]{
    \includegraphics[width=0.24\textwidth]{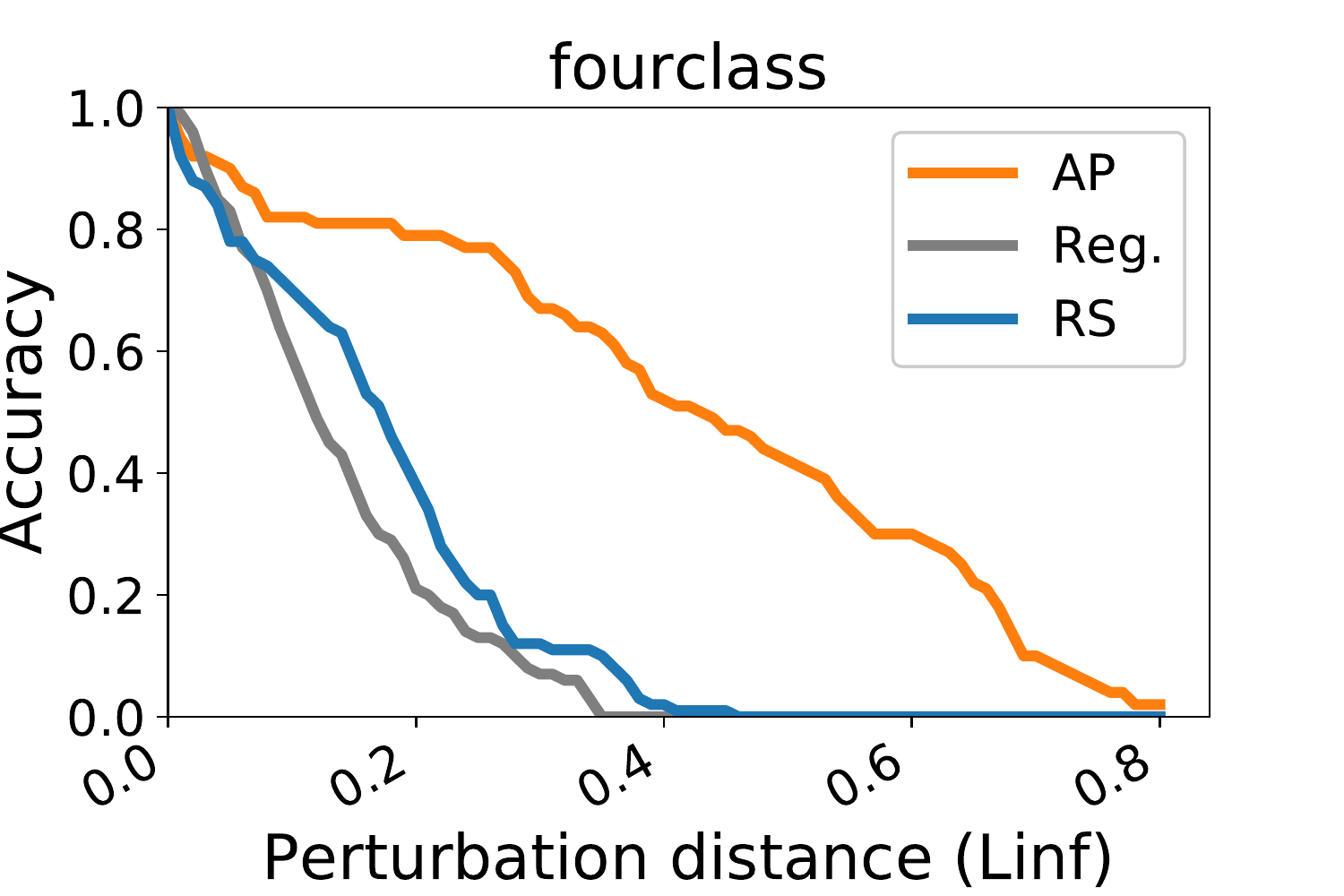}}
\subfloat[Random forest]{
    \includegraphics[width=0.24\textwidth]{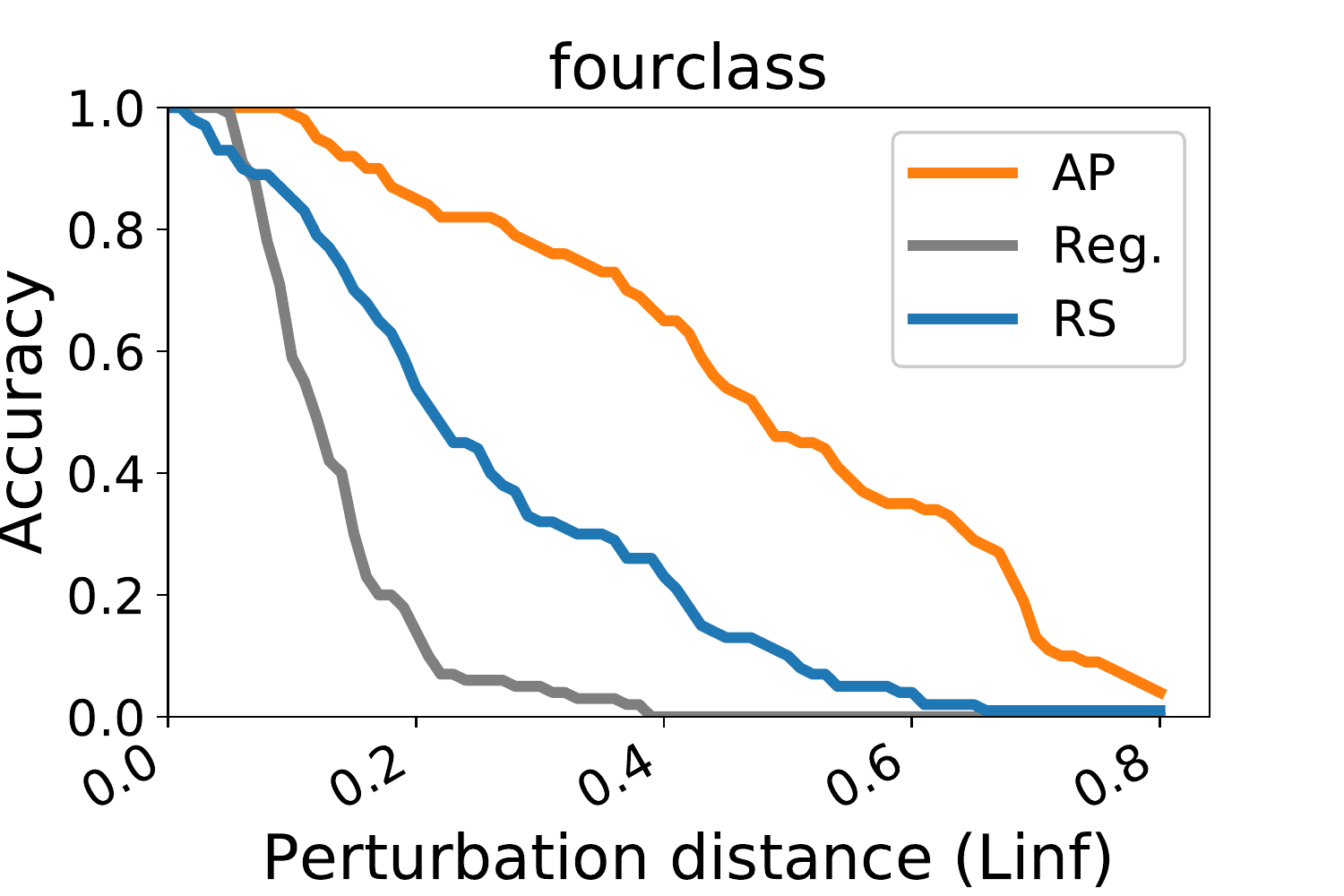}}

\subfloat[1-NN]{
    \includegraphics[width=0.24\textwidth]{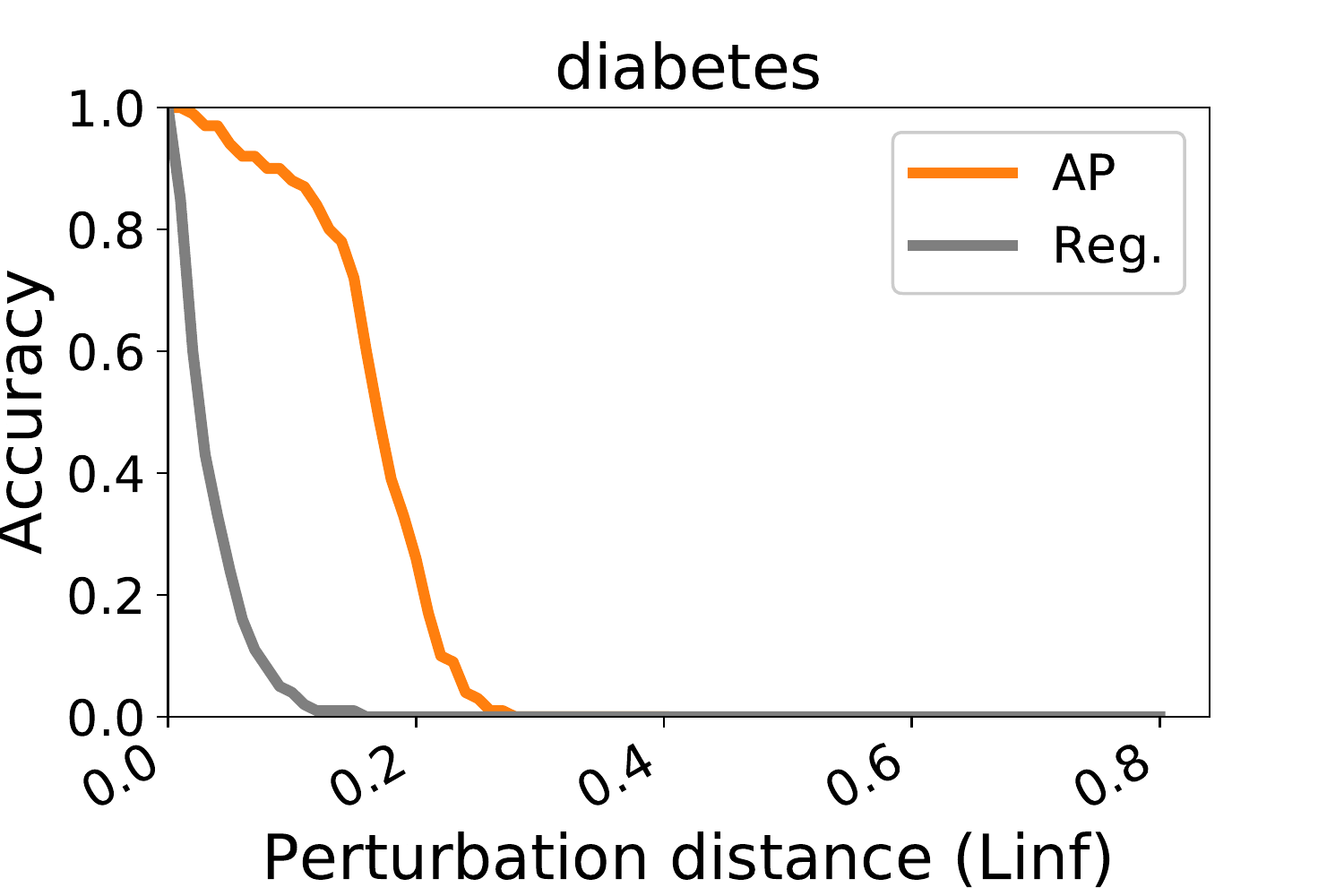}}
\subfloat[3-NN]{
    \includegraphics[width=0.24\textwidth]{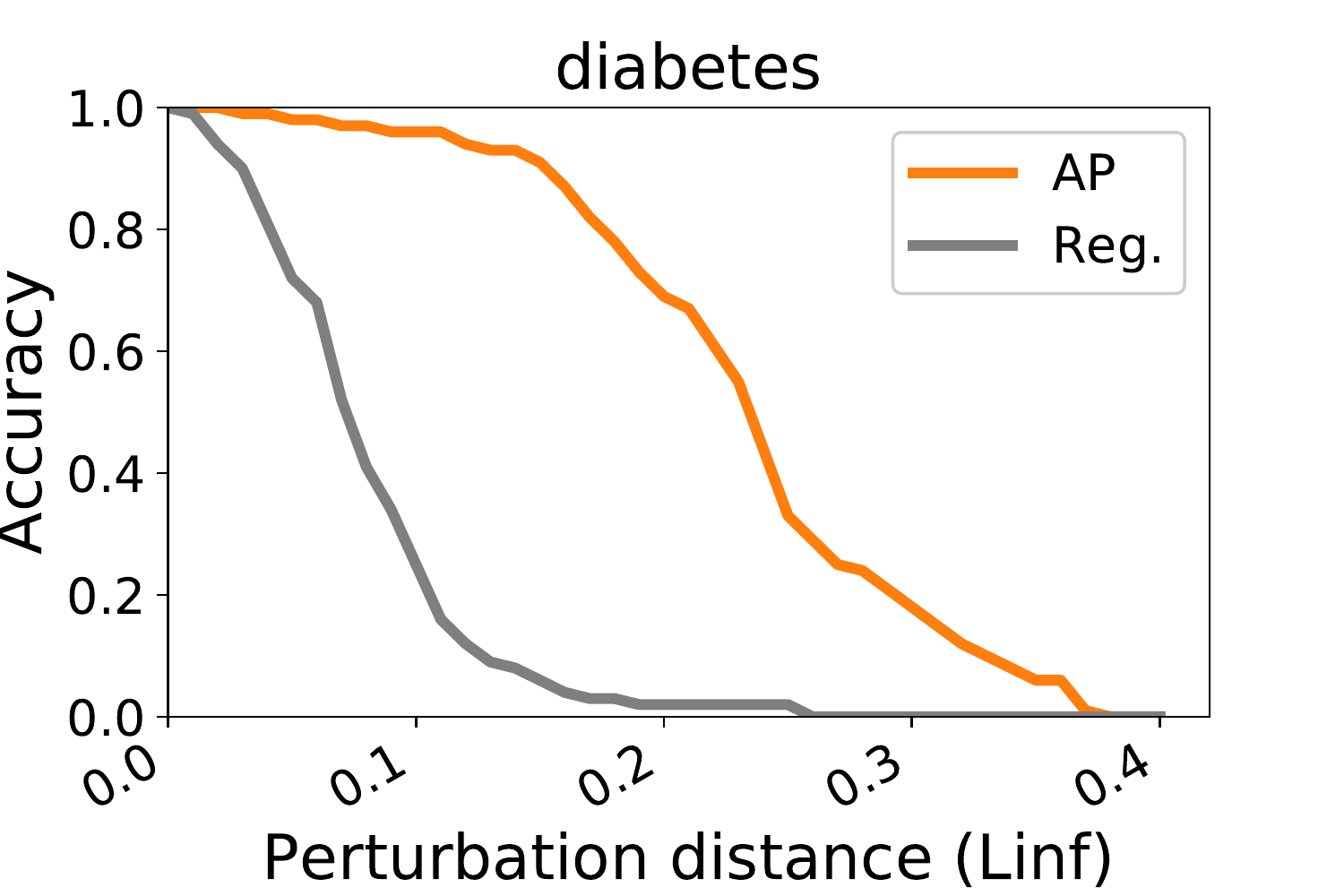}}
\subfloat[Decision tree]{
    \includegraphics[width=0.24\textwidth]{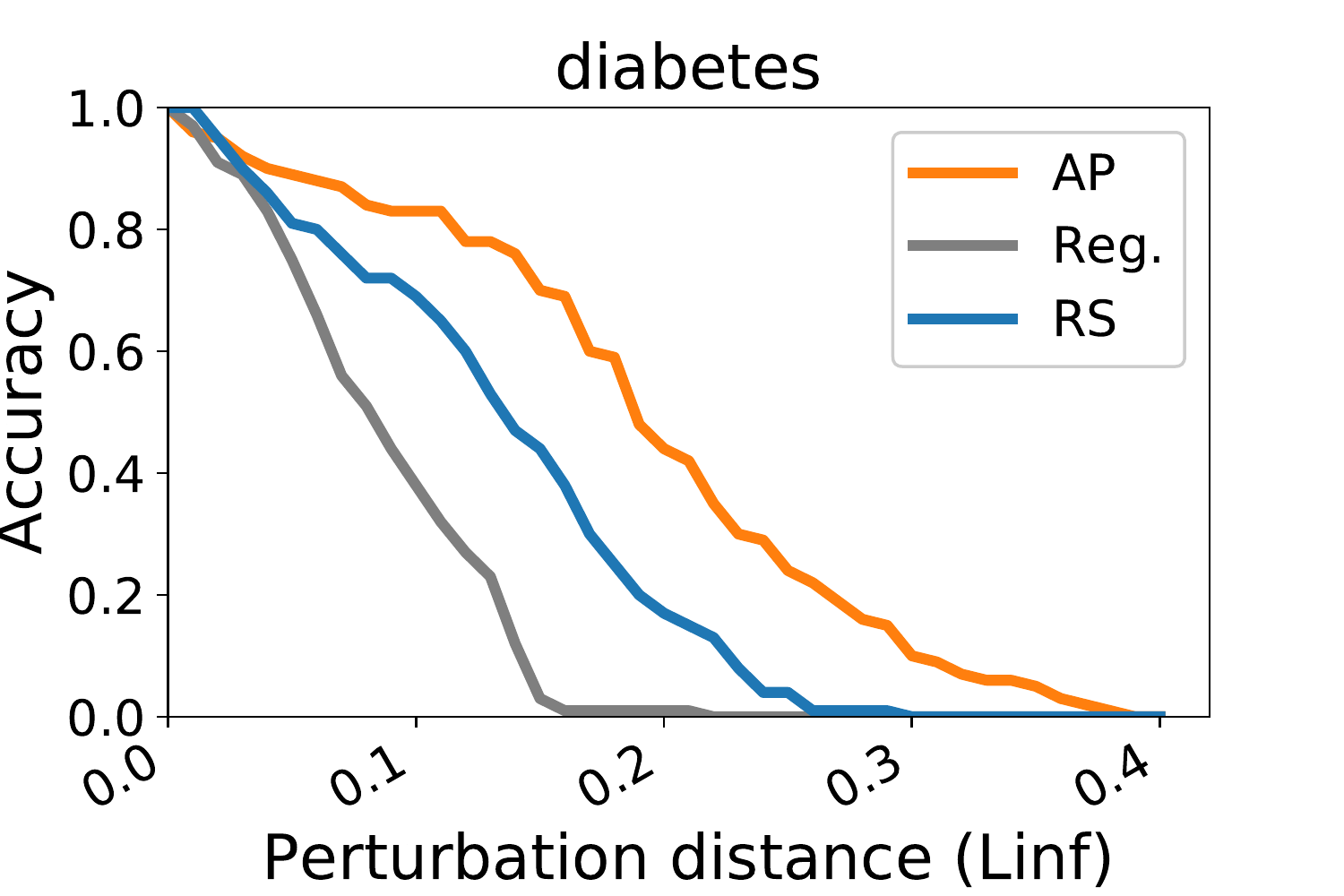}}
\subfloat[Random forest]{
    \includegraphics[width=0.24\textwidth]{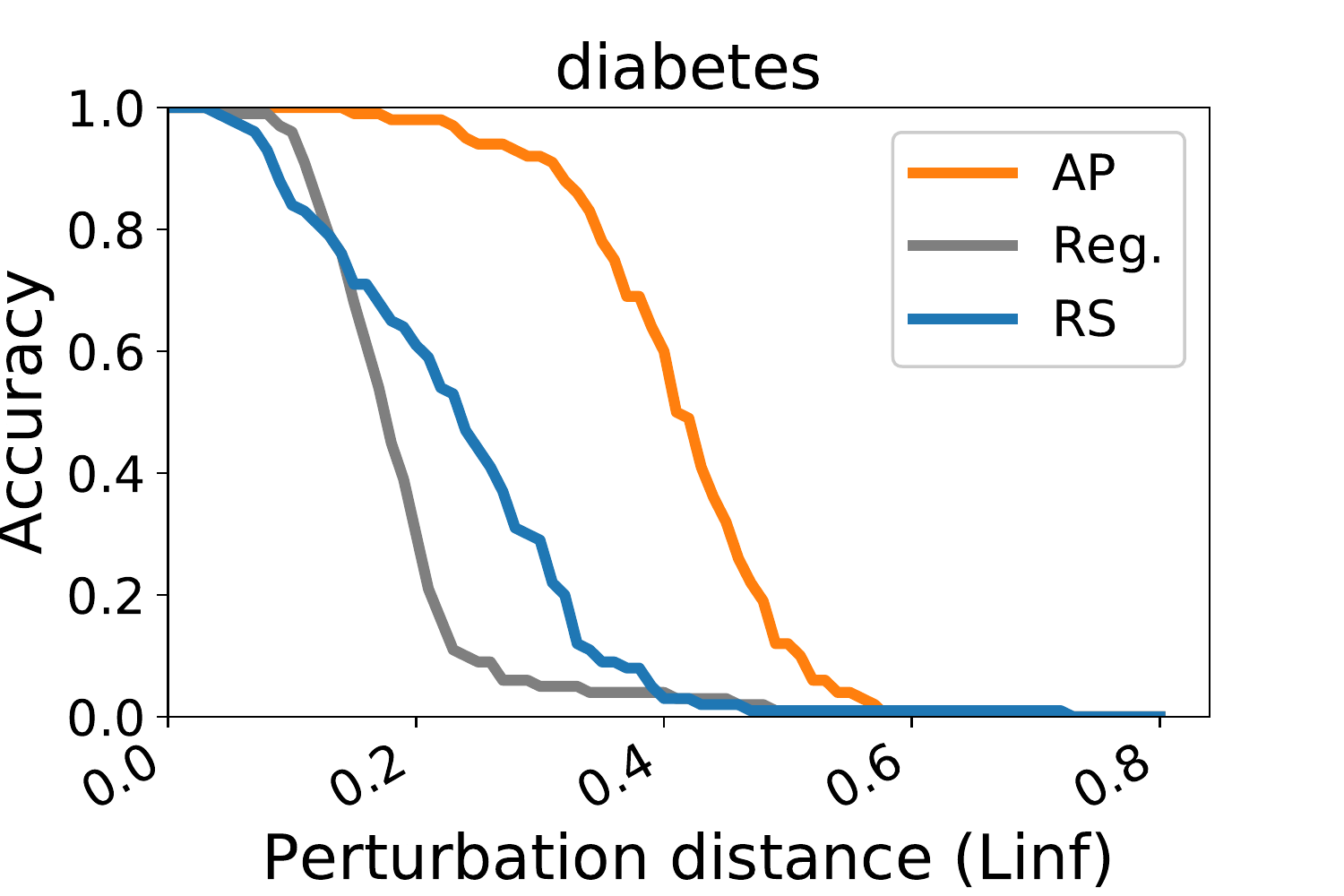}}

\subfloat[1-NN]{
    \includegraphics[width=0.24\textwidth]{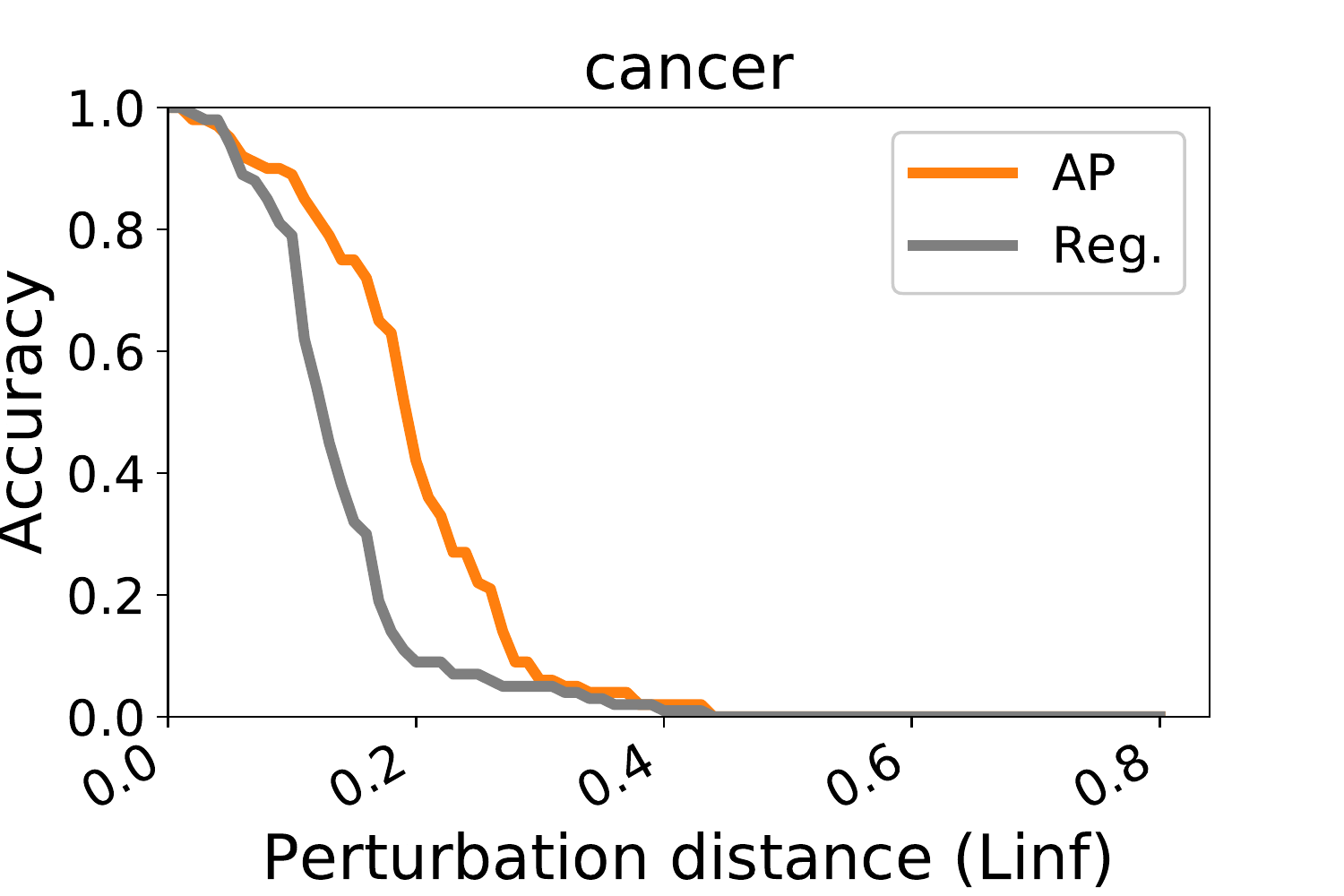}}
\subfloat[3-NN]{
    \includegraphics[width=0.24\textwidth]{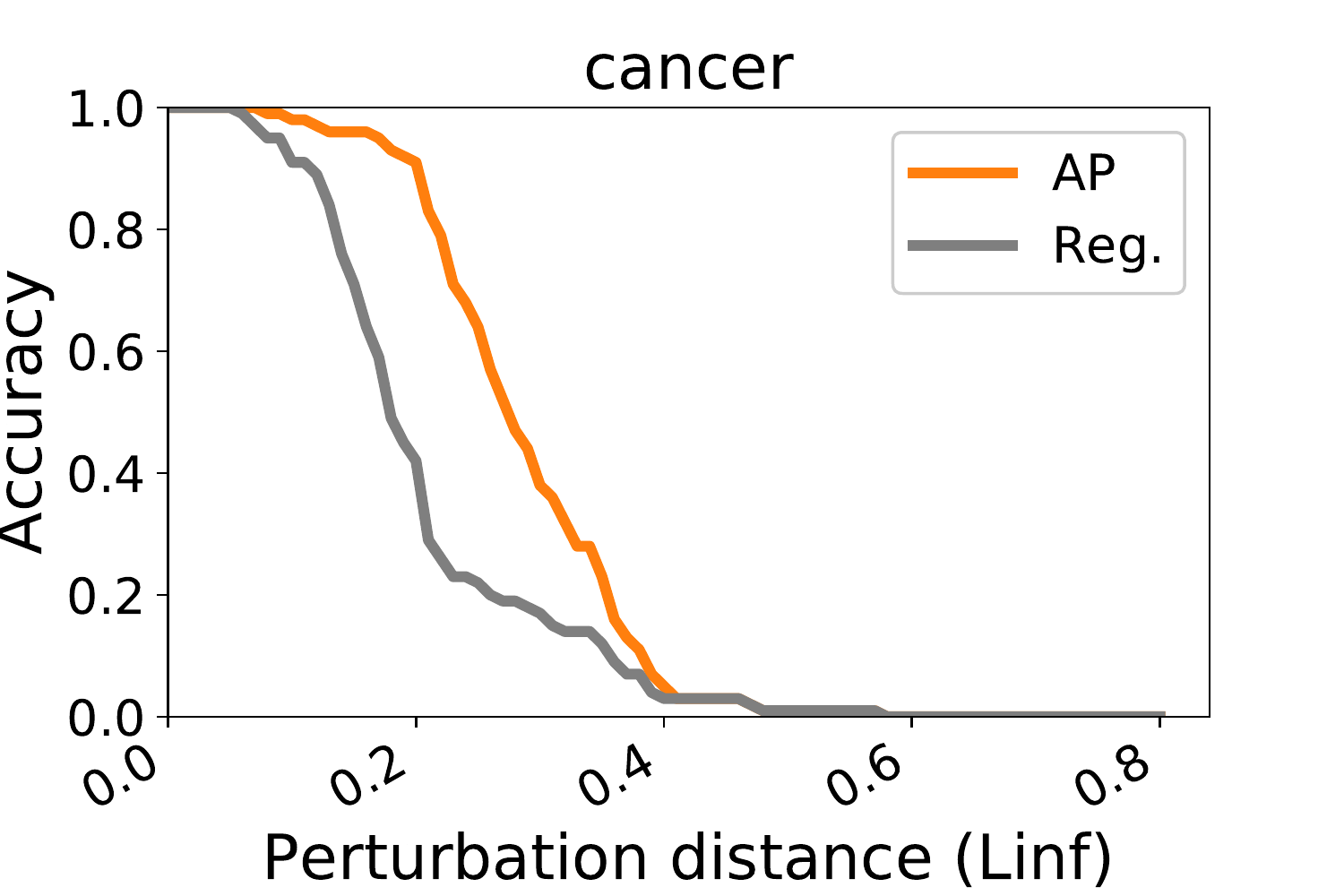}}
\subfloat[Decision tree]{
    \includegraphics[width=0.24\textwidth]{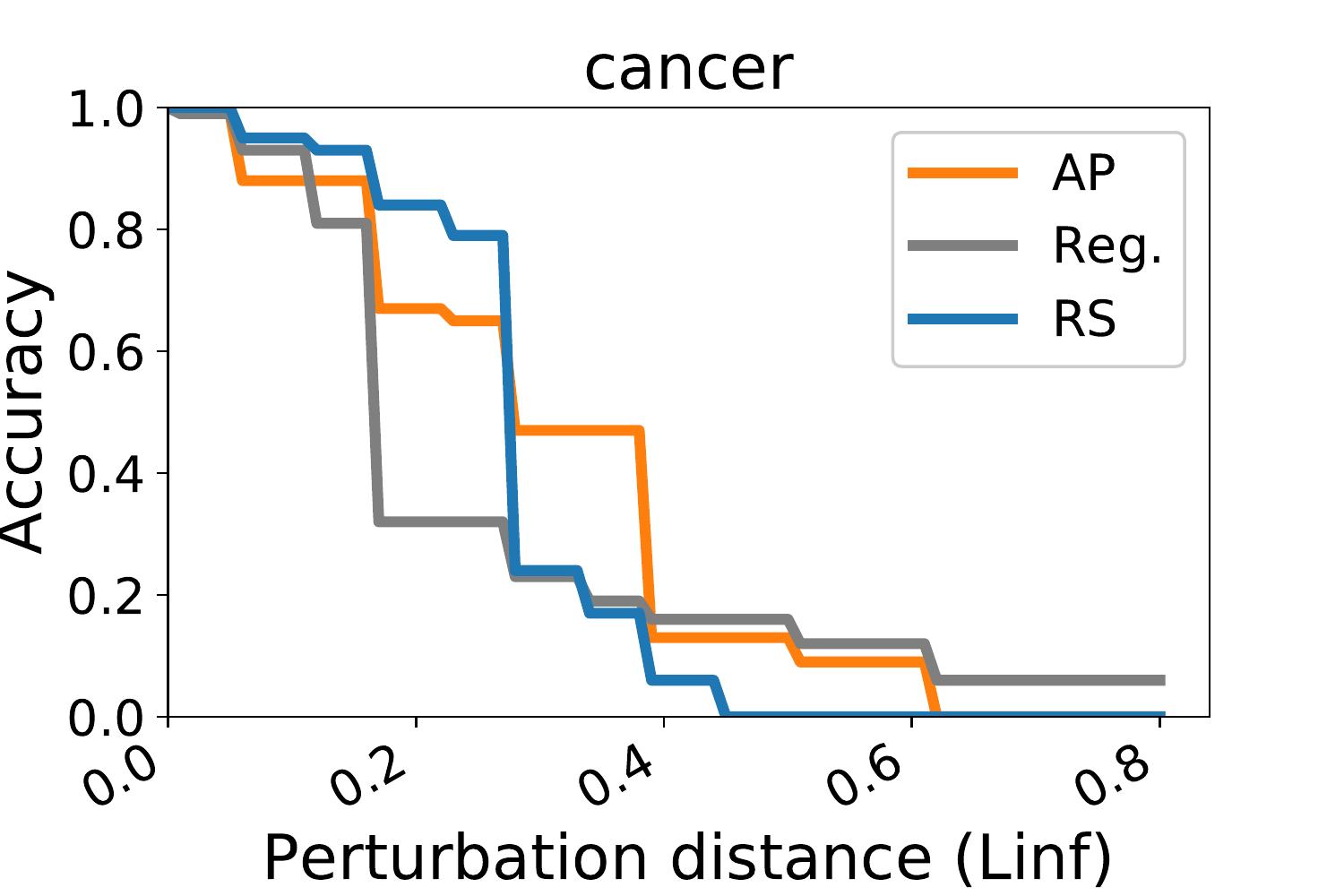}}
\subfloat[Random forest]{
    \includegraphics[width=0.24\textwidth]{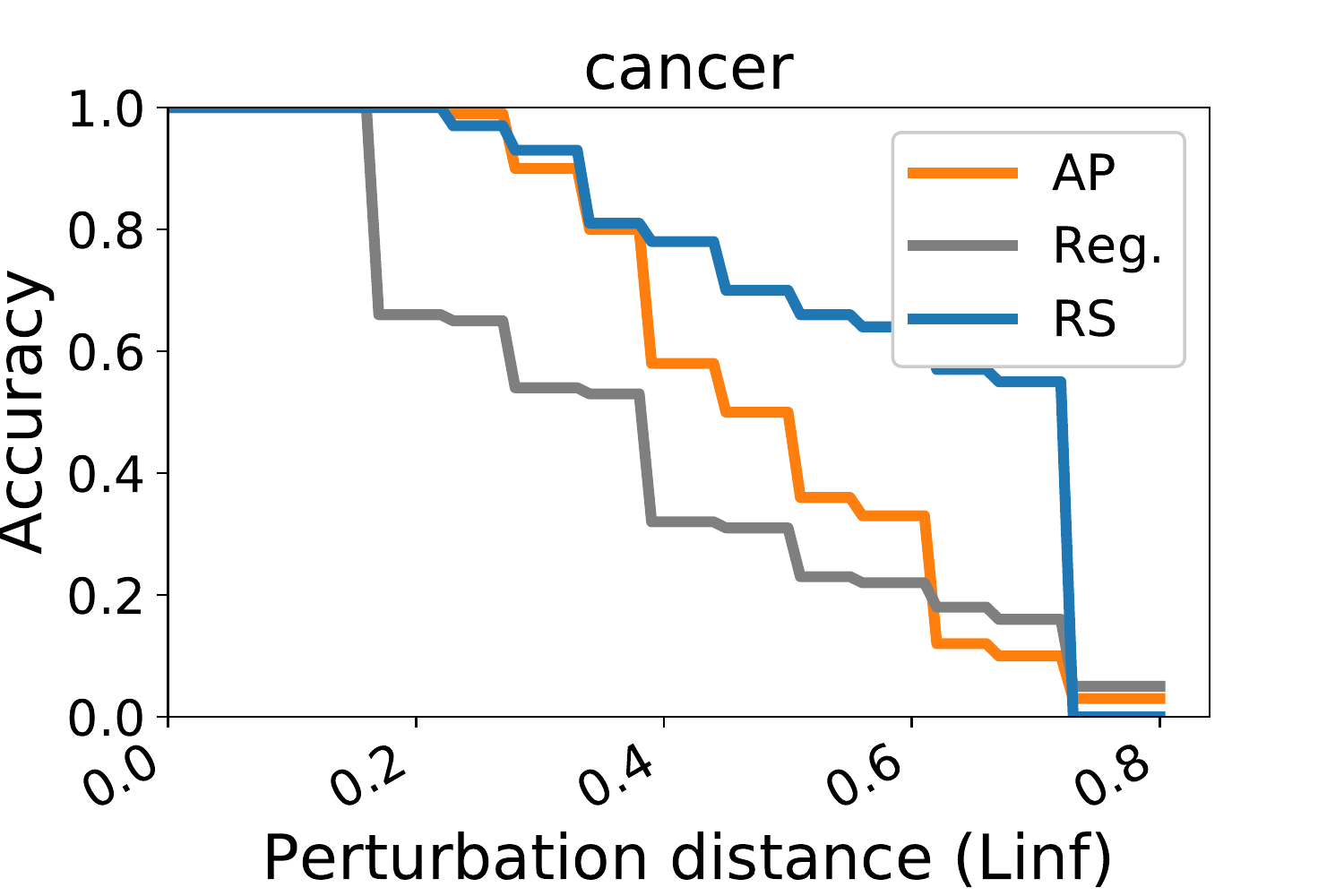}}

\subfloat[1-NN]{
    \includegraphics[width=0.24\textwidth]{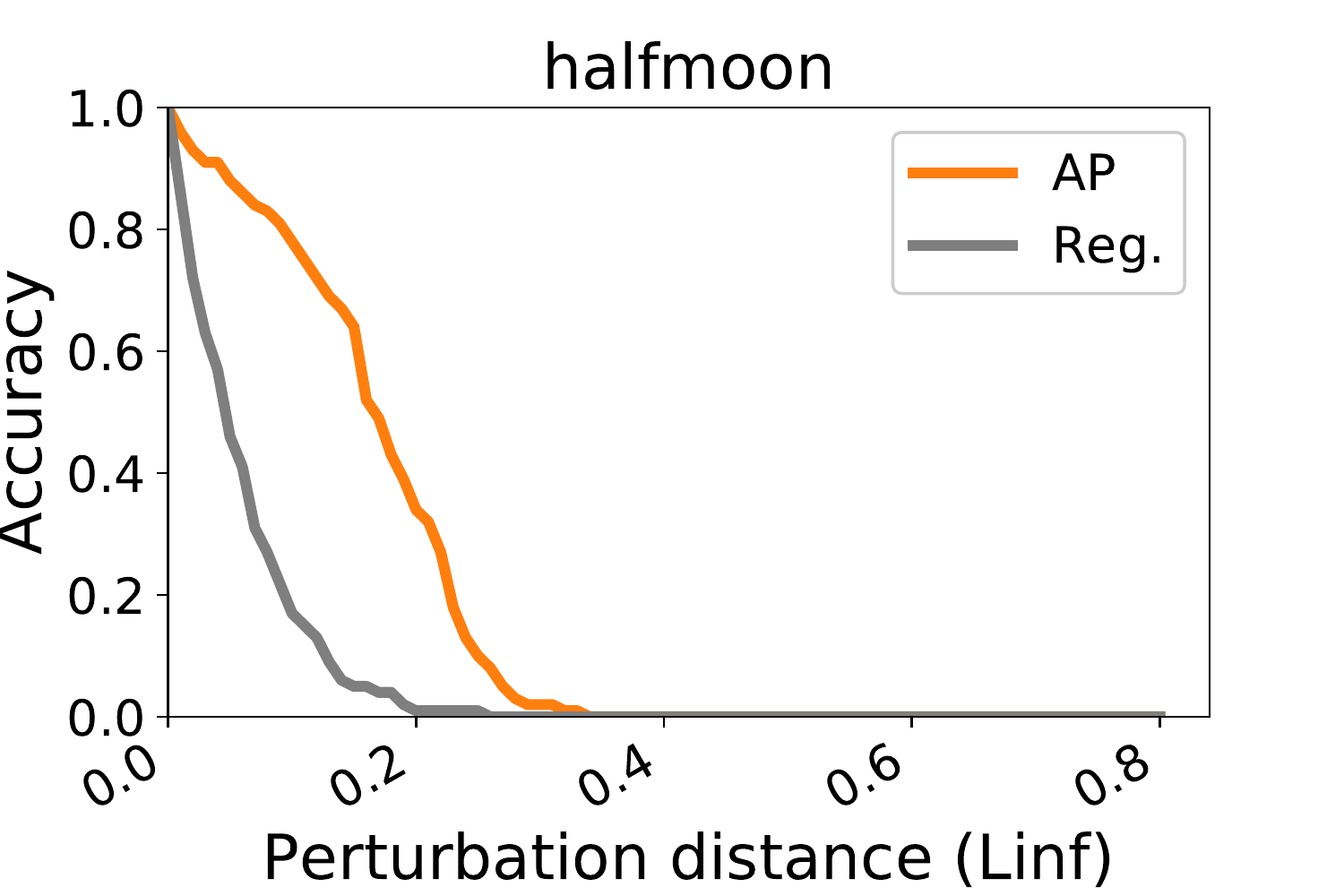}}
\subfloat[3-NN]{
    \includegraphics[width=0.24\textwidth]{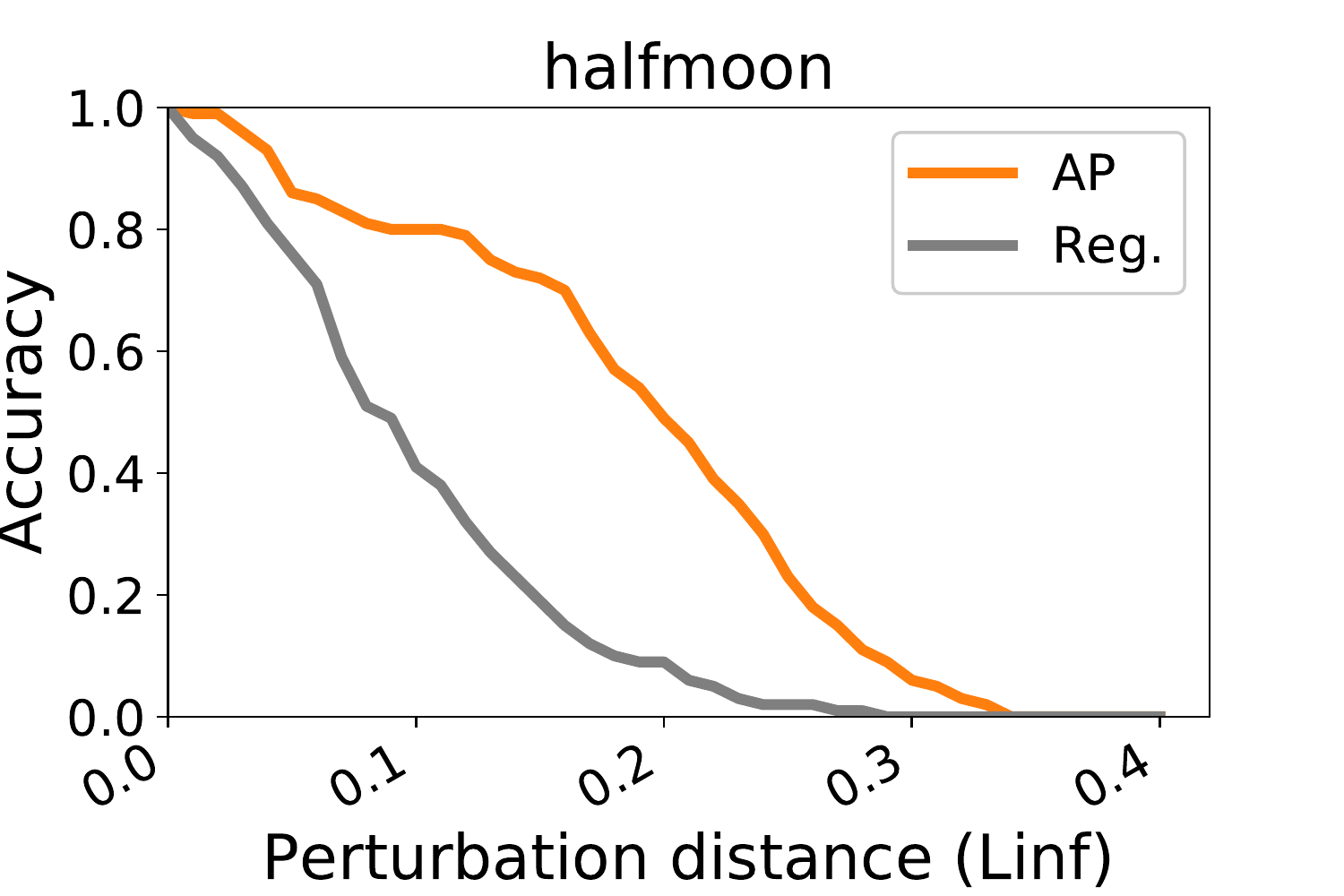}}
\subfloat[Decision tree]{
    \includegraphics[width=0.24\textwidth]{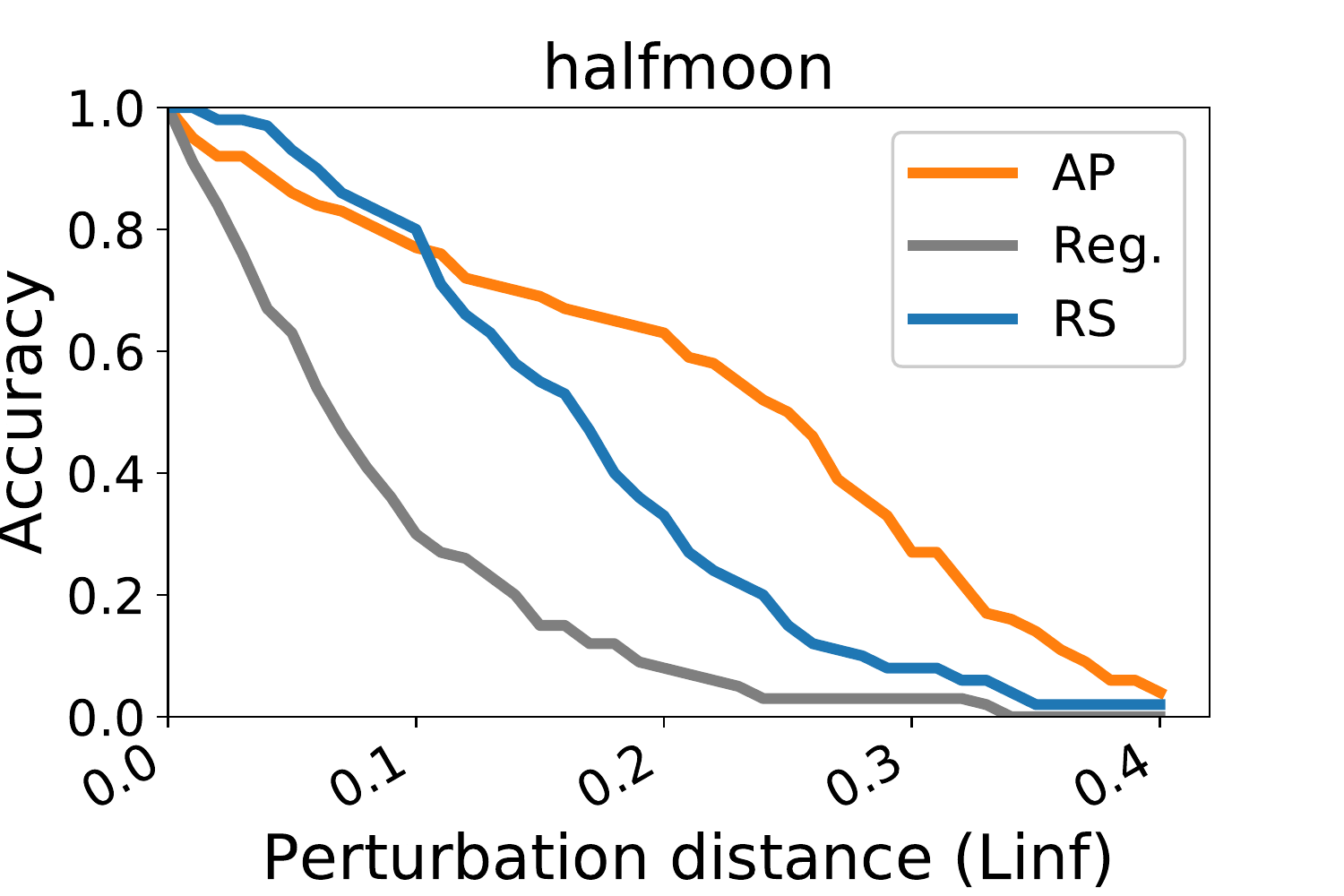}}
\subfloat[Random forest]{
    \includegraphics[width=0.24\textwidth]{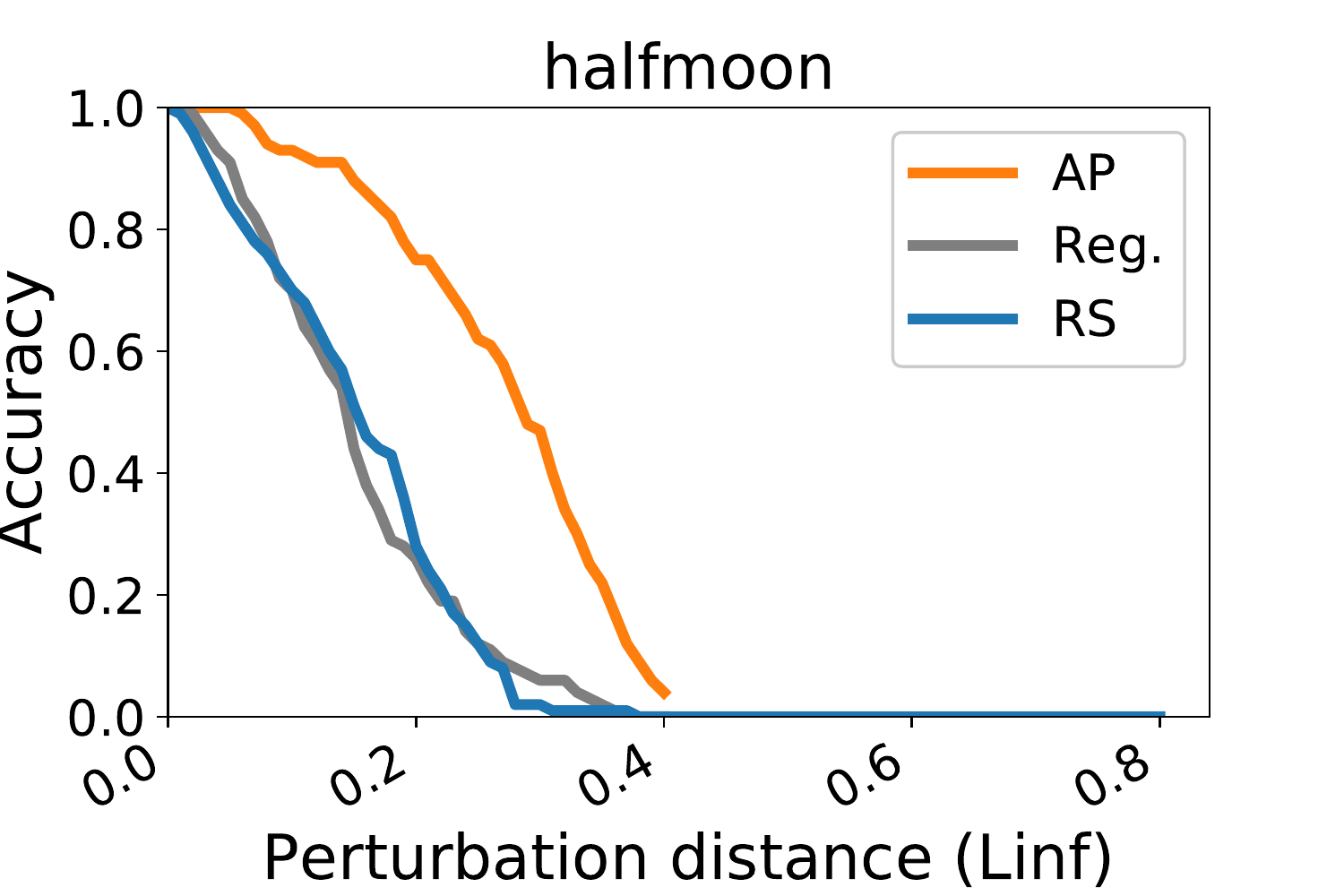}}

\caption{The maximum perturbation distance allowed versus the accuracy on
the 100 correctly predicted test examples (see Section \ref{app:data_detail} for details).}
\label{fig:defense-cmp}
\end{figure*}

\begin{figure*}[ht!]
\centering
\subfloat[1-NN]{
    \includegraphics[width=0.24\textwidth]{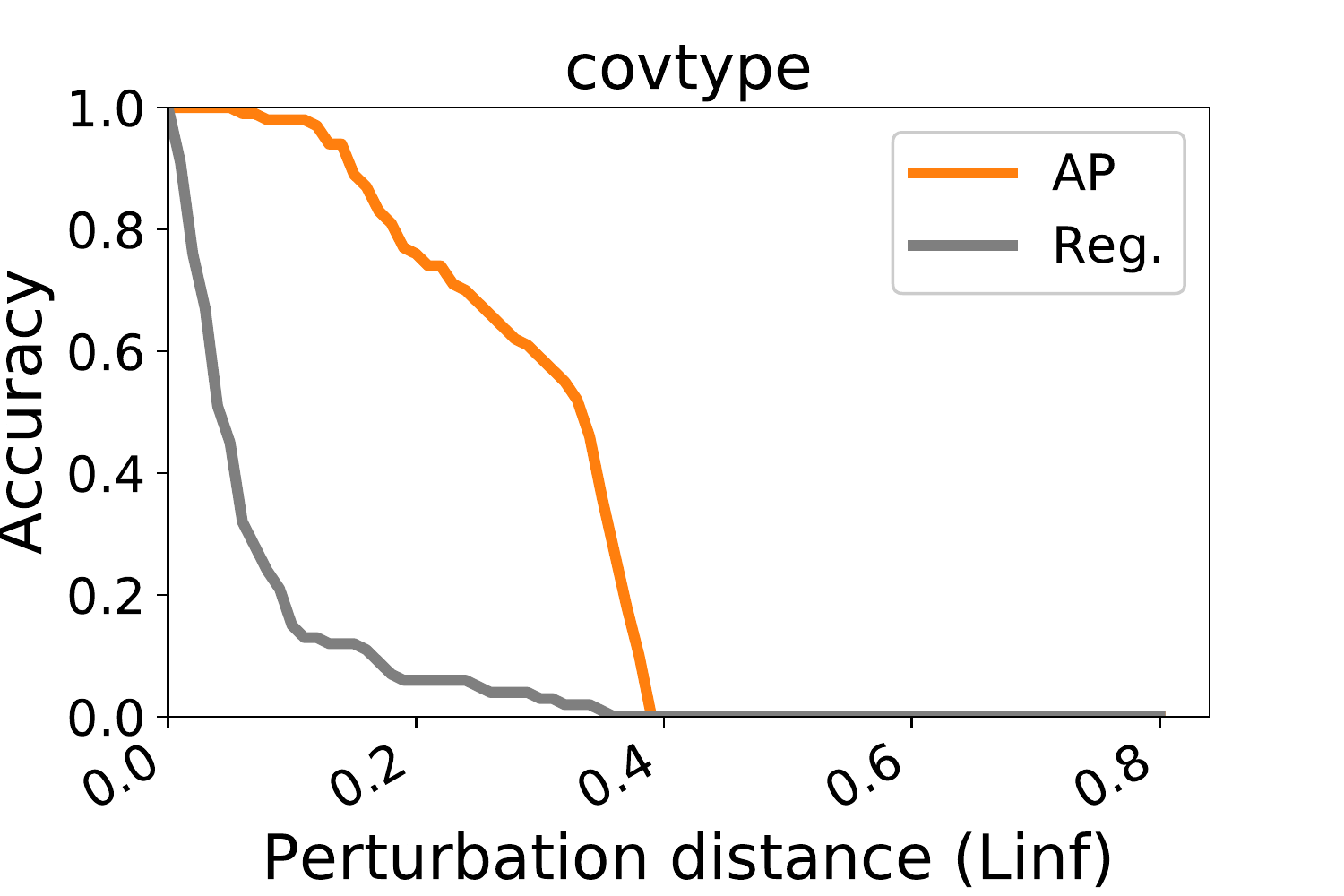}}
\subfloat[3-NN]{
    \includegraphics[width=0.24\textwidth]{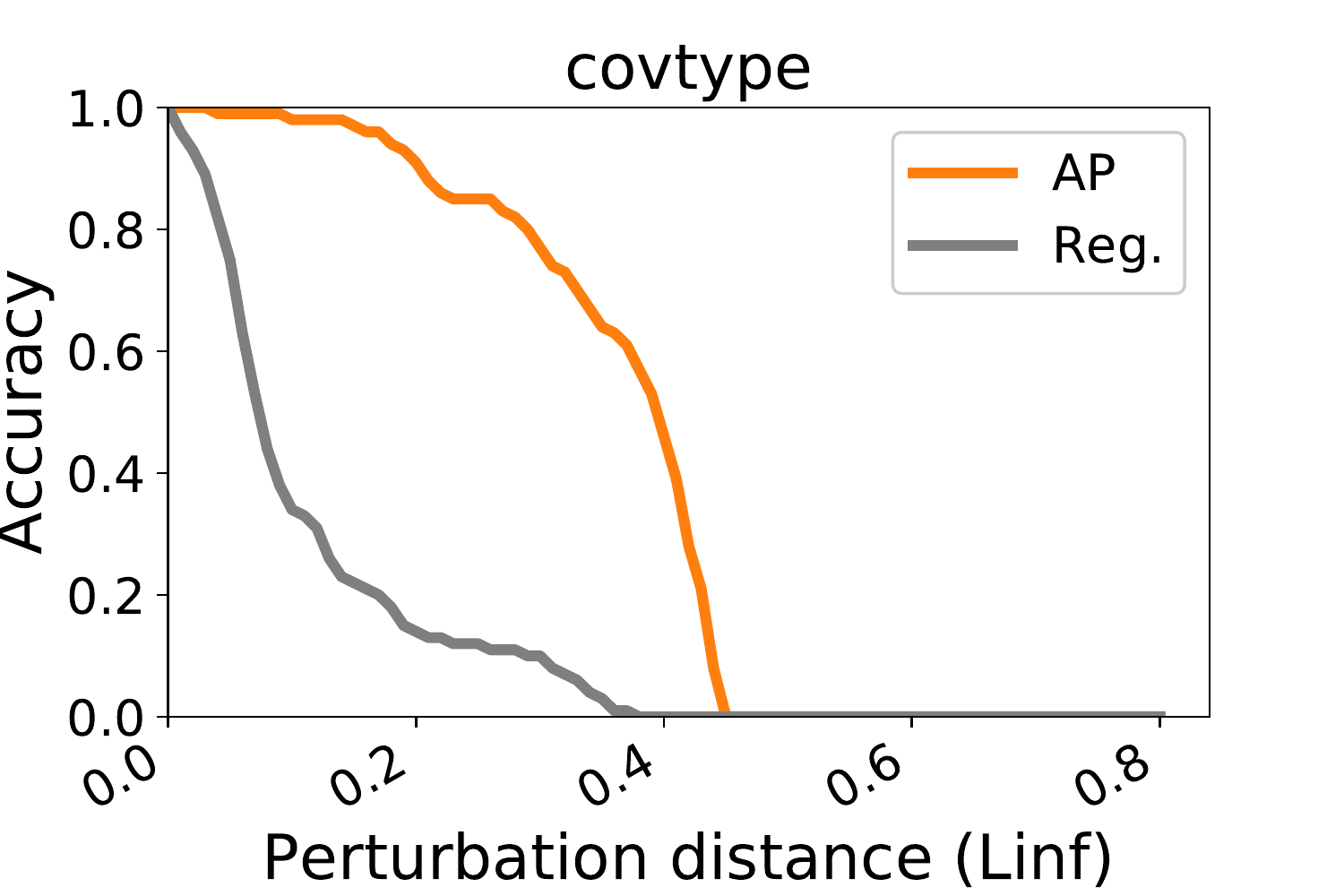}}
\subfloat[Decision tree]{
    \includegraphics[width=0.24\textwidth]{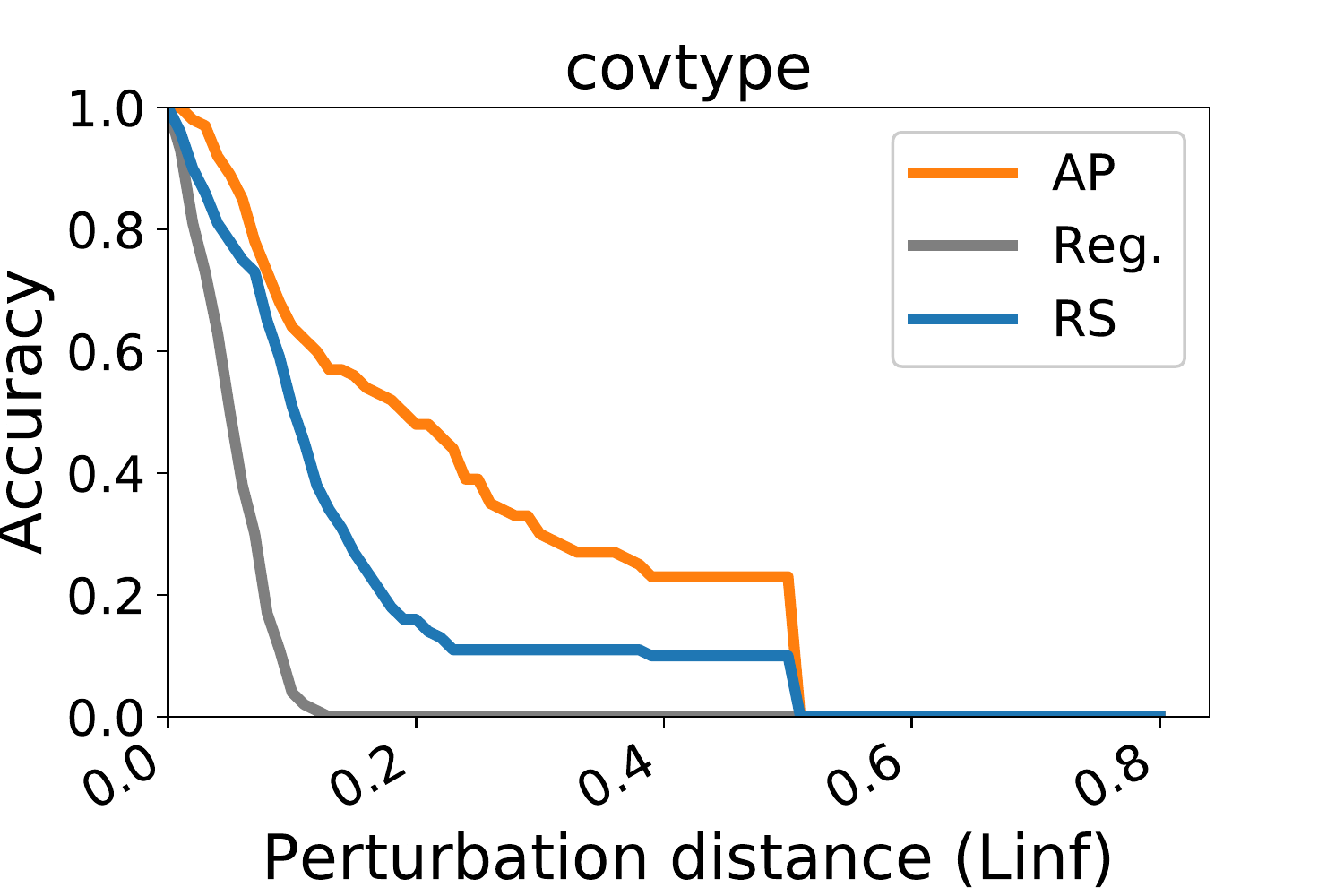}}
\subfloat[Random forest]{
    \includegraphics[width=0.24\textwidth]{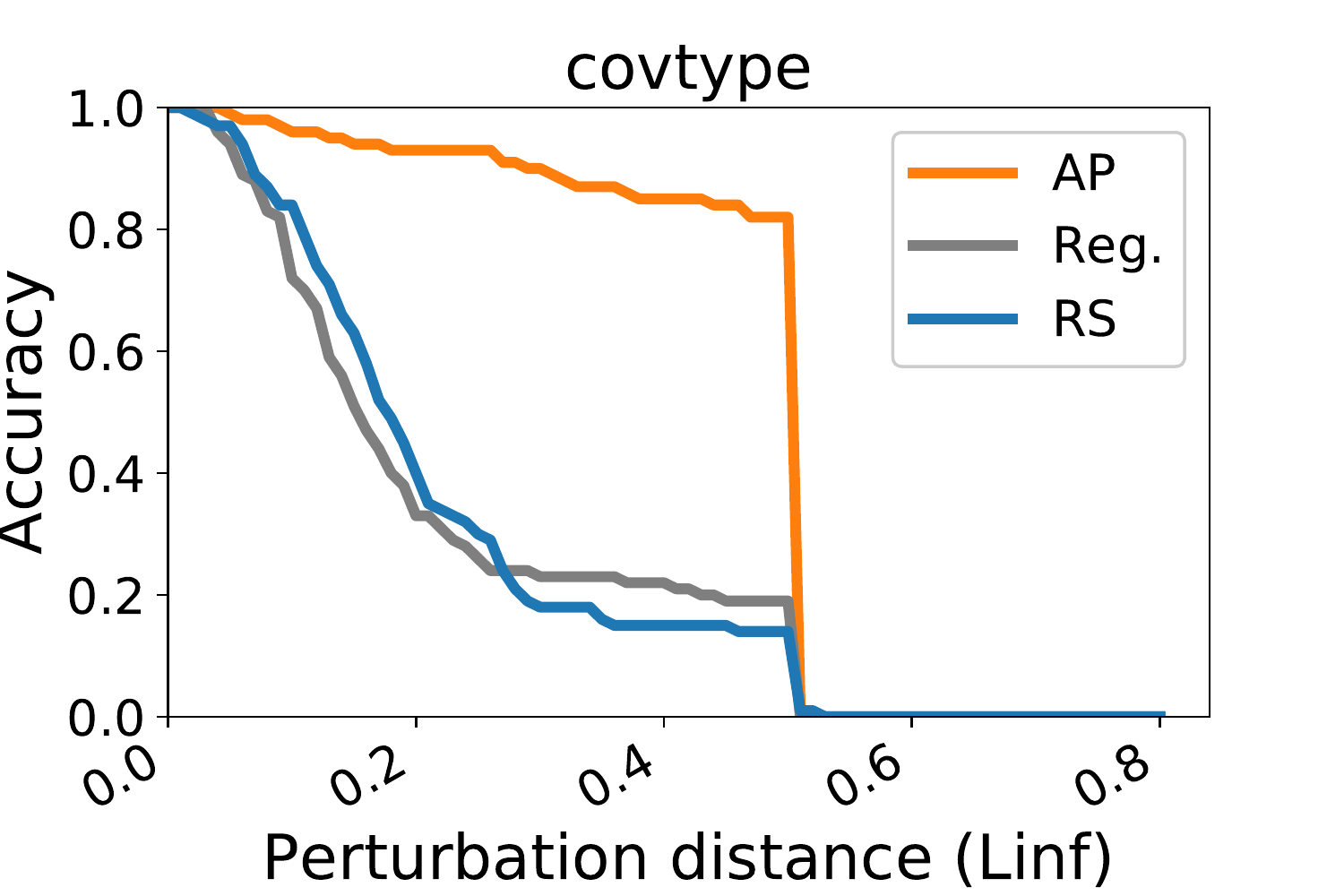}}

\subfloat[1-NN]{
    \includegraphics[width=0.24\textwidth]{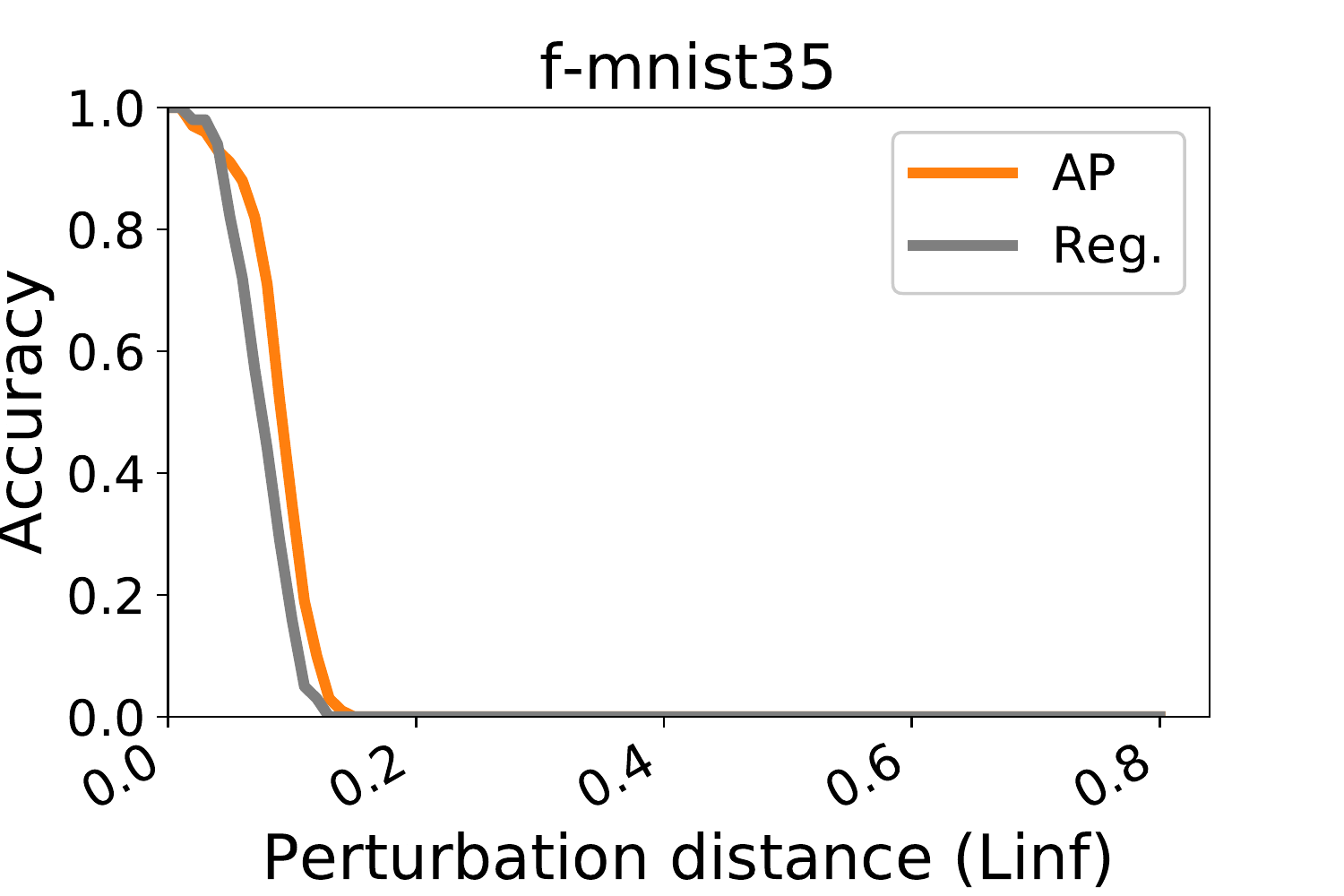}}
\subfloat[3-NN]{
    \includegraphics[width=0.24\textwidth]{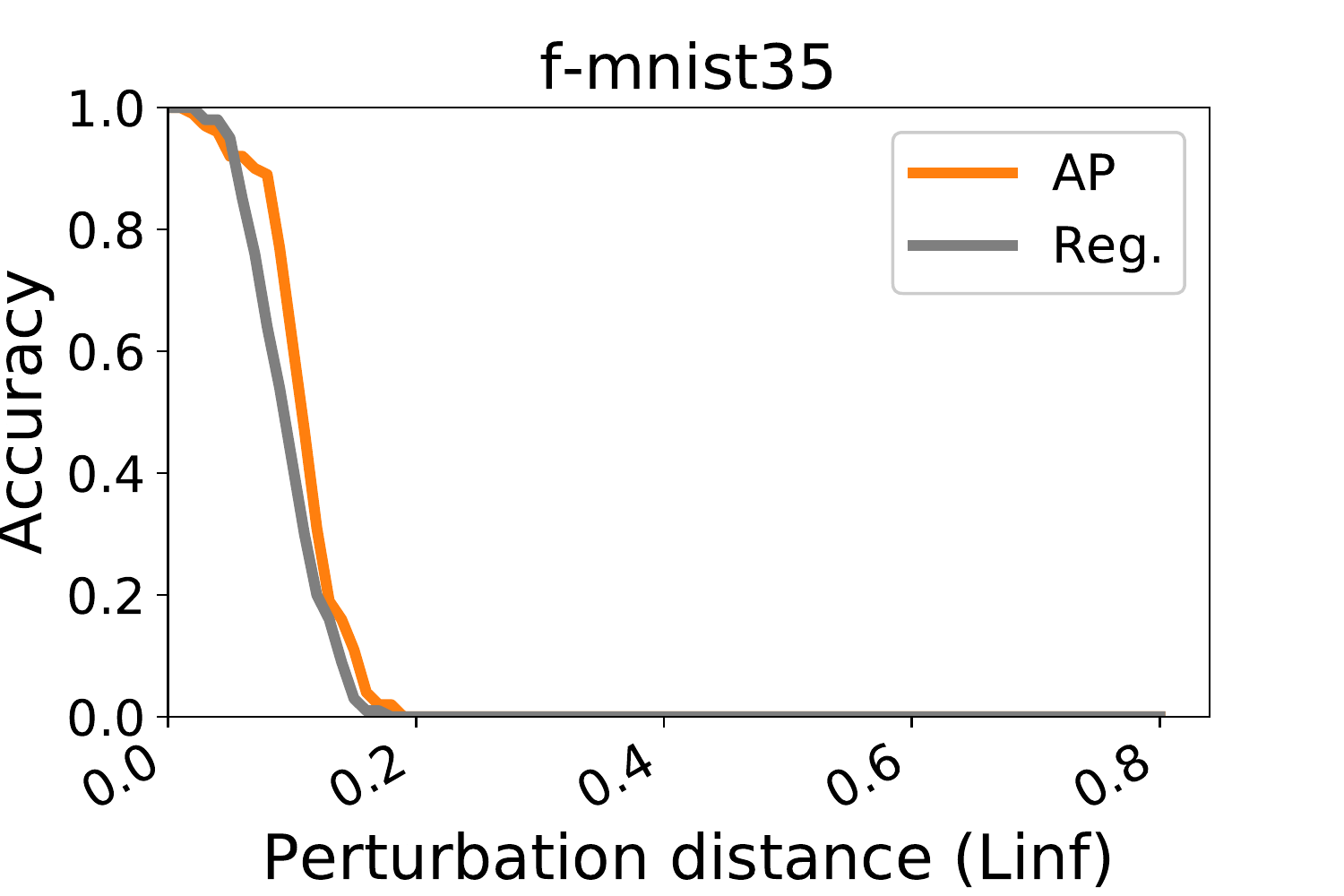}}
\subfloat[Decision tree]{
    \includegraphics[width=0.24\textwidth]{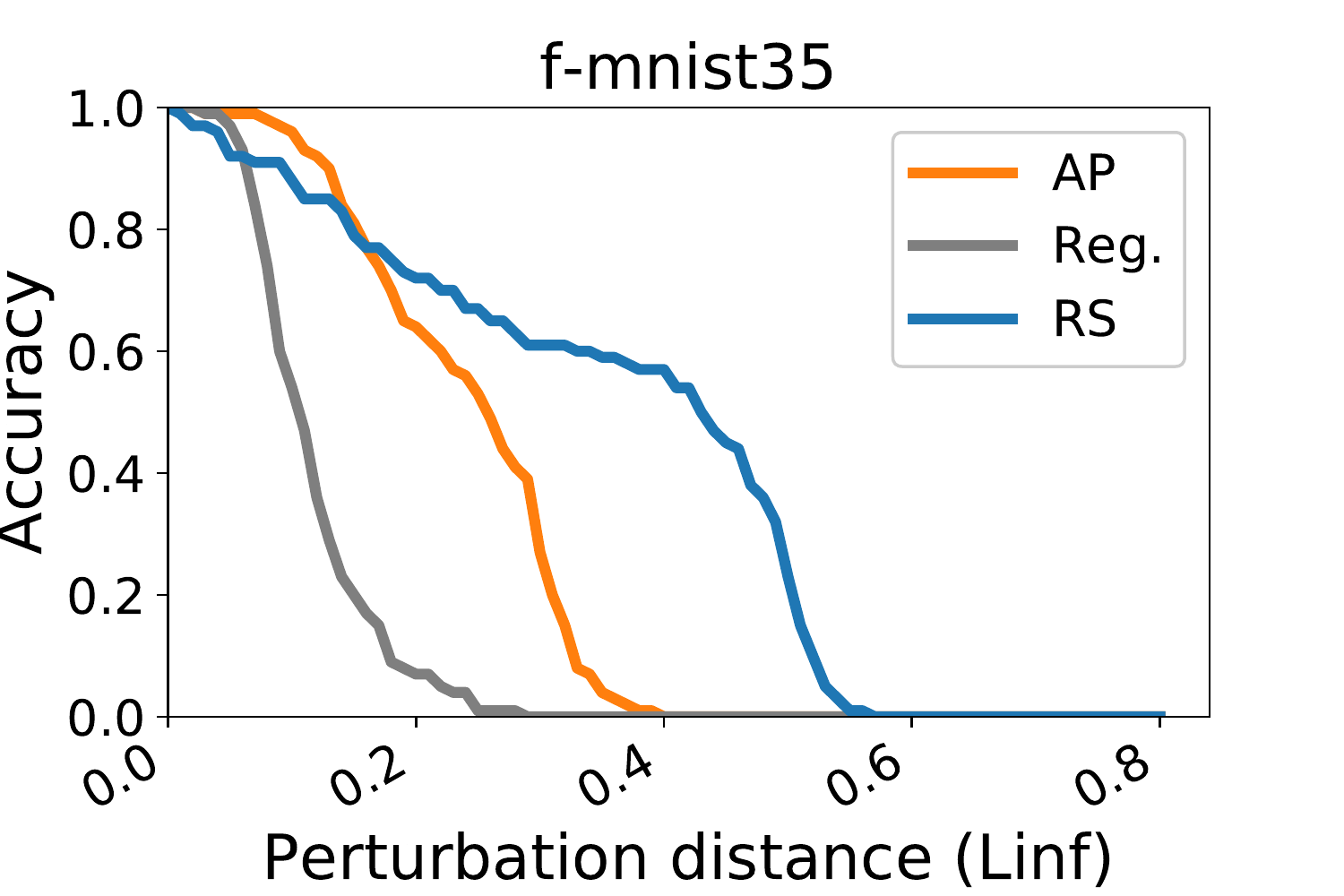}}
\subfloat[Random forest]{
    \includegraphics[width=0.24\textwidth]{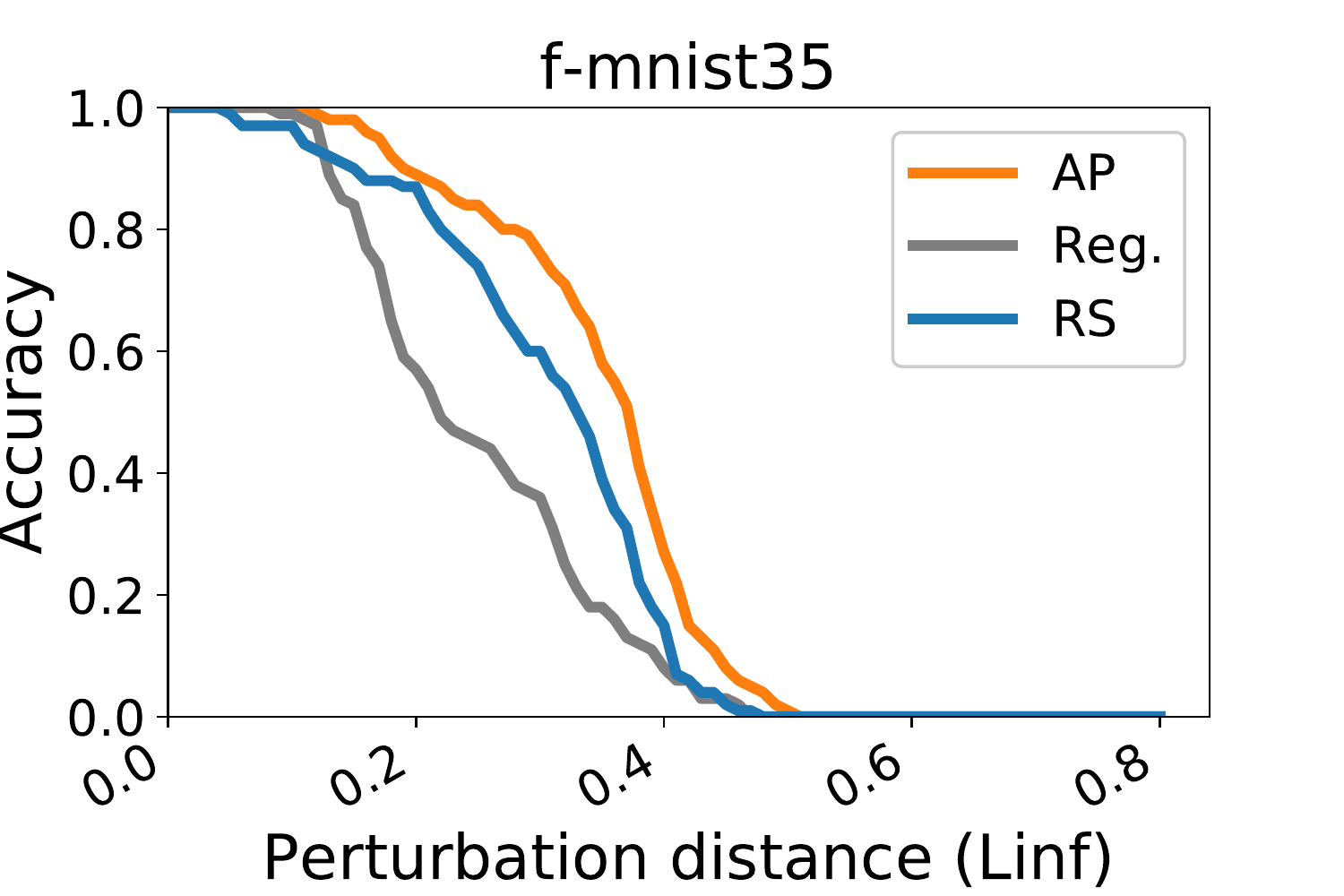}}

\subfloat[1-NN]{
    \includegraphics[width=0.24\textwidth]{./figs/nn_k1_robustness_figs_fashion_mnist06f_pca25_inf-eps-converted-to}}
\subfloat[3-NN]{
    \includegraphics[width=0.24\textwidth]{./figs/nn_k3_robustness_figs_fashion_mnist06f_pca25_inf-eps-converted-to}}
\subfloat[Decision tree]{
    \includegraphics[width=0.24\textwidth]{./figs/dt_robustness_figs_fashion_mnist06f_pca25_inf-eps-converted-to}}
\subfloat[Random forest]{
    \includegraphics[width=0.24\textwidth]{./figs/rf_robustness_figs_fashion_mnist06f_pca25_inf-eps-converted-to}}

\subfloat[1-NN]{
    \includegraphics[width=0.24\textwidth]{./figs/nn_k1_robustness_figs_mnist17f_pca25_inf-eps-converted-to}}
\subfloat[3-NN]{
    \includegraphics[width=0.24\textwidth]{./figs/nn_k3_robustness_figs_mnist17f_pca25_inf-eps-converted-to}}
\subfloat[Decision tree]{
    \includegraphics[width=0.24\textwidth]{./figs/dt_robustness_figs_mnist17f_pca25_inf-eps-converted-to}}
\subfloat[Random forest]{
    \includegraphics[width=0.24\textwidth]{./figs/rf_robustness_figs_mnist17f_pca25_inf-eps-converted-to}}

\caption{The maximum perturbation distance allowed versus the accuracy on
the 100 correctly predicted test examples (see Section \ref{app:data_detail} for details).}
\label{fig:defense-cmp2}
\end{figure*}

\subsection{Images Removed by AP}

Figure \ref{fig:removed} shows examples of images removed by AP
and their closest opposite labeled images. In the case of mnist17, it is interesting to note that the sevens and ones do resemble each other, and so it makes sense that they are close to the training boundary and should be pruned. For the other datasets, recall that we first applied PCA, and therefore, these images are similar in the resulting feature space.

\begin{figure*}[h]
\centering
\subfloat[mnist17]{
    \includegraphics[width=0.25\textwidth]{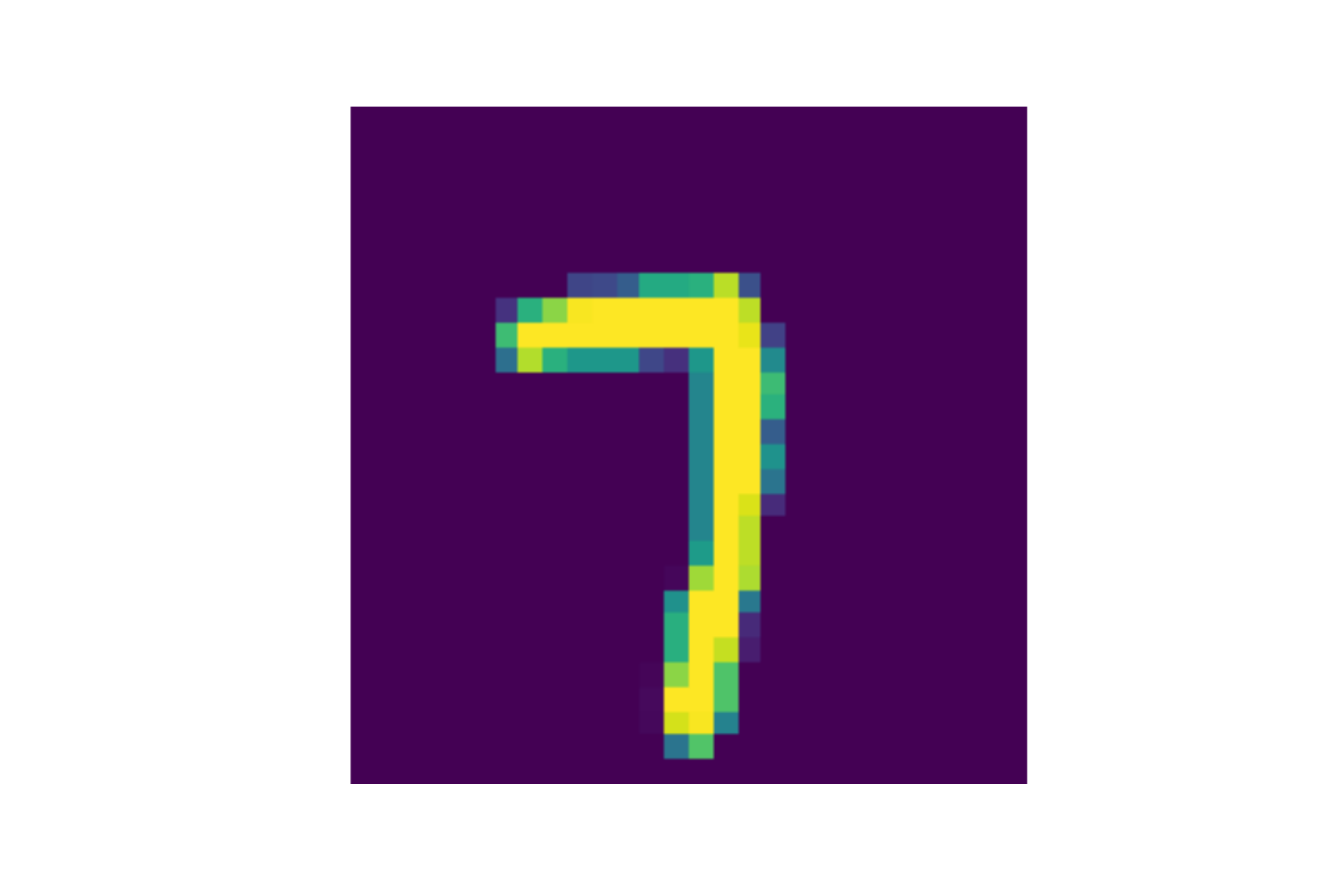}}
\subfloat[mnist17]{
    \includegraphics[width=0.25\textwidth]{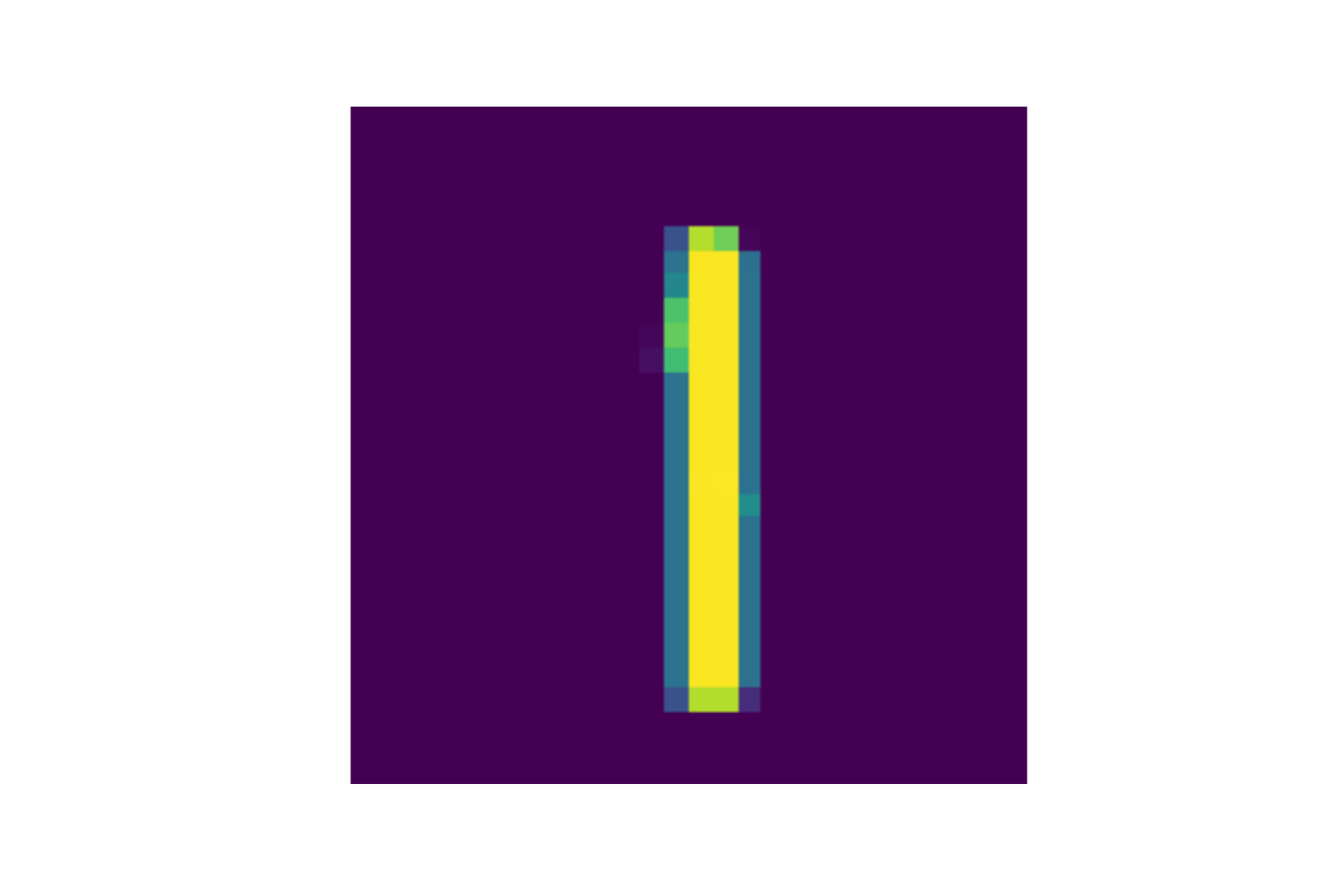}}
\subfloat[mnist17]{
    \includegraphics[width=0.25\textwidth]{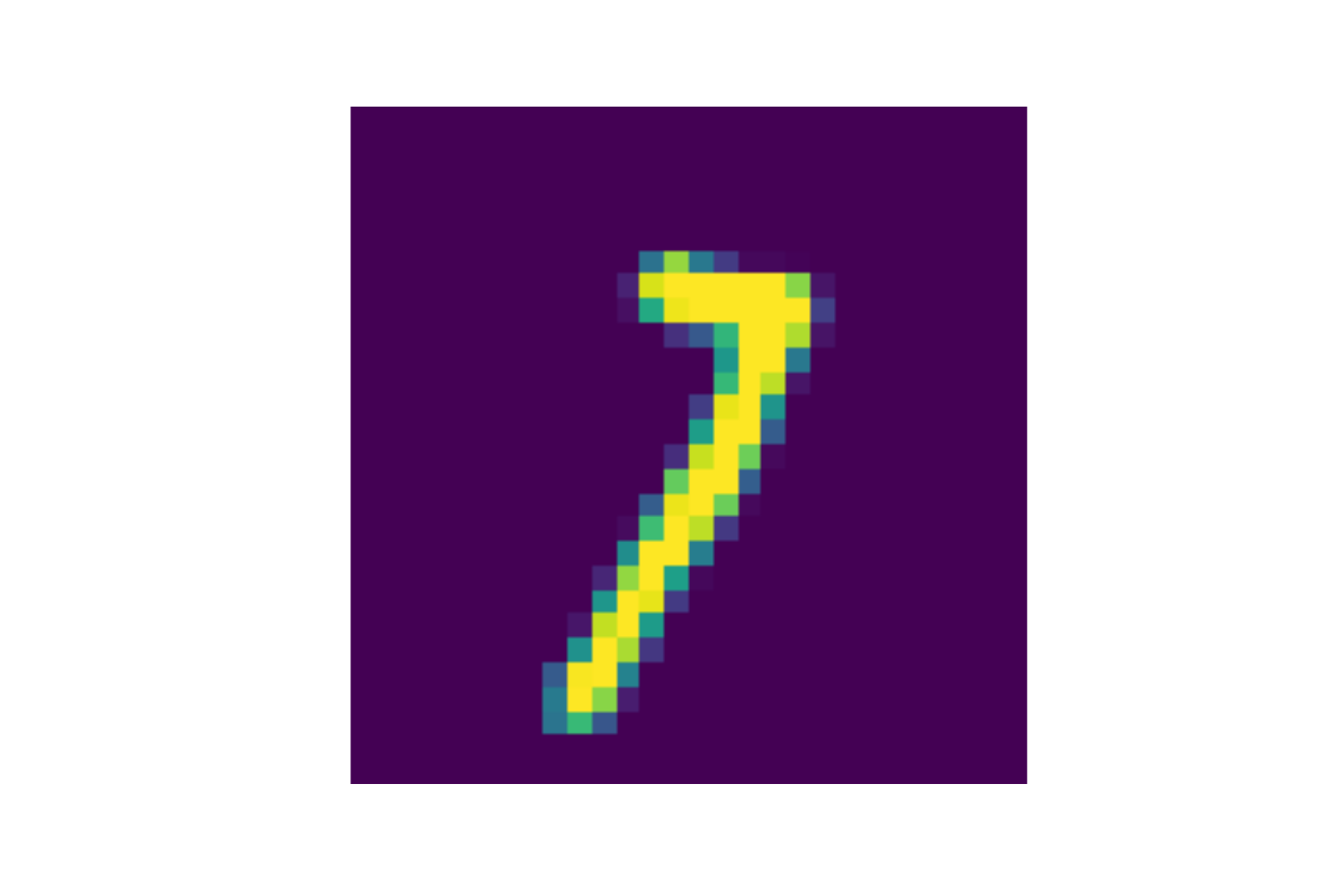}}
\subfloat[mnist17]{
    \includegraphics[width=0.25\textwidth]{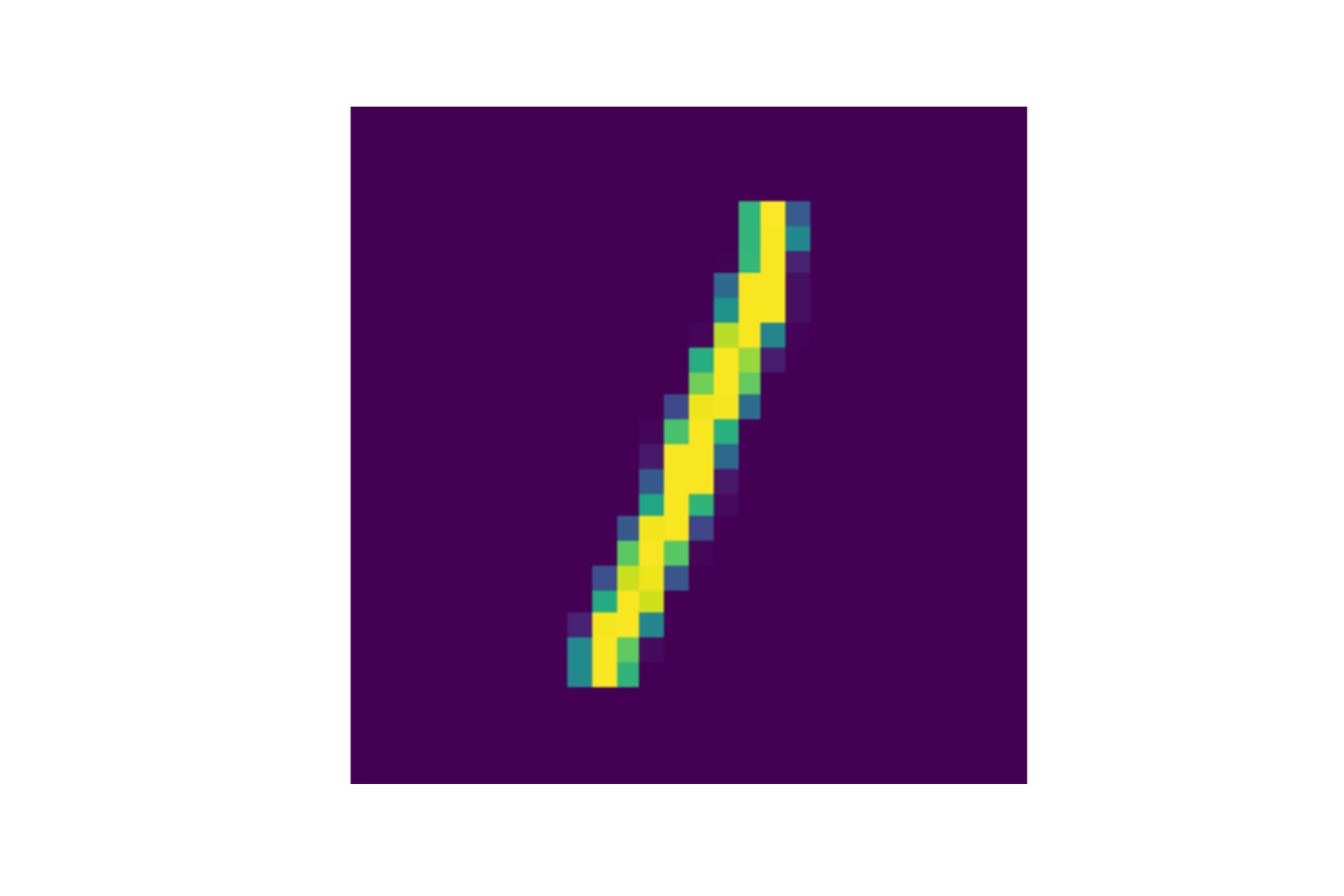}}

\subfloat[f-mnist35]{
    \includegraphics[width=0.25\textwidth]{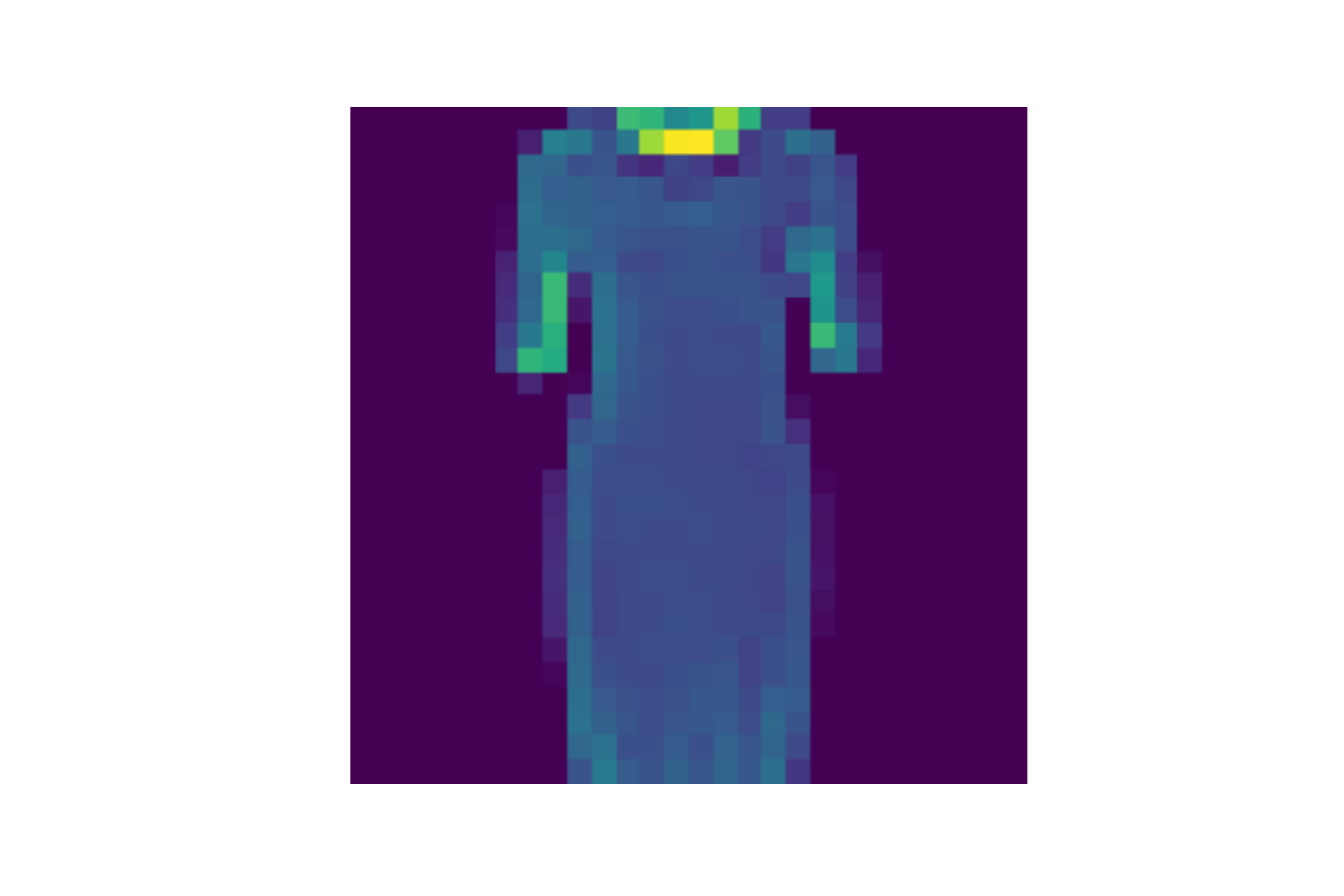}}
\subfloat[f-mnist35]{
    \includegraphics[width=0.25\textwidth]{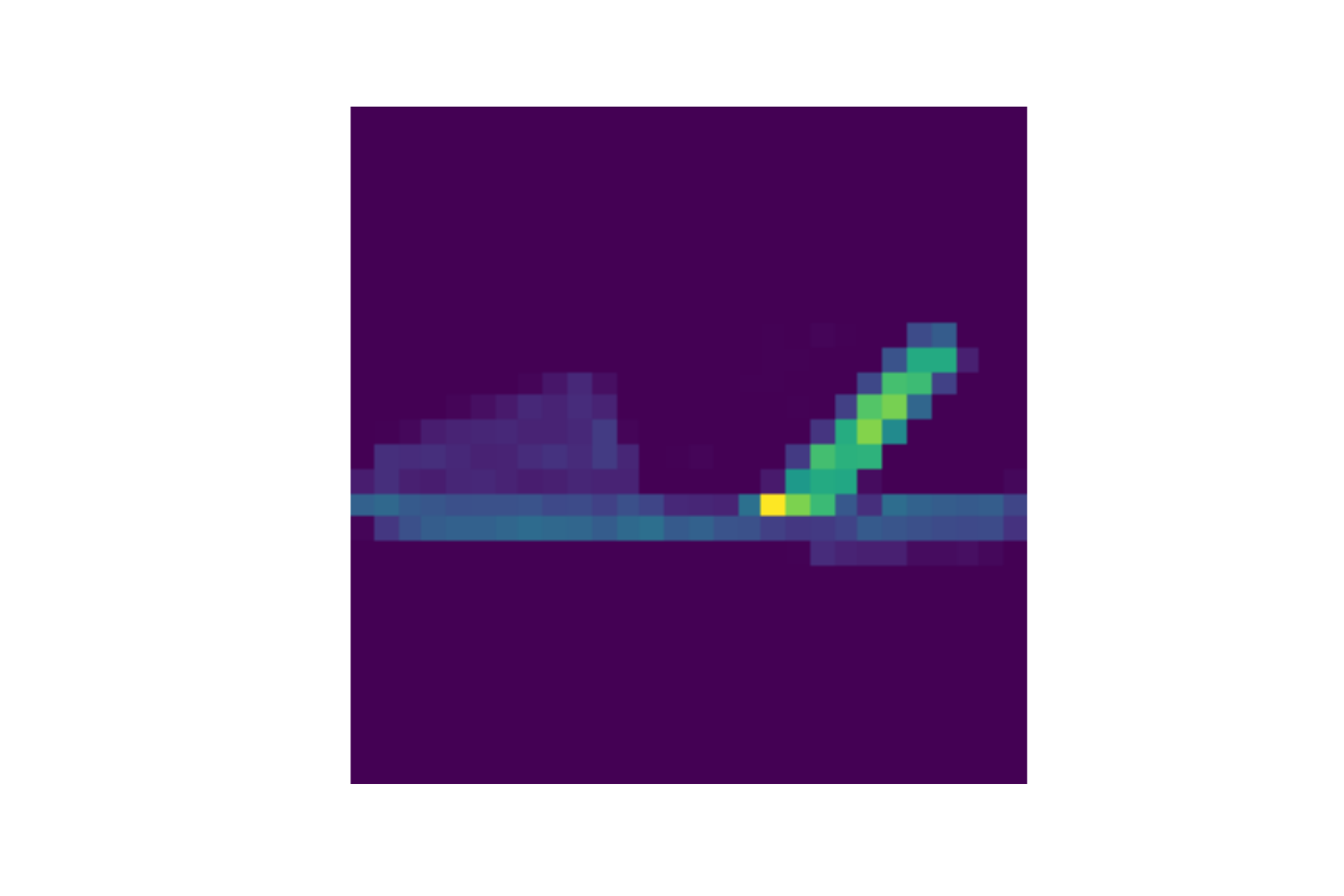}}
\subfloat[f-mnist35]{
    \includegraphics[width=0.25\textwidth]{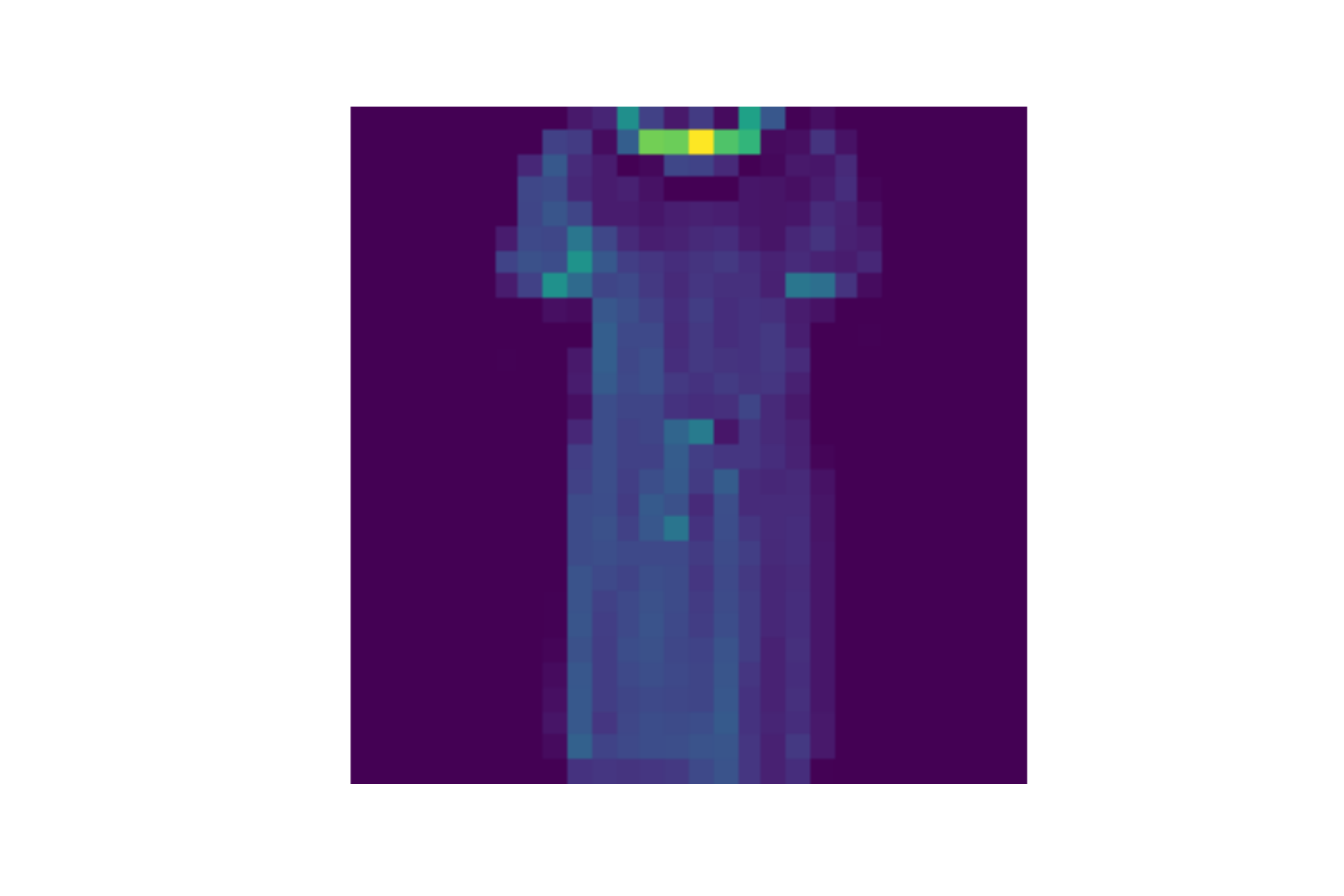}}
\subfloat[f-mnist35]{
    \includegraphics[width=0.25\textwidth]{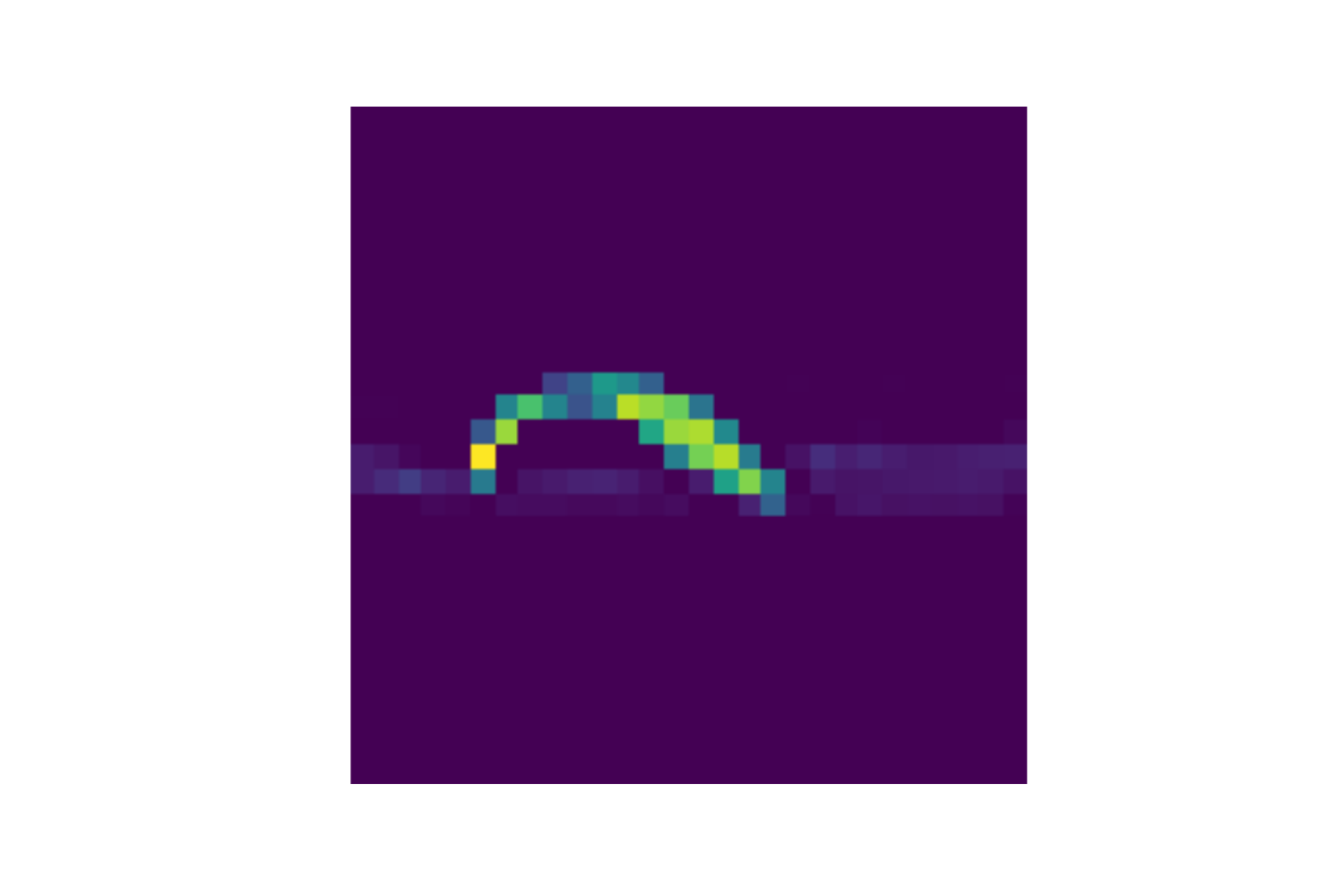}}

\subfloat[f-mnist06]{
    \includegraphics[width=0.25\textwidth]{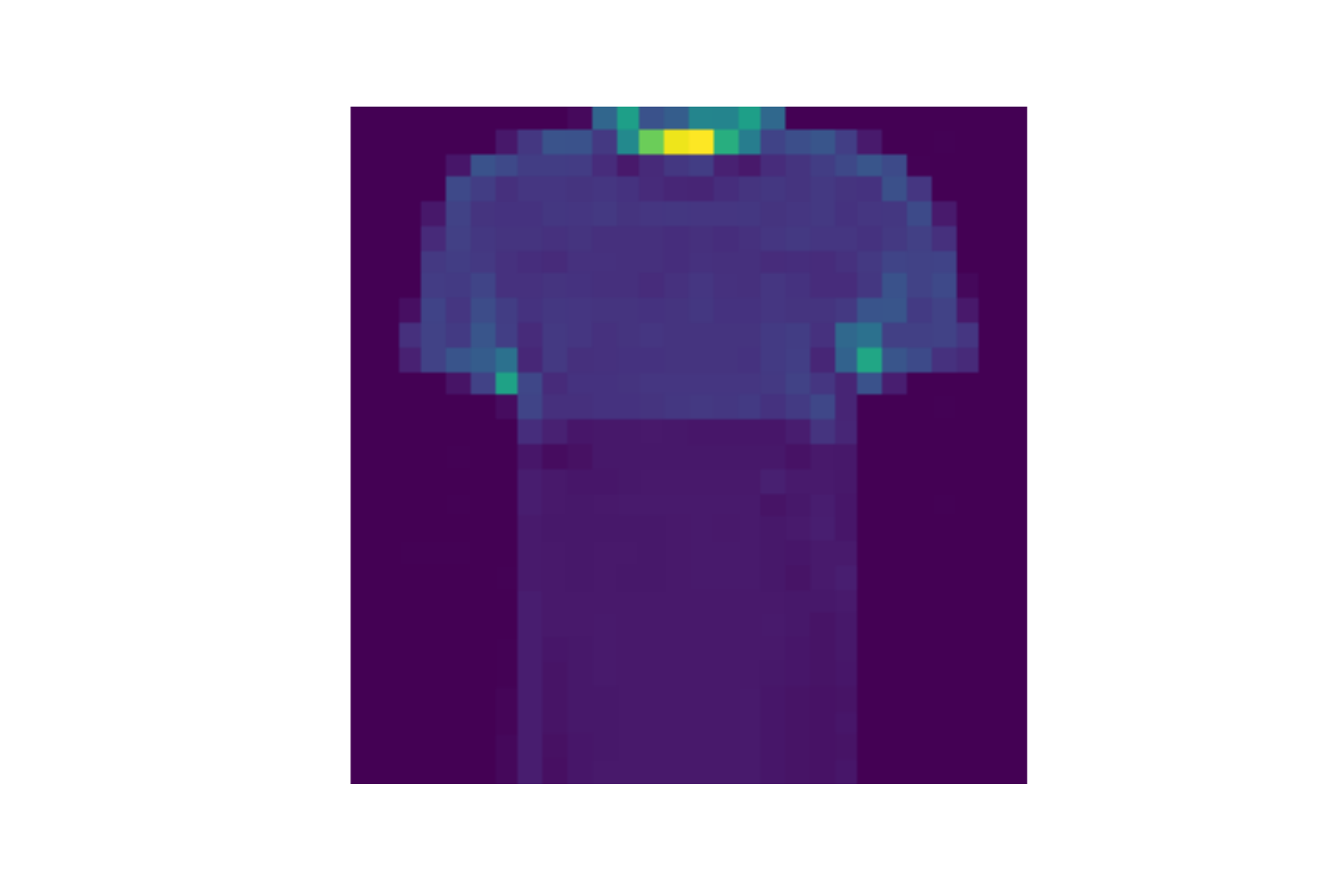}}
\subfloat[f-mnist06]{
    \includegraphics[width=0.25\textwidth]{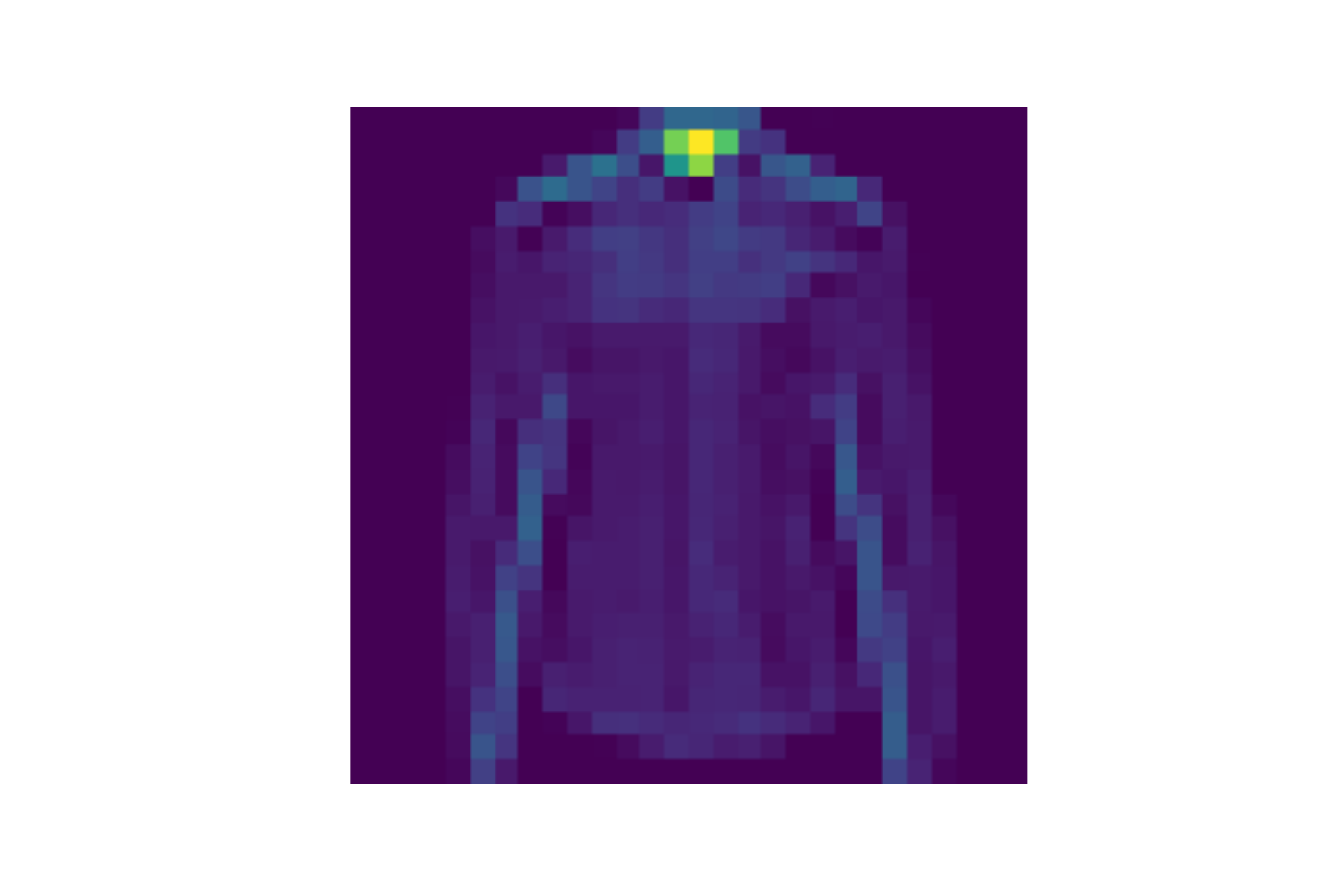}}
\subfloat[f-mnist06]{
    \includegraphics[width=0.25\textwidth]{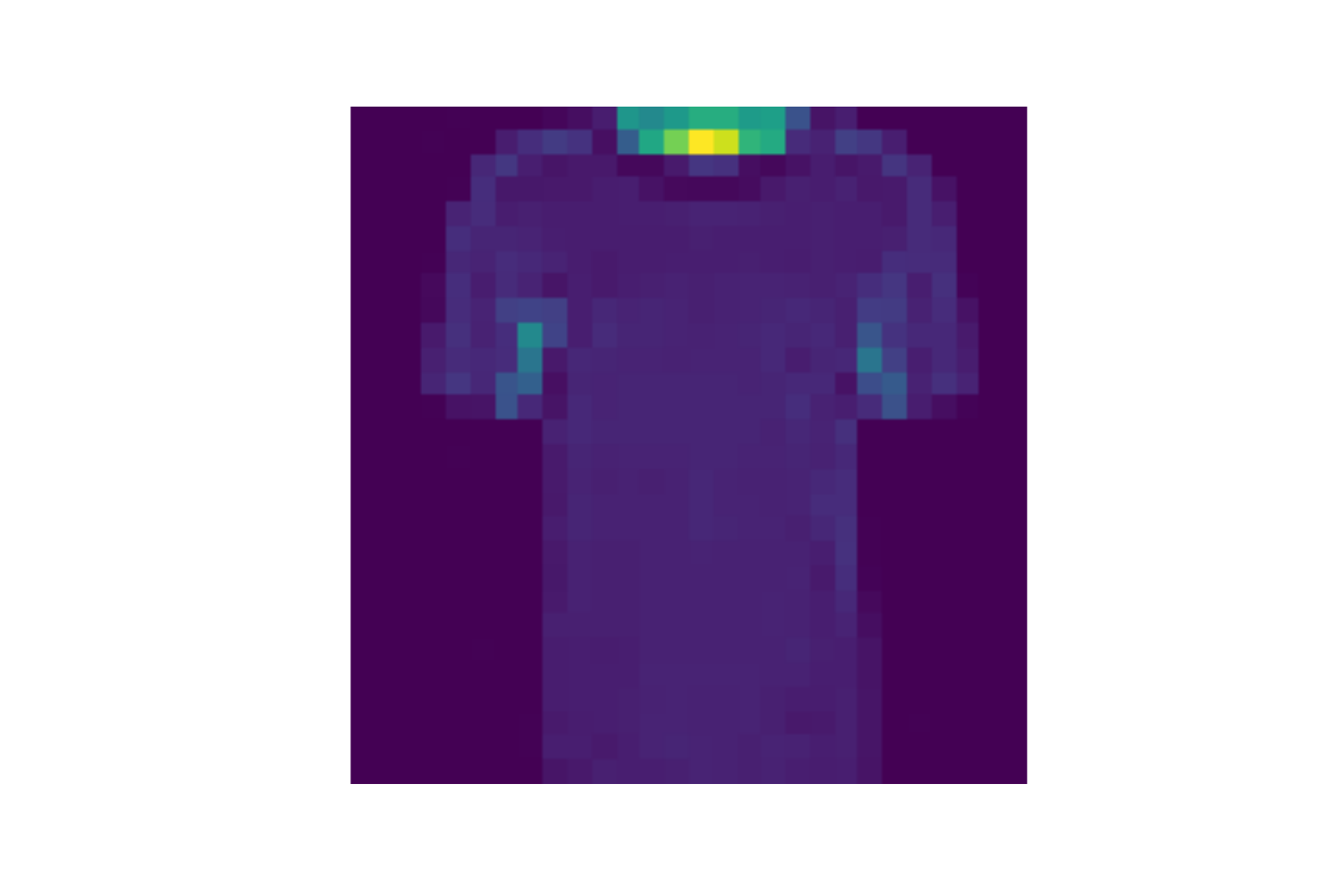}}
\subfloat[f-mnist06]{
    \includegraphics[width=0.25\textwidth]{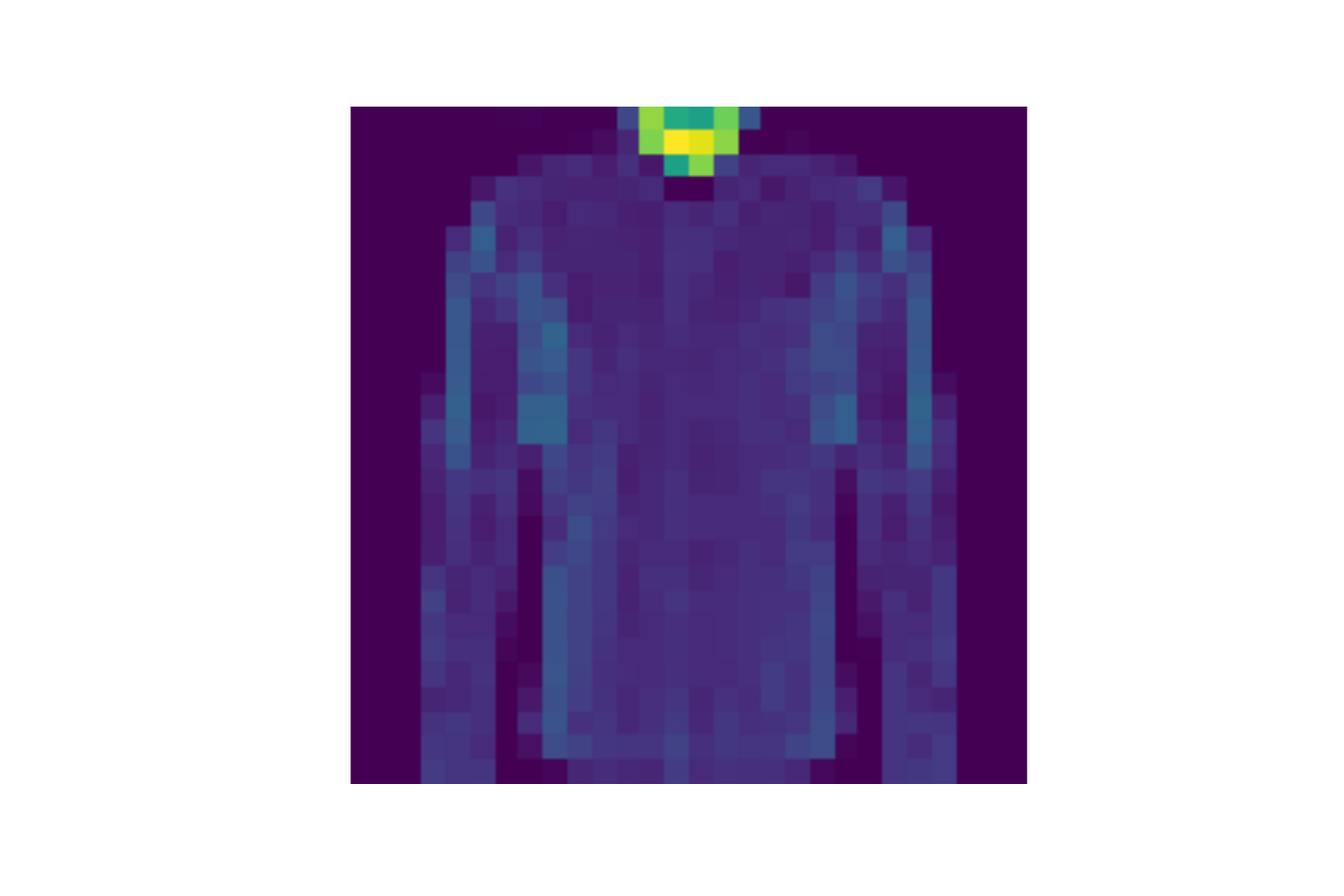}}

\caption{
    Examples of images removed by adversarial pruning (AP).
    The images removed are (a), (c), (e), (g), (i), (k)
    and the images to its right are the closest image with opposite labeled.
    To interpret the labels of these datasets, 
    mnist17 is the task of classifying one versus seven,
    f-mnist35 is Dress versus Sandal, and
    f-mnist06 is T-Shirt/top versus Shirt.
}
\label{fig:removed}
\end{figure*}  
\end{document}